\documentclass{article}

\PassOptionsToPackage{numbers, compress}{natbib}

\usepackage[final]{nips_2018}

\usepackage[utf8]{inputenc} \usepackage[T1]{fontenc}    \usepackage{booktabs}       \usepackage{amsfonts}       \usepackage{nicefrac}       \usepackage{microtype}      \usepackage{amsmath}

\usepackage{caption}
\usepackage{amssymb}
\usepackage{mathtools}
\usepackage{bigints}
\usepackage{algorithm}
\usepackage{algorithmic}
\usepackage{xcolor}
\usepackage[capitalize]{cleveref}
\usepackage{mathrsfs}
\usepackage{amsthm}

\usepackage{multibib}
\newcites{app}{Supplementary References}

\usepackage[disable,backgroundcolor = White,textwidth=\marginparwidth]{todonotes}
\newcommand{\todog}[2][]{\todo[color=orange!25,size=\small,#1]{G: #2}}

\newcommand{\todor}[2][]{\todo[color=blue!25,size=\small,#1]{R: #2}}
\setlength{\marginparwidth}{0.8in}

\newtheorem{prop}{Proposition}
\newtheorem{definition}{Definition}
\newtheorem{theorem}{Theorem}
\newtheorem{lem}{Lemma}
\newtheorem{rmk}{Remark}

\Crefname{assumption}{Assumption}{Assumptions}
\crefname{theorem}{Theorem}{Theorems}
\crefname{lem}{Lemma}{Lemmas}
\crefname{corollary}{Corollary}{Corollaries}
\crefname{claim}{Claim}{Claims}
\crefname{prop}{Proposition}{Propositions}
\crefname{equation}{Equation}{Equations}
\crefname{rmk}{Remark}{Remarks}

\newcommand{\real}{\mathbb{R}}
\newcommand{\eps}{\epsilon}
\newcommand{\Vol}{\text{Vol}}
\newcommand{\ball}[2]{B_{#1}(#2)}
\newcommand{\cM}{\mathcal{M}}

\newcommand{\fprec}{\text{Precision}^f}

\newcommand{\rotate}{\mathcal{R}}
\newcommand{\cV}{\mathcal{V}}

\newcommand{\td}{\text{d}}
\newcommand{\Proj}{\text{Proj}}

\newcommand{\Prob}{\mathbb{P}}
\newcommand{\norm}[1]{\left\lVert#1\right\rVert}
\newcommand{\Lim}[1]{\raisebox{0.5ex}{\scalebox{0.8}{$\displaystyle \lim_{#1}\;$}}}

\def\presubsection{0in}

\def\postsubsection{0in}

\setcounter{secnumdepth}{2} 
\title{Dimensionality Reduction has Quantifiable Imperfections: Two Geometric Bounds}

\author{
                      Kry Yik Chau Lui \\
  Borealis AI\\
  Canada\\
  \texttt{yikchau.y.lui@borealisai.com}
  \And
  Gavin Weiguang Ding\\
  Borealis AI\\
  Canada\\
  \texttt{gavin.ding@borealisai.com}
  \AND
  Ruitong Huang \\
  Borealis AI\\
  Canada\\
  \texttt{ruitong.huang@borealisai.com}
  \And
  Robert J. McCann\\
  Department of Mathematics\\
  University of Toronto\\
  Canada\\
  \texttt{mccann@math.toronto.edu}
          }

\begin{document}

\maketitle

\begin{abstract}
					In this paper, 
	we investigate Dimensionality reduction (DR) maps in an information retrieval setting from a quantitative topology point of view.
	In particular, 
	we show that no DR maps can achieve perfect precision and perfect recall simultaneously. 
	Thus a continuous DR map must have imperfect precision.
	We further prove an upper bound on the precision of Lipschitz continuous DR maps.
	While precision is a natural measure in an information retrieval setting, 
	it does not measure `how' wrong the retrieved data is.
	We therefore propose a new measure based on Wasserstein distance that comes with similar theoretical guarantee.
	A key technical step in our proofs is a particular optimization problem of the $L_2$-Wasserstein distance over a constrained set of distributions.
	We provide a complete solution to this optimization problem, 
	which can be of independent interest on the technical side.
\end{abstract}

\section{Introduction}
\label{sec:intro}

Dimensionality reduction (DR) serves as a core problem in machine learning tasks
including information compression, clustering, manifold learning, feature extraction, logits and other modules in a neural network and data visualization \citep{Hjaltason:2003:PEM:776753.776797, boutsidis2010random, venna2010information, lecun2015deep, mcqueen2016nearly}.
In many machine learning applications, 
the data manifold is reduced to a dimension lower than its intrinsic dimension (e.g. for data visualizations, output dimension is reduced to 2 or 3; for classifications, it is the number of classes). 
In such cases, it is not possible to have a continuous bijective DR map (i.e. classic algebraic topology result on invariance of dimension \citep{muger2015remark}). 
With different motivations, 
many nonlinear DR maps have been proposed in the literature, 
such as Isomap, kernel PCA, and t-SNE, 
just to name a few \citep{scholkopf1997kernel,tenenbaum2000global,maaten2008visualizing}.
A common way to compare the performances of different DR maps is to use a down stream supervised learning task as the ground truth performance measure.
However, when such down stream task is unavailable, 
e.g.  in an unsupervised learning setting as above, 
one would have to design a performance measure based on the particular context.
In this paper, we focus on the information retrieval setting, which falls into this case. 
An information retrieval system extracts the features $f(x)$ from the raw data $x$ for future queries. 
When a new query $y_0 = f(x_0)$ is submitted, the system returns the most relevant data with similar features, i.e. all the $x$ such that $f(x)$ is close to $y_0$. 
For computational efficiency and storage, $f$ is usually a DR map, retaining only the most informative features. 
Assume that the ground truth relevant data of $x_0$ is defined as a neighbourhood $U$ of $x$ that is a ball with radius $r_U$ centered at $x$ 
\footnote{The value of $r_U$ is unknown, and it depends on the user and the input data $x_0$. However, we can assume $r_U$ is small compared to the input domain size. For example, the number of relevant items to a particular user is much fewer than the number of total items. }, 
and the system retrieves the data based on relevance in the feature space, 
i.e. 
the inverse image, $f^{-1}(V)$, of a retrieval neighbourhood $V \ni f(x_0)$. 
Here $V$ is the ball centered at $y_0 = f(x_0)$ with radius $r_V$ that is determined by the system. 
It is natural to measure the system's performance based on the discrepancy between $U$ and $f^{-1}(V)$. 
Many empirical measures of this discrepancy have been proposed in the literature,
among which precision and recall are arguably the most popular ones \citep{schreck2010techniques,martins2014visual,lespinats2011checkviz, venna2010information}.
However, theoretical understandings of these measures are still very limited.

In this paper, we start with analyzing the theoretical properties of precision and recall in the information retrieval setting. 
Naively computing precision and recall in the discrete settings gives undesirable properties,
e.g. 
precision always equals recall when computed by using $k$ nearest neighbors.
How to measure them properly is unclear in the literature (\cref{sec:simulation}).
On the other hand, numerous experiments have suggested that there exists a tradeoff between the two when dimensionality reduction happens \citep{venna2010information}, yet this tradeoff still remains a conceptual mystery in theory. 
To theoretically understand this tradeoff, 
we look for continuous analogues of precision and recall, and exploit the geometric and function analytic tools that study dimensionality reduction maps \citep{guth2012waist}. 
The first question we ask is what property a DR map should have, 
so that the information retrieval system can attain zero false positive error (or false negative error) when the relevant neighbourhood $U$ and the retrieved neighbourhood $V$ are properly selected.
Our analyses show the equivalence between the achievability of perfect recall (i.e. zero false negative) and the continuity of the DR map.
We further prove that no DR map can achieve both perfect precision and perfect recall simultaneously.
Although it may seem intuitive, 
to our best knowledge, 
this is the first theoretical guarantee in the literature of the necessity of the tradeoff between precision and recall in a dimension reduction setting.

Our main results are developed for the class of (Lipschitz) continuous DR maps.
The first main result of this paper is an upper bound for the precision of a continuous DR map. 
We show that given a continuous DR map, 
its precision decays exponentially fast with respect to the number of (intrinsic) dimensions reduced. 
To our best knowledge, 
this is the first theoretical result in the literature for the decay rate of the precision of a dimensionality reduction map. 
The second main result is an alternative measure for the performance of a continuous DR map, called $W_2$ measure, based on $L_2$-Wasserstein distance. 
This new measure is more desirable as it can also detect the distance distortion between $U$ and $f^{-1}(V)$. 
Moreover, we show that our measure also enjoys a theoretical lower bound for continuous DR maps.
Several other distance-based measures have been proposed in the literature \citep{schreck2010techniques,martins2014visual,lespinats2011checkviz, venna2010information}, yet all are proposed heuristically with meagre theoretical understanding. 
Simulation results suggest optimizing the Wasserstein measure lower bound corresponds to optimizing a weighted f-1 score (i.e. f-$\beta$ score).
Thus we can optimize precision and recall without dealing with their computational difficulties in the discrete settings.

Finally, 
let us make some comments on the technical parts of the paper. 
The first key step is the Waist Inequality from the field of quantitative algebraic topology. 
At a high level, 
we need to analyse $f^{-1}(V)$, 
inverse image of an open ball for an arbitrary continuous map $f$. 
The waist inequality guarantees the existence of a `large' fiber, 
which allows us to analyse $f^{-1}(V)$ and prove our first main result. 
We further show that in a common setting, 
a significant proportion of fibers are actually `large'.
For our second main result, 
a key step in the proof is a complete solution to the following iterated optimization problem: 
\[
 \inf_{W:\,\Vol_n(W) = M} W_{2}(\mathbb{P}_{B_r}, \mathbb{P}_{W}) 
 = 
 \inf_{W:\,\Vol_n(W) = M} \inf_{\gamma \in \Gamma (\mathbb{P}_{B_r} , \mathbb{P}_{W})} \mathbb{E}_{(a, b) \sim \gamma} [ \| a - b \|^{2}_{2} ]^{1/2} ,
\]
where $B_r$ is a ball with radius $r$, $\Prob_{B_r}$ ($\Prob_W$, respectively) is a uniform distribution over $B_r$ ($W$, respectively), 
and $W_2$ is the $L_2$-Wasserstein distance. 
Unlike a typical optimal transport problem where the transport function between source and target distributions is optimized, 
in the above problem the source distribution is also being optimized at the outer level. 
This becomes a difficult constrained iterated optimization problem. 
To address it, 
we borrow tools from optimal partial transport theory \citep{caffarelli2010free, figalli2010optimal}. 
Our proof techniques leverage the uniqueness of the solution to the optimal partial transport problem and the rotational symmetry of $B_r$ to deduce $W$. 

\subsection{Notations}
\label{subsec:notations}
We collect our notations in this section. Let $m$ be the embedding dimension,
$\mathcal{M}$ be an $n$ dimensional data manifold\footnote{There is empirical and theoretical evidence that data distribution lies on low dimensional submanifold in the ambient space \citep{narayanan2010sample}. }~embedded in $\mathbb{R}^N$,
where $N$ is the ambient dimension.
$\mathcal{M} $ is typically modelled as a Riemannian manifold, 
so it is a metric space with a volume form. 
Let $ m < n < N$ and $f: \cM \subset \mathbb{R}^N \to \mathbb{R}^m$ be a DR map.
The pair ($x$, $y$) will be the points of interest,
where $y = f(x)$.
The inverse image of $y$ under the map $f$ is called fiber,
denoted $f^{-1}(y)$.
We say $f$ is continuous at point $x$ iff $\text{osc}^{f}(x) = 0$, 
where $\text{osc}^{f}(x) = \inf_{U; U \text{open}}\{\text{diam}( f(U) ); x \in U\}$ is the oscillation for $f$ at $x \in \mathcal{M}$.
We say $f$ is \emph{one-to-one} or \emph{injective} when its fiber,
$f^{-1}(y)$ is the singleton set $\{x\}$.

We let $A \oplus \eps := \{ x \in \mathbb{R}^N | \text{d}( x, A ) < \eps \} $ denote the $\eps$-neighborhood of the nonempty set $A$. 
In $\mathbb{R}^N$, 
we note the $\eps$-neighborhood of the nonempty set $A$ is the Minkowski sum of $A$ with $B_{\epsilon}^{N}(x)$, 
where the Minkowski sum between two sets $A$ and $B$ is:
$A \oplus B = \{ a + b | a \in A, b \in B \} $.  
For example,
an $n$ dimension open ball with radius $r$,
centered at a point $x$ can be expressed as:
$B_{r}^{n}(x)
= x \oplus B_{r}^{n}(0)
= x \oplus r $,
where the last expression is used to simplify notation.
If not specified, the dimension of the ball is $n$. 
We also use $B_r$ to denote the ball with radius $r$ when its center is irrelevant. 
Similarly, $S_r^n$ denotes $n$-dimensional sphere in $\real^{n+1}$ with radius $r$.
Let $\Vol_{n}$ denote $n$-dimensional volume.\footnote{ \label{fnt:n-volume}
	Let $A$ be a set.
	In Euclidean space,
	$\Vol_{n}(A) = \mathcal{L}^n(A) $ is the Lebesgue measure.
	For a general n-rectifiable set,
	$\Vol_{n}(A) = \mathcal{H}^n(A) $ is the Hausdorff measure.
	When $A$ is not rectifiable,
	$\Vol_{n}(A) = \mathcal{M}^n_{*}(A) $ is the lower Minkowski content. } 
When the intrinsic dimension of $A$ is greater than $n$, we set $\Vol_n(A) = \infty$.
Through the rest of the paper, 
we use $U$ to denote $B_{r_U}(x)$ a ball with radius $r_U$ centered at $x$ and $V = B_{r_V}(y)$ a ball with radius $r_V$ centered at $y$.
These are metric balls in a metric space. 
For example, 
they are geodesic balls in a Riemannian manifold, 
whenever they are well defined.
In Euclidean spaces, 
$U$ is a Euclidean ball with $L_2$ norm. 
By $T_{\#}(\mu) = \nu$, 
we mean a map $T$ pushes forward a measure $\mu$ to $\nu$, 
i.e.
$ \nu(B) = \mu (T^{-1}(B)) $
for any Borel set $B $.
We say a measure $\mu$ is dominated by another measure $\nu$, if for every measurable set $A$, $\mu(A) \le \nu(A)$. \section{Precision and recall}
\label{sec:mainresults}
We present the definitions of precision and recall in a continuous setting in this section. We then prove the equivalence between perfect recall and the continuity, 
followed by a theorem on the necessary tradeoff between the perfect recall and the perfect precision for a dimension reduction information retrieval system.
The main result of this section is a theoretical upper bound for the precision of a continuous DR map.

\subsection{Precision and recall}
\label{subsec:PreAndRec}
While precision and recall are commonly defined based on finite counts in practice, 
when analysing DR maps between spaces,
it is natural to extend their definitions in a continuous setting as follows. 
\begin{definition}[Precision and Recall]
	\label{def:set_precision_recall}
	Let $f$ be a continuous DR map. 
	Fix $(x, y = f(x))$, $r_U > 0$ and $r_V > 0$, let $U = \ball{r_U}{x} \subset \real^N$ and $V =B_{r_V}^m(y) \subset \real^m$ be the balls with radius $r_U$ and $r_V$ respectively. 
	The \textbf{precision} and \textbf{recall} of $f$ at $U$ and $V$ are defined as:
	\[
	\text{Precision}^{f} (U, V) = \frac{\Vol_{n}(f^{-1}(V) \cap U)} {\Vol_{n}(f^{-1}(V))}; \qquad 
	\text{Recall}^{f} (U, V) = \frac{\Vol_{n}(f^{-1}(V) \cap U)} {\Vol_{n}(U)}.
	\] 
	We say $f$ achieves \textbf{perfect precision} at $x$ if for every $r_U$, there exists $r_V$ such that $Precision^{f} (U, V) = 1$. Also, $f$ achieves \textbf{perfect recall} at $x$ if for every $r_V$, there exists $r_U$ such that $Recall^{f} (U, V) = 1$.
	Finally, we say $f$ achieves \textbf{perfect precision} (\textbf{perfect recall}, respectively) in an open set $W$, 
	if $f$ achieves perfect precision (perfect recall, respectively) at $w$ for any $w\in W$. 
	\end{definition}

\begin{figure}
\centering
\includegraphics[width=0.7\linewidth]{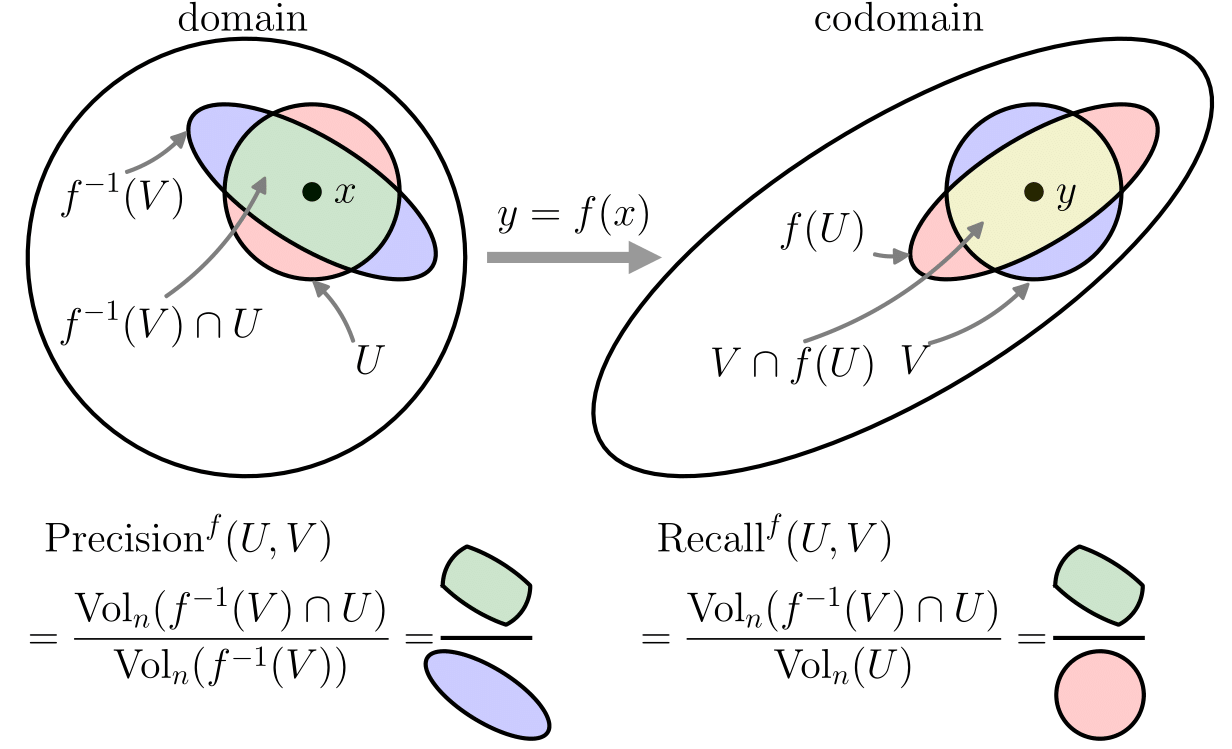}
\caption{Illustration of precision and recall.}
\label{fig:prec_rec}
\end{figure}

Note that perfect precision requires $f^{-1}(V) \subset U$ except a measure zero set. Similarly, perfect recall requires $ U \subset f^{-1}(V)$ except a measure zero set.
Figure \ref{fig:prec_rec} illustrates the precision and recall defined above.
To measure the performance of the information retrieval system, we would like to understand how different $f^{-1}(V)$ is from the ideal response $U = \ball{r_U}{x}$. Precision and recall provides two meaningful measures for this difference based on their volumes.
Note that $f$ achieves perfect precision at $x$ implies that no matter how small the relevant radius $r_U$ is for the image, the system would be able to achieve 0 false positive by picking proper $r_V$. Similarly perfect recall at $x$ implies no matter how small $r_V$ is, the system would not miss the most relevant images around $x$. 

In fact, 
the definitions of perfect precision and perfect recall are closely related to continuity and injectivity of a function $f$. 
Here we only present an informal statement. 
Rigorous statements are given in the Appendix \ref{appsec:prereconecon}. 
\begin{prop}
	Perfect recall is equivalent to continuity. If $f$ is continuous, then perfect precision is equivalent to injectivity.
\label{prp:equivalence_info_topology}
\end{prop}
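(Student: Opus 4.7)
My plan is to prove the two equivalences by four separate implications, working in the Riemannian submanifold setting where every metric ball has positive $n$-volume. Throughout, a recall or precision equal to $1$ is rephrased as the corresponding set inclusion holding up to $n$-volume zero.

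For perfect recall $\Leftrightarrow$ continuity, the easy direction is continuity $\Rightarrow$ perfect recall: given $r_V>0$, continuity of $f$ at $x$ makes $f^{-1}(V)$ a neighborhood of $x$, so some $U=\ball{r_U}{x}$ fits inside and $\text{Recall}^f(U,V)=1$. For the reverse, I would fix $\eps>0$, set $r_V=\eps$, and apply perfect recall to obtain $r_U$ with $U\setminus f^{-1}(V)$ null. Since the paper's notion of continuity amounts to the vanishing of $\text{osc}^f(x)$ (or an essential variant when null-set values are discounted), letting $\eps\to 0$ forces the oscillation of $f$ at $x$ to be zero.

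For perfect precision $\Leftrightarrow$ injectivity assuming $f$ continuous, the injective direction is an open-map style argument: if $f^{-1}(y)=\{x\}$ and $\cM$ is compact, then $f(\cM\setminus U)$ is closed and misses $y$, so $d(y, f(\cM\setminus U))>0$; any $r_V$ below this distance yields $f^{-1}(V)\subset U$ and precision $1$. For the converse, if $f(x')=f(x)$ with $x'\neq x$, I would take $r_U$ small enough that $x'\notin\overline U$ and use continuity of $f$ at $x'$: for every $r_V>0$ there is an open neighborhood $W$ of $x'$, disjoint from $U$, with $f(W)\subset V$. Then $f^{-1}(V)\setminus U\supset W$ has strictly positive $n$-volume, so $\text{Precision}^f(U,V)<1$ for every $r_V$ and perfect precision fails.

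I expect the main obstacle to be the measure-theoretic bookkeeping in the ``up to null set'' clauses rather than the topology itself. In the ``perfect recall $\Rightarrow$ continuity'' direction, a null exception inside $U\setminus f^{-1}(V)$ does not by itself pin down the value of $f$ at the individual point $x$, so one must either assume $f$ has no removable discontinuities or interpret continuity in an essential sense; the rigorous statement in the appendix presumably makes this precise. A secondary, routine issue is the need for compactness of $\cM$ (or properness of $f$) in the injective $\Rightarrow$ perfect precision direction, so that $f(\cM\setminus U)$ is closed and separated from $y$ by a positive distance.
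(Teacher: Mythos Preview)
Your proposal is correct in outline, and for the recall--continuity equivalence it matches the paper's argument almost exactly: the paper also proves continuity $\Rightarrow$ perfect recall by noting $f^{-1}(V)$ is open, and handles the reverse direction by introducing an \emph{essential continuity} notion to absorb the null-set exceptions you anticipate, then proving (\cref{lem:obs_continuity_neighbor}) that essential continuity on an open set coincides with genuine continuity there.

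For the precision--injectivity half, your route and the paper's diverge. You prove the biconditional directly: injective $\Rightarrow$ perfect precision via a compactness/separation argument, and non-injective $\Rightarrow$ non-perfect-precision by using continuity at the second preimage $x'$ to produce a positive-volume set $W\subset f^{-1}(V)\setminus U$. The paper instead works with \emph{essential} injectivity ($\Vol_{n-m}(f^{-1}(y))=0$) and, for the contrapositive of your second implication, applies perfect precision at \emph{both} preimages $z_1,z_2$ to force $\Vol_n(f^{-1}(B_r(y)))\ge 2\Vol_n(f^{-1}(B_r(y)))$, hence zero, contradicting openness of $f^{-1}(B_r(y))$. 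More significantly, the paper does not prove ``(essentially) injective $\Rightarrow$ perfect precision'' in this section at all; rather it invokes the Waist Inequality to show a continuous DR map is \emph{never} essentially injective, so both sides of the equivalence simply fail. Your argument is more elementary (no Waist Inequality, no doubling trick) and yields the equivalence for any continuous $f$, not just DR maps, at the cost of needing compactness or properness for one direction---a hypothesis the paper's open-map lemma (\cref{lem:open_map}) also implicitly uses elsewhere.
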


The next result shows that no DR map $f$, 
continuous or not, 
can achieve perfect recall and perfect precision simultaneously - a widely observed but unproved phenomenon in practice.   
In other words, 
it rigorously justifies the intuition that perfectly maintaining the local neighbourhood structure is impossible for a DR map.
\begin{theorem}[Precision and Recall Tradeoff]
	\label{thm:strong_precision_recall_tradeoff}
	Let $n > m$, $\mathcal{M} \subset \mathbb{R}^N $ be a Riemannian $n$-dimensional submanifold. 
	Then for any (dimensionality reduction) map $f: \mathcal{M} \to \mathbb{R}^m$ and any open set $W \subset \cM$,
	$f$ cannot achieve both perfect precision and perfect recall on $W$.  
\end{theorem}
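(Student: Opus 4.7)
The strategy is to reduce the tradeoff claim to the classical invariance of domain theorem via \cref{prp:equivalence_info_topology}. I would argue by contradiction: suppose $f$ achieves both perfect precision and perfect recall on some open $W \subset \cM$. The first step is to invoke \cref{prp:equivalence_info_topology} at every point of $W$. Perfect recall on $W$ forces $f$ to be continuous on $W$, and then perfect precision combined with continuity forces $f$ to be injective on $W$. Thus $f|_W : W \to \real^m$ is a continuous injection, and the question becomes a purely topological one about the nonexistence of continuous injections from an $n$-manifold into $\real^m$ when $n>m$.

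The second step is to localize. Since $\cM$ is an $n$-dimensional Riemannian submanifold and $W$ is nonempty and open, pick $x_0 \in W$ and a smaller open neighbourhood $W' \subset W$ that is the image of some homeomorphism $\phi : U' \to W'$ with $U' \subset \real^n$ open (e.g.\ a coordinate chart of $\cM$ restricted to $W$). Then $g := f|_{W'} \circ \phi : U' \to \real^m$ is still continuous and injective. Finally, compose with the affine embedding $\iota : \real^m \hookrightarrow \real^n$ sending $(y_1,\dots,y_m) \mapsto (y_1,\dots,y_m,0,\dots,0)$. The map $\iota \circ g : U' \to \real^n$ is a continuous injection between subsets of $\real^n$, so by invariance of domain its image is open in $\real^n$. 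But that image lies in the $m$-dimensional affine subspace $\iota(\real^m)$, which has empty interior in $\real^n$ because $m < n$, a contradiction.

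The main obstacle is not the topological step, which is standard once the setup is right, but rather making sure the formal versions of \cref{prp:equivalence_info_topology} (deferred to the appendix) deliver exactly pointwise continuity on $W$ and pointwise injectivity on $W$ that I need here. In particular, the definitions of precision and recall allow a measure-zero discrepancy between $f^{-1}(V)$ and $U$, so one must verify that perfect precision at every $x \in W$ really rules out two distinct points of $W$ having the same image (the inverse-image conditions, combined with nested $r_U \to 0$, should do this). Once that bridge is secure, the invariance-of-domain argument closes the proof in a few lines, and notably avoids any use of the waist inequality or of Lipschitz regularity; the result holds for arbitrary (even discontinuous) $f$ precisely because perfect recall already supplies the continuity.
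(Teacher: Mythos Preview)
Your plan is correct and takes a genuinely different route from the paper's own argument. Both proofs share the first step: perfect recall on $W$ forces (essential) continuity, and hence continuity, on $W$ via the rigorous form of \cref{prp:equivalence_info_topology}. They diverge after that. The paper shows (as the first half of the appendix proof of \cref{prop:precisiononetoone}) that a continuous map $f$ on $W$ cannot be \emph{essentially} injective, by restricting to a small ball and invoking the Waist Inequality (\cref{thm:waistinequality}) to produce a fiber of positive $(n-m)$-volume; separately it shows that perfect precision would force essential injectivity, and these two facts contradict one another. Your route instead extracts genuine pointwise injectivity from continuity plus perfect precision (the two-point argument you sketch is exactly the paper's second half of the proof of \cref{prop:precisiononetoone}, so that bridge is secure), and then closes with invariance of domain applied to $\iota\circ g:U'\to\real^n$.

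What each approach buys: your argument is strictly more elementary for this particular theorem---it needs no Waist Inequality, no differentiability or Lipschitz hypotheses, and no measure-theoretic notion of essential injectivity; Brouwer's invariance of domain is the only nontrivial input. The paper's route, by contrast, already sets up the quantitative machinery (large fibers guaranteed by the waist inequality) that is reused immediately afterward to prove the precision upper bound (\cref{thm:precision_bound_ball}) and the Wasserstein lower bound (\cref{thm:wass_lowerbound}), so from the paper's perspective the qualitative tradeoff falls out of tools it needed anyway. Your proof is cleaner if one only cares about \cref{thm:strong_precision_recall_tradeoff}; the paper's is more economical in the context of the surrounding results.
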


\subsection{Upper bound for the precision of a continuous DR map}
\label{subsec:upperboundprecision}
In this section, 
we provide a quantitative analysis for the imperfection of $f$.
In particular, we prove an upper bound for the precision of a continuous DR map $f$ (thus $f$ achieves perfect recall).
For simplicity, 
we assume the domain of $f$ is an $n$ -ball with radius $R$ embedded in $\real^N$, 
denoted by $B_{R}^{n}$. 
Our main tool is the Waist Inequality \citep{rayon2003isoperimetry, akopyan2017tight} in quantitative topology. See \cref{sec:waistinequality} for an exact statement. 

Intuitively, 
the Waist Inequality guarantees the existence of $y\in \real^m$ such that $f^{-1}(y)$ is a `large' fiber. 
If $f$ is also $L$-Lipschitz, 
then for $p$ in a small neighbourhood $V$ of $y$, $f^{-1}(p)$ is also a `large' fiber, 
thus $f^{-1}(V)$ has a positive volume in $\cM$. 
Exploiting the lower bound for $\Vol_{n}\left(f^{-1}(V)\right)$ leads to our upper bound in \cref{thm:precision_bound_ball} on the precision of $f$, $\fprec(U,V)$. 
A rigorous proof is given in the appendix \cref{sec:proofofupperbound}.
\begin{theorem}[Precision Upper Bound, Worst Case]
\label{thm:precision_bound_ball}
	Assume $n > m$, and that $f: B_{R}^{n} \to \mathbb{R}^m$ is a continuous map with Lipschitz constant $L$. 
	Let $r_U $ and $ r_V >0$ be fixed. 
	Denote 
	\begin{equation}
	\label{eq:dnm}
		D(n, m) 
		= 
		\frac{\Gamma(\frac{n-m}{2} + 1) \Gamma(\frac{m}{2} + 1) }{ \Gamma(\frac{n}{2} + 1) }\,.
	\end{equation}
Then there exists $y \in \real^m$ such that 
for any $x\in f^{-1}(y)$, we have: 
	\begin{equation}
	\label{eq:mainresult}
		Precision^{f}(U, V) \leq D(n, m)\,\left(\frac{r_U}{R}\right)^{n-m}\,\frac{ r_U^{^{m}} }{p^{m}(r_V/L)}
	\end{equation}
where $p^{m}(r)$ is $r^{m}\left(1 + o(1)\right)$,
i.e. 
$ \displaystyle{ \lim_{r \to 0 } \frac{p^{m}(r)}{r^{m}} = 1 } $. 
\end{theorem}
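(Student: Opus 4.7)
The plan is to combine three ingredients: (i) the Waist Inequality for the ball, which guarantees a fiber whose tubular neighbourhoods have a large $n$-volume, (ii) the $L$-Lipschitz assumption, which converts an $(r_V/L)$-neighbourhood of a fiber into a subset of $f^{-1}(V)$, and (iii) the trivial bound $\Vol_n(f^{-1}(V)\cap U)\le \Vol_n(U)$ on the numerator of the precision. The identity $D(n,m)=\omega_n/(\omega_m\omega_{n-m})$, with $\omega_k=\pi^{k/2}/\Gamma(k/2+1)$ the volume of the unit $k$-ball, tells us in advance that the bound should emerge from matching one ball of radius $r_U$ (numerator) against an $(n-m)$-disk of radius $R$ fattened by $r_V/L$ in the remaining $m$ directions (denominator).

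First I would invoke the ball version of the Waist Inequality (Rayón--Vershynin / Akopyan) for the continuous map $f:B_R^n\to\real^m$: there exists $y\in\real^m$ such that, for every $t>0$,
\[
\Vol_n\!\left(f^{-1}(y)\oplus t\right)\;\ge\;\Vol_n\!\left(B_R^{n-m}\oplus B_t^m\right),
\]
where $B_R^{n-m}$ is viewed as a flat $(n-m)$-disk embedded in $\real^n$. The right-hand side is the volume of a flat tube, and equals $\omega_{n-m}R^{n-m}\cdot\omega_m t^m$ up to boundary end-cap corrections that vanish to leading order as $t\to 0$; I would absorb those corrections into the function $p^m(t)$ so that the waist bound reads
\[
\Vol_n\!\left(f^{-1}(y)\oplus t\right)\;\ge\;\omega_{n-m}\,R^{n-m}\,\omega_m\,p^m(t)/\omega_m,
\]
with $p^m(t)/t^m\to 1$ as $t\to 0$, matching the function defined in the statement.

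Next I would pick this $y$ and any $x\in f^{-1}(y)$, set $V=B_{r_V}^m(y)$, and use the Lipschitz hypothesis: if $d(z,f^{-1}(y))<r_V/L$ then there is $x'\in f^{-1}(y)$ with $d(z,x')<r_V/L$, so $\|f(z)-y\|=\|f(z)-f(x')\|<r_V$, i.e.\ $z\in f^{-1}(V)$. Therefore
\[
f^{-1}(y)\oplus (r_V/L)\;\subseteq\;f^{-1}(V),
\]
and combining with the waist bound at $t=r_V/L$ yields
\[
\Vol_n\!\left(f^{-1}(V)\right)\;\ge\;\omega_{n-m}\,\omega_m\,R^{n-m}\,p^m(r_V/L).
\]
For the numerator, the trivial inclusion $f^{-1}(V)\cap U\subseteq U$ gives $\Vol_n(f^{-1}(V)\cap U)\le\omega_n\,r_U^n$. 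Dividing, using $\omega_n/(\omega_m\omega_{n-m})=D(n,m)$, and splitting $r_U^n=r_U^{n-m}\cdot r_U^m$ reproduces exactly \eqref{eq:mainresult}.

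The main obstacle is step one: pinning down the precise quantitative form of the Waist Inequality for the ball and identifying the correct tube function $p^m$. The qualitative statement (existence of a large fiber) is classical, but the specific quantitative comparison with the flat $(n-m)$-disk tube, together with a clean asymptotic description of the end-cap correction so that $p^m(r)/r^m\to 1$, is where I would have to be careful. Everything else (Lipschitz pullback, crude numerator bound, and bookkeeping of gamma-function constants) is routine once that inequality is available.
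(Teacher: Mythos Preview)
Your proposal is correct and follows essentially the same three-step argument as the paper: apply the Waist Inequality of Akopyan (the paper's \cref{thm:waistinequality}) to find $y$ with $\Vol_n(f^{-1}(y)\oplus\eps)\ge \omega_{n-m}R^{n-m}\omega_m\,p^m(\eps)$, use the Lipschitz condition to deduce $f^{-1}(y)\oplus(r_V/L)\subset f^{-1}(V)$ (this is the paper's \cref{prop:lowerboundballvol}), and bound the numerator trivially by $\Vol_n(U)=\omega_n r_U^n$. The paper states the waist lower bound in the form $\frac{1}{2\pi R}\Vol_{n-m+1}(S_R^{n-m+1})\Vol_m(B_1^m)p^m(\eps)$, but this constant equals your $\omega_{n-m}R^{n-m}\omega_m$, so the bookkeeping is identical; your identification of the only nontrivial step (the precise quantitative waist bound with the asymptotic $p^m$) matches exactly what the paper imports as a black box from \citet{akopyan2017tight}.
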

\begin{rmk}
	Key to the bound is the waist inequality. 
	As such, 
	upper bounds on precision for other spaces (i.e. cube, see \citet{klartag2017convex} ) can be established, 
	provided there is a waist inequality for the space. 
	The Euclidean norm setting can also be extended to arbitrary norms, 
	exploiting convex geometry (i.e. \citet{akopyan2018waist}).
	Rigorous proofs are given in the appendix \ref{sec:proofofupperbound}.
							\end{rmk}
\begin{rmk}
	With $m$ fixed as a constant, 
	note that $D(n, m)$ decays asymptotically at a rate of $(1/n)^{m/2}$. 
	Also note that $r_U<R$ implies $\left(\frac{r_U}{R}\right)^{n-m}$ decays exponentially. 
	Typically, $L$ can grow at a rate of $\sqrt{n}$. 
	Moreover, 
	while $p^m(r)$'s behaviour is given asymptotically, it is independent of $n$. 
	Thus the upper bound decay is dominated by the exponential rate of $n-m$. 
			For fixed $n, m$, 
	this upper bound can be trivial when $r_U \gg r_V$. 
	However, 
	this rarely happens in practice in the information retrieval setting.
	Note that the number of relevant items, which is indexed by $r_U$, is often smaller
	than the number of retrieved items, that depends on $r_V$, while they are both much smaller than number of total items, indexed by $R$.

	We note however that this bound depends on the intrinsic dimension $n$.
	When $n \ll N$ and the ambient dimension $N$ is used in place, 
	the upper bound could be misleading in practice as it is much smaller than it should be.
	To estimate this bound in practice, 
	a good estimate on intrinsic dimension \citep{granata2016accurate} is needed, 
	which is an active topic in the field and beyond the scope of this paper.
\end{rmk}

\cref{thm:precision_bound_ball} guarantees the existence of a particular point $y\in \real^m$ where the precision of $f$ on its neighbourhood is small. 
It is natural to ask if this is also true in an average sense for every $y$.
In other words, 
we know a information retrieval system based on DR maps always has a blindspot, 
but is this blindspot behaviour a typical case?
In general, when $m > 1$, this is false, due to a recent counter-example constructed by \citet{alpert2015family}. 
However, our next result shows that for a large number of continuous DR maps in the field, such upper bound still holds with high probability.

\begin{theorem}[Precision Upper Bound, Average Case]
	\label{thm:precision_bound_avg}
	Assume $n > m$ and $B_{R}^{n}$ is equiped with uniform probability distribution. 
	Consider  the following cases:
	\begin{itemize}
		\item \text{\bf case 1: } $m=1$ and $f: B_{R}^{n} \to \mathbb{R}^m$ is $L$ Lipschitz continuous, or
		\item \text{\bf case 2: } $f: B_{R}^{n} \to \mathbb{R}^m$ is a $k$-layer feedforward neural network map with Lipschitz constant $L$, with surjective linear maps in each layer.  
					\end{itemize} 
	Let $ 0 < \delta^2 < R^2 - r_U^2 $, $r_U, r_V > 0$ be fixed, then with probability at least $q_1$ for case 1 or $q_2$ for case 2, 
	it holds that 
	\begin{equation}
	\label{eq:mainresult_avg}
		Precision^{f}(U, V) \leq D(n, m)\,\left(\frac{r_U}{ \sqrt{r_U^2 + \delta^2} }\right)^{n-m}\,\frac{ r_U^{^{m}} }{p^{m}(r_V/L)},
	\end{equation}
					where 
	\[
	q_1 = \frac{ \frac{1}{2 \pi R} \int_{ B^m_{ \Re } } \Vol_{n-m+1} \Proj_1^{-1}(t) \text{d}t }{ \Vol_n(B^n_R) } \,, \quad 
	q_2 = \frac{ \int_{ B^m_{ \Re } } \Vol_{n-m} \Proj_2^{-1}(t) \text{d}t }{ \Vol_n(B^n_R) } \,,
	\]
	$\Re = \sqrt{ R^2 - r_U^2 - \delta^2 }$,  $\Proj_1: S_{R}^{n+1} \to \mathbb{R}^m$ and $\Proj_2: B_{R}^{n} \to \mathbb{R}^m$ are arbitrary surjective linear maps.
	Furthermore,
	\[
	\Lim{ \frac{ r_U^2 + \delta^2}{R^2} \to 0  } q_1 = 1 \quad \Lim{ \frac{ r_U^2 + \delta^2}{R^2} \to 0  } q_2 = 1.
	\]	
	\todor[]{Need another q value for the case $m=1$.}
\end{theorem}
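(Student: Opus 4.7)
The theorem is an average-case strengthening of Theorem \ref{thm:precision_bound_ball}, so my plan is to re-use the two-step skeleton of the worst-case proof and replace the existential input with a quantitative one. Step (i) there produces one fiber $f^{-1}(y_0)$ of $(n-m)$-volume at least $\Vol_{n-m}(B_R^{n-m})$ via the Waist Inequality; step (ii) Lipschitz-thickens this into a lower bound on $\Vol_n(f^{-1}(V))$ for $V = B_{r_V}^m(y_0)$, which becomes the denominator of $\fprec$. My first move is to define a ``good set'' $G \subseteq B_R^n$ consisting of those $x$ for which $\Vol_{n-m}(f^{-1}(f(x)))$ meets the large-fiber threshold --- now with the weakened size $\sqrt{r_U^2 + \delta^2}$ in place of $R$ --- and to observe that step (ii) applies pointwise on $G$ to yield \eqref{eq:mainresult_avg}. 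The task then reduces to lower-bounding $\Vol_n(G)/\Vol_n(B_R^n)$ by $q_i$.

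For case 2 (feedforward NN with surjective linear layers), I would fix any surjective linear $\Proj_2$ as in the statement and argue structurally. The surjectivity of each layer's linear part forces $Df(x)$ to have rank $m$ wherever $f$ is differentiable, so $\ker Df(x)$ is $(n-m)$-dimensional; for piecewise-linear activations the fiber $f^{-1}(f(x))$ is literally an affine $(n-m)$-subspace inside the linear region containing $x$. I would then show that whenever $\Proj_2(x) \in B_\Re^m$, that affine fiber intersected with $B_R^n$ contains an $(n-m)$-ball of radius $\sqrt{r_U^2 + \delta^2}$, and conclude by the coarea formula that the mass of such $x$ is exactly $q_2$.

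For case 1 ($m=1$, $L$-Lipschitz), I would use a lifting trick since the general Waist Inequality is only existential. Pre-compose $f$ with a submersion $\pi : S_R^{n+1} \to B_R^n$ whose fibers are circles of length $2\pi R$ (the source of the $1/(2\pi R)$ factor in $q_1$) and apply the sphere version to $f \circ \pi$. At $m=1$ the spherical Waist estimate upgrades to a pointwise statement because level sets of a continuous function on $S^{n+1}$ are nested and sandwich the reference level $\Proj_1^{-1}(t)$. Pulling back to $B_R^n$ would give $\Vol_n(G) \ge q_1 \cdot \Vol_n(B_R^n)$. The final limit claims $q_1, q_2 \to 1$ as $(r_U^2+\delta^2)/R^2 \to 0$ should then fall out of $\Re \to R$ and dominated convergence in the coarea integrals.

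The main obstacle is propagating the local full-rank condition to a globally large fiber. For case 2 this is clean on each polyhedral region of a piecewise-linear network but will require extra care for smooth activations or when regions degenerate. For case 1 I expect the delicate inputs to be (a) verifying that $\pi$ pushes forward uniform measure on $S_R^{n+1}$ to a constant multiple of uniform measure on $B_R^n$, and (b) invoking the $m=1$ sphere Waist Inequality in a genuinely pointwise form; the counter-example cited just above the theorem shows the $m>1$ analogue fails in general, which pins down why the Lipschitz case must be restricted to $m=1$.
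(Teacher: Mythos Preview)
Your high-level architecture --- define a ``good set'' $G$ of points with large fibers, run the worst-case precision bound pointwise on $G$, then lower-bound $\Vol_n(G)/\Vol_n(B_R^n)$ --- is exactly the paper's strategy. For case~1 your plan also matches the paper: lift $f$ along a submersion $S_R^{n+1}\to B_R^n$ with circle fibers of length $2\pi R$, and invoke an average (not merely existential) waist inequality for spheres. The paper imports that inequality from Alpert's work rather than re-deriving it; your ``level sets are nested'' heuristic is suggestive of why $m=1$ is special but is not the actual argument, so you should expect to cite rather than prove that step. One technical point: the Lipschitz thickening step needs a lower bound on $\Vol_n\bigl(f^{-1}(f(x))\oplus\epsilon\bigr)$, not on the bare $\Vol_{n-m}$ of the fiber, so your good-set threshold should be phrased in terms of the $\epsilon$-neighborhood volume throughout.

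For case~2 your plan has a genuine gap, and the paper's route is both different and simpler. Your argument via $\mathrm{rank}\,Df(x)=m$ and piecewise-linear regions only tells you that the fiber $f^{-1}(f(x))$ is affine \emph{inside the linear region containing $x$}. But the linear regions of a deep network can be arbitrarily small, so this local affine piece need not contain an $(n-m)$-ball of radius $\sqrt{r_U^2+\delta^2}$; you have supplied no mechanism linking the event $\Proj_2(x)\in B^m_\Re$ (for an arbitrary $\Proj_2$ unrelated to $f$) to the size of $f$'s fiber at $x$. The argument is also tied to piecewise-linear activations, whereas the statement covers $\tanh$. The paper instead uses two elementary observations. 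First, \emph{composition monotonicity}: for any maps $h,g$ one has $(g\circ h)^{-1}((g\circ h)(x))\supseteq h^{-1}(h(x))$, so the fiber of the full network through $x$ contains the fiber of the \emph{first linear layer} $L_1$ through $x$ --- which is a genuine affine slice of $B_R^n$, independent of any region decomposition or activation choice. Second, an SVD reduction shows that the fiber-volume distribution of any surjective linear map on $B_R^n$ coincides with that of the orthogonal projection $\Proj_2$. Together these give the $q_2$ bound directly, by induction on the number of layers, with no smoothness or piecewise-linearity hypotheses on the activations.
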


See Appendix \cref{sec:precision_avg_case} for an explicit characterization of $\Proj_1^{-1}(t)$ and $\Proj_2^{-1}(t)$. 
\cref{thm:precision_bound_ball} and \cref{thm:precision_bound_avg} together suggest that practioners should be cautious in applying and interpreting DR maps. 
One important application of DR maps is in data visualization. 
Among the many algorithms, t-SNE's empirical success made it the de facto standard. 
While \citep{pmlr-v75-arora18a} shows t-SNE can recover inter-cluster structure in some provable settings, 
the resulted intra-cluster embedding will very likely be subject to the constraints given in our work \footnote{ Technically speaking, the DR maps induced by t-SNE may not be continuous, and hence our theorems do not apply directly. 
However, 
since we can measure how closely parametric t-SNE (which is continuous) behaves as t-SNE and there is empirical evidence to their similarity \citep{maaten2009learning}, 
our theorems may apply again. }. 
For example, recall within a cluster will be good, 
but the intra-cluster precision won’t be. 
In more general cases and/or when perplexity is too small, 
t-SNE can create artificial clusters, 
separating neighboring datapoints. 
The resulted visualization embedding may enjoy higher precision, 
but its recall suffers. 
The interested readers are referred to \cref{sec:additional_compare_maps} for more experimental illustrations. 
Our work thus sheds light on the inherent tradeoffs in any visualization embedding. 
It also suggests the companion of a reliability measure to any data visualization for exploratory data analysis, which measures how a low dimensional visualization represents the true underlying high dimensional neighborhood structure.\footnote{Such attempts existed in literature on visualization of dimensionality reduction (e.g. \citep{venna2010information}). However, since these works are based on heuristics, it is less clear what they measure, nor do they enjoy theoretical guarantee.}

\section{Wasserstein measure}
\label{sec:wassersteinmeasure}
\vspace{-1 mm}
Intuitively we would like to measure how different the original neighbourhood $U$ of $x$ is from the retrieved neighbourhood $f^{-1}(V)$ when using the neighbourhood of $f(x)$ in $\real^m$.
Precision and Recall in \cref{subsec:PreAndRec} provide a semantically meaningful way for this purpose and we gave a non-trivial upper bound for precision when the feature extraction is a continuous DR map.
However, precision and recall are purely volume-based measures.
It would be more desirable if the measure could also reflect the information about the distance
distortions between $U$ and $f^{-1}(V)$.
In this section, we propose an alternative measure to reflect such information based on the $L_2$-Wasserstein distance. 
Efficient algorithms for computing the empirical Wasserstein distance exists in the literature \citep{altschuler2017near}.
Unlike the measure proposed in \citet{venna2010information}, our measure also enjoys a theoretical guarantee similar to \cref{thm:precision_bound_ball},  
which provides a non-trivial characterization for the imperfection of dimension reduction information retrieval. 

Let $\Prob_U$ ($\Prob_{f^{-1}(V)}$, respectively) denote the uniform probability distribution over $U$ ($f^{-1}(V)$, respectively), and $\Xi(\mathbb{P}_{U}, \mathbb{P}_{f^{-1}(V)})$ be the set of all the joint distribution over $B_R^n \times B_R^n$, 
whose marginal distributions are $\Prob_U$ over the first $B_R^n$ and $\Prob_{f^{-1}(V)}$ over the second $B_R^n$.
We propose to measure the difference between $U$ and $f^{-1}(V)$ by the $L_2$-Wasserstein distance between  $\Prob_U$ and $\Prob_{f^{-1}(V)}$: 
\[
	W_{2}(\mathbb{P}_{U}, \mathbb{P}_{f^{-1}(V)}) = \inf_{\xi \in \Xi(\mathbb{P}_{U}, \mathbb{P}_{f^{-1}(V)})} \mathbb{E}_{(a, b) \sim \xi} [ \| a - b \|^{2}_{2} ]^{1/2}.
\]
In practice, it is reasonable to assume that $\text{Vol}_n(U)$ is small in most retrieval systems.
In such cases, low $W_2(P_U, P_{f^{-1}(V)})$ cost is closely related to high precision retrieval. 
To see that, when $\text{Vol}_n(U)$ is small, achieving high precision retrieval requires  small $\text{Vol}_n (f^{-1}(V))$, which is a precise quantitative way of saying $f$ being roughly injective.
Moreover, as seen in \Cref{subsec:PreAndRec}, $f$ being roughly injective $\approx$ $f$ giving high precision retrieval. 
As a result, we can expect high precision retrieval performance when optimizing $W_2(P_U, P_{f^{-1}(V)})$ measure. 
Such relation is also empirically confirmed in the simulation in \Cref{sec:simulation}.

Besides its computational benefits, for a continuous DR map $f$, the following theorem provides a lower bound on $W_{2}(\mathbb{P}_{U}, \mathbb{P}_{f^{-1}(V)})$ 
with a similar flavour to the precision upper bound in \cref{thm:strong_precision_recall_tradeoff}. 
\begin{theorem}[Wasserstein Measure Lower Bound]
 	\label{thm:wass_lowerbound}
 	Let $n > m$,
 	$f: B_{R}^{n} \to \mathbb{R}^m$ be a $L$-Lipschitz continuous map, 
 	where $R$ is the radius of the ball $B_{R}^{n}$.
 	There exists $y\in \real^m$ such that for any $x\in f^{-1}(y)$, $r_U$ and $r_V >0$ such that $r \ge r_U$, 
 	$$ W_{2}^2(\mathbb{P}_U, \mathbb{P}_{f^{-1}(V)}) 
 	\geq
 	\frac{n}{n+2} \left(r - r_U \right)^2
 	$$
 	where $r
 	= 
 	\left(\frac{\Gamma(\frac{n}{2} + 1)}{ \Gamma(\frac{n-m}{2} + 1) \Gamma(\frac{m}{2} + 1) }\right)^{ \frac{1}{n} } R^{\frac{n-m}{n}} \left(p^{m}(r_{V}/L)\right)^{ \frac{1}{n} }$. 
 	In particular, 
 	as $n \rightarrow \infty$,
 	\[
 	W_{2}^2(\mathbb{P}_U, \mathbb{P}_{f^{-1}(V)}) = \Omega\left((R-r_U)^2\right).
 	\]
 \end{theorem}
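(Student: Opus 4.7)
The plan is to combine two ingredients already anticipated in the paper: the Waist Inequality (used in the proof of \cref{thm:precision_bound_ball}) to bound $\Vol_n(f^{-1}(V))$ from below, together with the constrained $W_2$-optimization $\inf_{W:\,\Vol_n(W)=M} W_2(\Prob_{B_r},\Prob_W)$ highlighted in the introduction, which reduces the problem to the $W_2$ distance between two concentric balls — an object that admits an explicit computation.

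First I would apply the Waist Inequality to the $L$-Lipschitz map $f\colon B_R^n\to\real^m$ to select $y\in\real^m$ whose fiber is ``thick'', in the sense that $\Vol_n(f^{-1}(y)\oplus t)\ge \Vol_{n-m}(B_R^{n-m})\cdot \Vol_m(B_t^m)$ for every $t>0$. Since $f$ is $L$-Lipschitz, every point within distance $r_V/L$ of $f^{-1}(y)$ is mapped into $V$, so $f^{-1}(y)\oplus(r_V/L)\subseteq f^{-1}(V)$. Unpacking the $\Gamma$-function constants exactly as in the proof of \cref{thm:precision_bound_ball}, this yields $\Vol_n(f^{-1}(V))\ge\Vol_n(B_r^n)$ for the $r$ defined in the statement, and in addition $x\in f^{-1}(y)\subseteq f^{-1}(V)$.

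Next I would invoke the constrained optimization result advertised in the introduction, applied with source $\Prob_{B_{r_U}(x)}$: among all Borel sets $W$ with $\Vol_n(W)=M\ge \Vol_n(B_{r_U}^n)$, the infimum of $W_2(\Prob_{B_{r_U}(x)},\Prob_W)$ is attained at the concentric ball $B_{r^*}(x)$ with $\Vol_n(B_{r^*})=M$, and the attained value is monotone increasing in $M$. Choosing $M=\Vol_n(B_r^n)$ therefore gives
\[
W_2^2(\Prob_U,\Prob_{f^{-1}(V)})\ \ge\ W_2^2(\Prob_{B_{r_U}(x)},\Prob_{B_r(x)}).
\]
For the right-hand side, the radial scaling $T(z)=x+(r/r_U)(z-x)$ is the gradient of a convex quadratic, hence Brenier-optimal; using $\mathbb{E}_{Z\sim\Prob_{B_{r_U}(x)}}[\|Z-x\|_2^2]=\tfrac{n}{n+2}r_U^2$, its cost equals $(r/r_U-1)^2\cdot\tfrac{n}{n+2}r_U^2=\tfrac{n}{n+2}(r-r_U)^2$, which is exactly the claimed bound.

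For the $n\to\infty$ asymptotics with $m,R,r_V,L$ fixed, Stirling gives $(\Gamma(n/2+1)/\Gamma((n-m)/2+1))^{1/n}\to 1$, while $R^{(n-m)/n}\to R$ and $p^m(r_V/L)^{1/n}\to 1$, so $r\to R$ and the prefactor $n/(n+2)\to 1$, yielding $W_2^2=\Omega((R-r_U)^2)$. The hard step is the constrained $W_2$-optimization: proving that a concentric ball minimizes $W_2(\Prob_{B_{r_U}(x)},\Prob_W)$ among prescribed-volume sets. This is precisely the ``complete solution'' promised in the introduction, and I expect its proof to combine optimal partial transport (to handle the varying target $W$) with uniqueness of the Brenier map from $\Prob_{B_{r_U}(x)}$ and the rotational symmetry of $B_{r_U}(x)$, forcing any minimizing $W$ to itself be rotationally symmetric about $x$ and therefore a concentric ball. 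Once this is available, the remaining steps — the waist/Lipschitz volume bound, the second-moment computation for concentric balls, and the Stirling asymptotics — are essentially routine.
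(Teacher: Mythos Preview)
Your proposal is correct and follows essentially the same route as the paper: the Waist Inequality plus the $L$-Lipschitz containment $f^{-1}(y)\oplus (r_V/L)\subseteq f^{-1}(V)$ give $\Vol_n(f^{-1}(V))\ge \Vol_n(B_r^n)$, then \cref{thm:optimalball} reduces to the $W_2$ distance between concentric balls, which is computed via the radial Brenier map and the second-moment identity $\mathbb{E}_{\Prob_{B_\rho}}[\|x\|^2]=\tfrac{n}{n+2}\rho^2$, with Stirling handling the asymptotics. The only cosmetic differences are that the paper introduces an intermediate radius $r^\#$ with $\Vol_n(B_{r^\#})=\Vol_n(f^{-1}(V))$ before invoking monotonicity to pass to $r$, and computes the transport in the reverse direction $T(x)=(r_U/r^\#)x$; your direct use of the $\Vol_n(W)\ge \cV$ version of \cref{thm:optimalball} and the inverse scaling map are equivalent.
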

We sketch the proof here. A complete proof can be found in \cref{sec:W2proofs}. The proof starts with a lower bound of $\Vol_{n} \left(f^{-1}(V)\right)$ by the topologically flavoured waist inequality (\cref{eq:waistineq}). 
Heuristically $\Vol_{n} (f^{-1}(V))$ is much larger than $\Vol_{n} (U)$ when $n\gg m$ and $R\gg r_U$.
The main component of the proof is to establish an explicit lower bound for $W_{2}(\mathbb{P}_U, 
\mathbb{P}_{W})$ over all possible $W$ of a fixed volume $\cV$, 
\footnote{An antecedent of this problem was studied in Section 2.3 of \cite{mccann2004exact}, where the authors optimize over the more restricted class of ellipses with fixed area.
For our purpose, the minimization is over bounded measurable sets.}  
where $U$ is a ball with radius $r_U$, as shown in \cref{thm:optimalball}. 
In particular, 
we prove that the shape of optimal $W^*$ must be rotationally invariant, 
thus $W^*$ must be a union of spheres. 
This is achieved by levering the uniqueness of the solution to the optimal partial transport problem \citep{caffarelli2010free, figalli2010optimal}. 
We then prove that the optimal solution for $W$ is the ball that has a common center with $U$.  
\todog{Don't understand what are we trying to say here.}

\begin{theorem}
	\label{thm:optimalball}
	Let $U = B_{r_U}$ and $\cV \ge \Vol(U)$. 
	Then
	\[
	 \inf_{W:\,\Vol_n(W) \ge \cV} W_{2}(\mathbb{P}_U, \mathbb{P}_{W}) 
	 = 
	 \inf_{W:\,\Vol_n(W) = \cV} W_{2}(\mathbb{P}_U, \mathbb{P}_{W}) = W_{2}(\mathbb{P}_U, \mathbb{P}_{B_{r_{\cV}}}),
	\]
	where $B_{r_{\cV}}$ is an ${r_{\cV}}$ ball with the same center with $U$ such that $\Vol_n(B_{r_{\cV}}) = \cV$.
	Moreover, $T(x) =\frac{r_U}{r_{\cV}}x$, for $x\in B_{r_{\cV}}$ is the optimal transport map (up to a measure zero set), 
	so that
	\[
	W_{2}(\mathbb{P}_U, \mathbb{P}_{B_{r_{\cV}}}) = \int_{B_{r_{\cV}}} | x - T(x) |^2 \,\td \Prob_{B_{r_{\cV}}} (x).
	\]
	Complementarily, 
	when $0 < \cV < \Vol_n(U)$, the infimum $\inf_{W:\,\Vol_n(W) = \cV} W_{2}(\mathbb{P}_U, \mathbb{P}_{W}) = 0$, is not attained by any set. 
	On the other hand, $\inf_{W:\,\Vol_n(W) \ge \cV} W_{2}(\mathbb{P}_U, \mathbb{P}_{W}) = 0$ by taking $W = U$.
\end{theorem}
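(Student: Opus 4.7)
The plan is to sidestep the optimal partial transport machinery the paper's sketch invokes and instead obtain everything from a one-dimensional comparison of radial distributions. For the main case $\cV \ge \Vol_n(U)$, the first step is to observe that the pushforward under $(x,y)\mapsto(|x|,|y|)$ of any coupling of $(\Prob_U,\Prob_W)$ is a coupling of the radial laws $\bar{\Prob}_U, \bar{\Prob}_W$ with no larger quadratic cost, since $||x|-|y|| \le |x-y|$. Hence $W_2^2(\Prob_U,\Prob_W) \ge W_2^2(\bar{\Prob}_U,\bar{\Prob}_W)$ for every measurable $W$.

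The next step is the uniform radial dominance: for $B_t$ denoting the ball of radius $t$ centered at the common center of $U$ and $B_{r_\cV}$,
\[
F_W(t) := \bar{\Prob}_W([0,t]) = \frac{\Vol_n(W\cap B_t)}{\cV} \le \frac{\min(\Vol_n(B_t), \cV)}{\cV} = F_{B_{r_\cV}}(t),
\]
a bound that uses only $\Vol_n(W) = \cV$ and is insensitive to the shape of $W$. Inverting gives $F_W^{-1}(p) \ge r_\cV p^{1/n} = F_{B_{r_\cV}}^{-1}(p)$, and the hypothesis $\cV \ge \Vol_n(U)$ forces $F_U^{-1}(p) = r_U p^{1/n} \le F_{B_{r_\cV}}^{-1}(p)$. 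Plugging into the one-dimensional formula
\[
W_2^2(\bar{\Prob}_U, \bar{\Prob}_W) = \int_0^1 \bigl(F_W^{-1}(p) - F_U^{-1}(p)\bigr)^2\,\td p,
\]
the integrand is bounded pointwise below by $(r_\cV - r_U)^2 p^{2/n}$, producing $W_2^2(\Prob_U,\Prob_W) \ge \tfrac{n}{n+2}(r_\cV-r_U)^2$. Equality at $W = B_{r_\cV}$ is verified directly: the linear map $T(x) = (r_U/r_\cV) x$ satisfies $T_\# \Prob_{B_{r_\cV}} = \Prob_U$ by a one-line change of variables, and $T = \nabla \phi$ for the convex potential $\phi(x) = \tfrac{r_U}{2 r_\cV}|x|^2$, so Brenier's theorem identifies $T$ as the unique (a.e.) optimal map, matching the statement. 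The $\Vol_n(W) \ge \cV$ version follows without extra work, because enlarging $W$ only tightens the density bound underlying $F_W \le F_{B_{r_\cV}}$.

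For the complementary case $0 < \cV < \Vol_n(U)$, the $\ge$-constrained infimum is attained at $W = U$ with cost $0$; for the equality constraint I would partition $U$ into cubes of side $\epsilon_k \downarrow 0$ and, inside each piece $Q_i \cap U$, select a measurable subset of volume $\tfrac{\cV}{\Vol_n(U)}\Vol_n(Q_i \cap U)$ to form $W_k$. Then $\Vol_n(W_k) = \cV$ exactly, the block coupling within each piece is well-defined because the conditional weights match, and since each block has diameter $O(\epsilon_k)$ we get $W_2(\Prob_U, \Prob_{W_k}) \le \sqrt{n}\,\epsilon_k \to 0$; non-attainment is immediate, as $W_2 = 0$ would force $\Prob_W = \Prob_U$ and hence $\Vol_n(W) = \Vol_n(U)$. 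The main obstacle I had anticipated, namely establishing rotational symmetry of the minimizer as done in the paper's sketch via uniqueness in optimal partial transport, dissolves in this approach because the key inequality $F_W \le F_{B_{r_\cV}}$ is shape-insensitive; the only residual subtlety is tightness of the 1D reduction at $W = B_{r_\cV}$, which collapses to the explicit linear-map computation together with standard Brenier uniqueness.
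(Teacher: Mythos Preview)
Your argument is correct and takes a genuinely different route from the paper's. The paper proceeds by recasting the problem as an optimal \emph{partial} transport problem, invokes the existence/uniqueness theory of Figalli and Caffarelli--McCann, and then uses uniqueness under rotations to force the optimizer to be rotationally invariant; only after that does it reduce to a one-dimensional comparison of radial CDFs (or, in its second approach, to the Korman--McCann extreme-point characterization). Your approach short-circuits all of this: the inequality $||x|-|y||\le|x-y|$ gives $W_2(\Prob_U,\Prob_W)\ge W_2(\bar\Prob_U,\bar\Prob_W)$ directly, and the shape-insensitive CDF bound $F_W(t)=\Vol_n(W\cap B_t)/\Vol_n(W)\le \min(\Vol_n(B_t),\cV)/\cV=F_{B_{r_\cV}}(t)$ (which you correctly observe persists when $\Vol_n(W)\ge\cV$) yields $F_W^{-1}\ge F_{B_{r_\cV}}^{-1}\ge F_U^{-1}$ and hence a pointwise lower bound on the quantile integrand. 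Equality at $W=B_{r_\cV}$ is then checked by the explicit Brenier map $T(x)=(r_U/r_\cV)x$, giving $W_2^2=\tfrac{n}{n+2}(r_\cV-r_U)^2$ exactly matching the lower bound. This is more elementary and yields the result for \emph{every} $W$ at once, whereas the paper's argument invests in structural machinery to first identify the optimizer and then compute its cost; what the paper's route buys is an independent demonstration that the minimizer is rotationally symmetric (and, via the extreme-point theory, that its density must be a restriction of the dominating density), but none of that is needed for the statement as written. Your treatment of the complementary case $0<\cV<\Vol_n(U)$ (grid subsets with matched conditional masses, block couplings of diameter $O(\epsilon_k)$, non-attainment via $W_2=0\Rightarrow\Prob_W=\Prob_U$) is essentially the same construction as the paper's, stated a bit more explicitly.
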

\begin{rmk}
	\label{rmk:tightbounds}
	Our lower bound in \cref{thm:wass_lowerbound} is (asymptotically) tight. Note that by \cref{thm:wass_lowerbound}, 
	$W_{2}^2(\mathbb{P}_U, \mathbb{P}_{f^{-1}(V)}) $ has a (maximum) lower bound of scale $(R-r_U)^2$. 
	On the other hand, 
	by \cref{thm:optimalball}, $ W_{2}^2(\mathbb{P}_U, \mathbb{P}_{f^{-1}(V)}) \leq W_{2}^2(\mathbb{P}_U, \mathbb{P}_{B^n_R}) = \Omega ((R - r_U)^2)$, where the equality is by standard algebraic calculations. 
\end{rmk}

\subsection{Iso-Wasserstein inequality }
We believe \cref{thm:optimalball} is of independent interest itself, as it has the same flavor as the isoperimetric inequality (See \cref{sec:waistinequality} for an exact statement.) which arguably is the most important inequality in metric geometry.
In fact, the first statement of \cref{thm:optimalball} can be restated as the following inequality: 
\begin{theorem}[Iso-Wasserstein Inequality]
	\label{thm:isowasserstein}
	Let $B_{r_1}, B_{r_2} \subset B^n_R$ be two concentric $n$ balls with radii $r_1 \leq r_2$ centered at the origin. 
	For all measurable $A \subset B_R^n $ with
	$ \Vol_{n} (A) = \Vol_{n} (B_{r_2}) $, 
	we have 
	$$ W_2(\mathbb{P}(A), \mathbb{P}(B_{r_1})) \geq W_2(\mathbb{P}(B_{r_2}), \mathbb{P}(B_{r_1})) $$
	where $\mathbb{P} (S)$ denotes a uniform probability distribution on $S$, i.e. $\mathbb{P}(S)$ has density $\frac{1}{\Vol_{n}(S)}$. 
\end{theorem}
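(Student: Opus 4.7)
The plan is to obtain \cref{thm:isowasserstein} as an essentially immediate corollary of \cref{thm:optimalball}; the two statements package the same content in different ways, and the genuine mathematical content is housed entirely in \cref{thm:optimalball}.

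Concretely, I would apply \cref{thm:optimalball} with $U := B_{r_1}$ (so $r_U = r_1$) and with the volume parameter $\cV := \Vol_n(B_{r_2})$. Since $r_1 \le r_2$, this choice satisfies $\cV = \Vol_n(B_{r_2}) \ge \Vol_n(B_{r_1}) = \Vol_n(U)$, placing us in the case where \cref{thm:optimalball} identifies a minimizer. Moreover, the ball concentric with $U$ of volume $\cV$ is exactly $B_{r_2}$, since both are centered at the origin and have equal volume, so the theorem gives
\[
\inf_{W:\,\Vol_n(W) = \Vol_n(B_{r_2})} W_2(\mathbb{P}_{B_{r_1}}, \mathbb{P}_W) = W_2(\mathbb{P}_{B_{r_1}}, \mathbb{P}_{B_{r_2}}).
\]
Any measurable set $A$, in particular any $A \subset B_R^n$, with $\Vol_n(A) = \Vol_n(B_{r_2})$ is a candidate in this infimum, so $W_2(\mathbb{P}(A), \mathbb{P}(B_{r_1})) \ge W_2(\mathbb{P}(B_{r_2}), \mathbb{P}(B_{r_1}))$, which is precisely the desired inequality.

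A brief sanity check on the quantifier reduction: the restriction $A \subset B_R^n$ in \cref{thm:isowasserstein} is strictly more restrictive than the class of all measurable sets in $\real^n$ over which \cref{thm:optimalball} takes its infimum, so passing from the larger class to the subclass only makes the inequality easier to satisfy. Consequently, there is no real obstacle at this step; all of the substantive work, namely the use of uniqueness for optimal partial transport and the rotational-symmetry argument that identifies the concentric ball as the unique minimizer, lives upstream in the proof of \cref{thm:optimalball}. The present argument is therefore purely a matter of reinterpreting quantifiers, and the ``hard part'' to worry about, if any, is simply to state \cref{thm:isowasserstein} in a way that matches the hypotheses of \cref{thm:optimalball} exactly.
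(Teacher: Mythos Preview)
Your proposal is correct and matches the paper's own treatment exactly: the paper presents \cref{thm:isowasserstein} explicitly as a restatement of the first part of \cref{thm:optimalball}, without a separate proof, so deriving it by instantiating $U = B_{r_1}$ and $\cV = \Vol_n(B_{r_2})$ is precisely the intended argument.
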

Recall that an isoperimetric inequality in Euclidean space roughly says balls have the least perimeter among all equal volume sets.
Theorem \ref{thm:isowasserstein} acts as a transportation cousin of the isoperimetric inequality. 
While the isoperimetric inequality compares $n-1$ volume between two sets,  
the iso-Wasserstein inequality compares their Wasserstein distances to a small ball. 
The extrema in both inequalities are attained by Euclidean balls.

\subsection{Simulations}
\label{sec:simulation}
In this section, we demonstrate on a synthetic dataset that our lower bound in \cref{thm:wass_lowerbound} can be a reasonable guidance for selecting the retrieval neighborhood radius $r_V$, which emphasizes on high precision.
The simulation environment is to compute the optimal $r_V$ by minimizing the lower bound in Theorem \ref{thm:wass_lowerbound}, 
with a given relevant neighborhood radius $r_U$ and embedding dimension $m$. 
Note that minimizing its lower bound instead of the exact cost itself is beneficial as it avoids the direct computation of the cost. 
Recall the lower bound of $W_2(P_U, P_{f^{-1}(V)})$ is (asymptotically) tight (\cref{rmk:tightbounds}) and matches the its upper bound when $n-m \gg 0$.  
If the lower bound behaves roughly like $W_2(P_U, P_{f^{-1}(V)})$,
our simulation result also serves as an empirical evidence that $W_2(P_U, P_{f^{-1}(V)})$ weighs more on high precision.

Specifically, 
we generate 10000 uniformly distributed samples in a 10-dimensional unit $\ell_2$-ball.
We choose $r_U$ such that on average each data point has 500 neighbors inside $B_{r_U}$. 
We then linearly project these 10 dimensional points into lower dimensional spaces with embedding dimension $m$ from 1 to 9.
For each $m$, a different $r_V$ is used to calculate discrete precision and recall.
This simulates how optimal $r_V$ according to Wasserstein measure changes with respect to $m$. The result is shown in on the left in Figure \ref{fig:simulation}. 
Similarly, we can fix $m = 5$ and track optimal $r_V$'s behavior when $r_U$ changes. This is shown on the right in Figure \ref{fig:simulation}.

We evalute our measures based on traditional information retrieval metrics such as f-score. 
To compute it,
we need the discrete/sample-based precision and recall. 
As discussed in the introduction,
a naive sample based calculations of precision and recall makes $ Precision = Recall $ at all times. 
We compute them alternatively by discretizing \cref{def:set_precision_recall}, by fixing radii $r_U$ and $r_V$.
So each $U$ and $f^{-1}(V)$ contain different numbers of neighbors.  
\begin{equation}
\label{eqn:dist_precision}
	Precision = \frac{\#(\mathrm{points~within~r_U~from}~x~\mathrm{and~within~r_V~from}~y)}{\#(\mathrm{points~within~r_V~from}~y)}
\end{equation}
\begin{equation}
\label{eqn:dist_recall}
	Recall = \frac{\#(\mathrm{points~within~r_U~from}~x~\mathrm{and~within~r_V~from}~y)}{\#(\mathrm{points~within~r_U~from}~x)}
\end{equation}

The optimal $r_V$ according to the lower bound in \cref{thm:wass_lowerbound} 
(the blue circle-dash-dotted line) aligns closely with the optimal f-score with $\beta=0.3$
where $\beta$ weighted f-score, also known as f-$\beta$score, is:
\[
	(1+\beta^2)\frac{Precision*Recall}{\beta^2 * Precision + recall}.
\]
Note that f-score with $\beta < 1$ indeed emphasizes on high precision.

In this provable setting, we have demonstrated our bound's utility. 
This shows $W_2$ measures' potential for evaluating dimension reduction. 
In general cases, we won't have such tight lower bounds and it is natural to optimize according to the sample based $W_2$ measures instead. 
We performed some preliminary experiments on this heuristic, shown in Appendix \ref{sec:wasserstein_experiments}.

\begin{figure}[t]
\centering
\includegraphics[width=14cm,height=7cm]{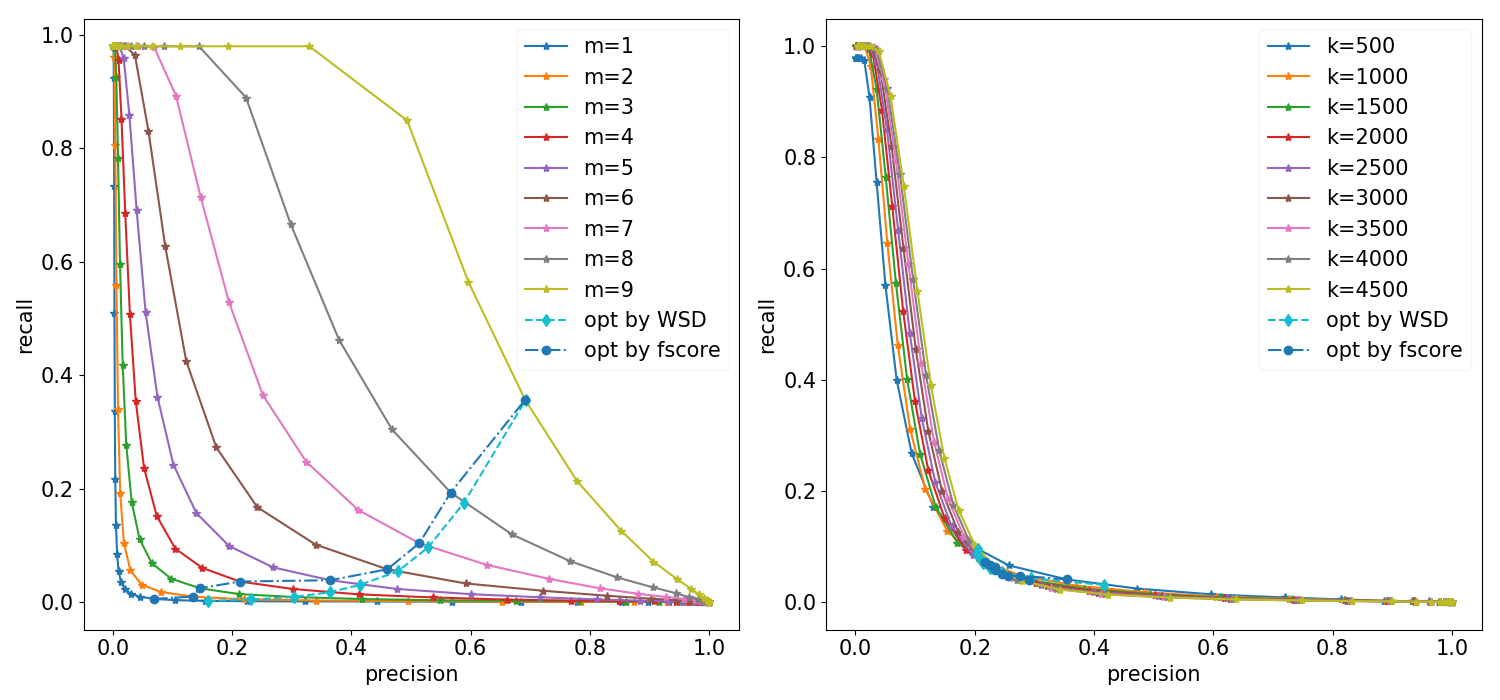}
\caption{Precision and recall results on uniform samples in a 10 dimensional unit ball. 
The left figure contains precision-recall curves for a fixed $r_U$ and the optimal $r_V$ is chosen according to $m = 1, \cdots, 9$. 
The right figure plots the curves for $m = 5$ and the optimal $r_V$'s is chosen for different $r_U$, where $r_U$ is indexed by $k$, the average number of neighbors across all points.}
\label{fig:simulation}
\end{figure}

\vspace{-1 mm}

\section{Relation to metric space embedding and manifold learning}
\label{sec:metric_manifold}
\vspace{-2 mm}
We lastly situate our work in the lines of research on metric space embedding and manifold learning. 
One obvious difference between our work and the literature of metric space embedding and manifold learning is that
our work mainly focuses on intrinsic dimensionality reduction maps, i.e. $n\gg m$, while in metric space embedding and manifold learning, having $n\le m < N $ is common.

Our work also differs from the literature of metric space embedding and manifold learning in its learning objective. 
Learning in these fields aims to preserve the metric structure of the data.
Our work attempts to preserve precision and recall, 
a weaker structure in the sense of embedding dimension (\cref{prop:manifoldlearning}). 
While they typically look for lowest embedding dimension subject to certain loss (e.g. smoothness, local or global isometry), 
in contrast, our learning goal is to minimize the loss (precision and recall etc.) subject to a fixed embedding dimension constraint.
In these cases, desired structures will break (\cref{thm:precision_bound_avg}) because we cannot choose the embedding dimension $m$ (e.g. for visualizations $m = 2$; for classifications $m = \text{number of classes}$). 
\todor[]{Kry double check this paragraph} 

We now discuss the technical relations with metric space embedding and manifold learning. 
Many datasets can be modelled as a finite metric space $\mathcal{M}_k$ with $k$ points. 
A natural unsupervised learning task is to learn an embedding that approximately preserves pairwise distances. 
The Bourgain embedding \citep{bourgain1985lipschitz} guarantees the metric structure can be preserved with distortion $O(\log k)$ in $l_{p}^{O(\log^2k)}$.
When the samples are collected in Euclidean spaces, i.e. $\mathcal{M}_k \subset l_2$,
the Johnson-Lindenstrauss lemma
\citep{dasgupta2003elementary} improves the distortion to (1 + $\epsilon$) in $l_{2}^{O(\log(k/\epsilon^2))}$. 
These embeddings approximately preserve all pairwise distances - global metric structure of $\mathcal{M}_k$ is compatible to the ambient vector space norms.
Coming back to our work, 
it is natural to mimic this approach for precision and recall in $\mathcal{M}_k$. 
The first problem is that the naive sample based precision and recall are always equal (\cref{sec:simulation}).
A second problem is discrete precision and recall is a non-differentiable objective. 
In fact, 
the difficulty of analyzing discrete precision and recall motivates us to look for continuous analogues.

Roughly, our approach is somewhat similar to manifold learning where researchers postulate that the data $\mathcal{M}_k$ are sampled from a continuous manifold $\mathcal{M}$, 
typically a smooth or Riemannian manifold $\mathcal{M}$ with intrinsic dimension $n$. 
In this setting, 
one is interested in embedding $\mathcal{M}$ into $l_2$ locally isometrically.
Then one designs learning algorithms that can combine the local information to learn some global structure of $\mathcal{M}$. 
By relaxing to the continuous cases just like our setting, 
manifold learning researchers gain access to vast literature in geometry. 
By the Whitney embedding \citep{mcqueen2016nearly}, $\mathcal{M}$ can be smoothly embedded into $\mathbb{R}^{2n}$.
By the Nash embedding \citep{verma2013distance}, a compact Riemannian manifold $\mathcal{M}$ can be isometrically embedded into $\mathbb{R}^{p(n)}$, where $p(n)$ is a quadratic polynomial.
Hence the task in manifold learning is wellposed:
one seeks an embedding $f: \mathcal{M} \subset \mathbb{R}^{N} \to \mathbb{R}^{m}$ with $m \leq 2n \ll N$ in the smooth category or $m \leq p(n) \ll N$ in the Riemannian category.
Note that the embedded manifold metrics (e.g. the Riemannian geodesic distances) are not guaranteed to be compatible to the ambient vector space's norm structure with a fixed distortion factor,
unlike the Bourgain embedding or the Johnson-Lindenstrauss lemma in the discrete setting. 
A continuous analogue of the norm compatible discrete metric space embeddings is the Kuratowski embedding, 
which embeds global-isometrically (preserving pairwise distance) any metric space to an infinite dimensional Banach space $L^{\infty}$. 
With $\epsilon$ distortion relaxation, 
it is possible to embed a compact Riemannian manifold to a finite dimensional normed space. 
But this appears to be very hard, 
in that the embedding dimension may grow faster than exponentially in $n$ \citep{roeer2013finite}.

Like DR in manifold learning and unlike DR in discrete metric space embedding, rather than global structure we want to preserve local notions such as precision and recall.
Unlike DR in manifold learning, 
since precision and recall are almost equivalent to continuity and injectivity (\cref{thm:strong_precision_recall_tradeoff}), 
we are interested in embeddings in the topological category, instead of the smooth or the Riemannian category. 
Thus, 
our work can be considered as manifold learning from the perspective of information retrieval, 
which leads to the following result.

\begin{prop}
	\label{prop:manifoldlearning}
	If $m\ge 2n$, where $n$ is the dimension of the data manifold $\cM$ in domain and $m$ is the dimension of codomain $\real^m$, then there exists a continuous map $f:\,\cM\rightarrow \real^m$ such that $f$ achieves perfect precision and recall for every point $x\in \cM$.
\end{prop}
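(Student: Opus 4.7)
The plan is to construct $f$ directly by invoking the Whitney embedding theorem (already cited in \cref{sec:metric_manifold}) and then to verify the two perfection properties via the informal equivalence \cref{prp:equivalence_info_topology}. Since $m \ge 2n$, Whitney's strong embedding theorem produces a proper smooth embedding $\iota: \cM \to \real^{2n}$; composing with the canonical inclusion $\real^{2n} \hookrightarrow \real^m$ (legitimate because $m \ge 2n$) yields a proper, smooth, injective map $f: \cM \to \real^m$, which is the single candidate I would test.

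First I would verify perfect recall. Because $f$ is smooth it is continuous on all of $\cM$, so the ``perfect recall $\Leftrightarrow$ continuity'' half of \cref{prp:equivalence_info_topology} applies at every $x \in \cM$. Next I would verify perfect precision. Fix $x \in \cM$ and any $r_U > 0$, and set $K = \cM \setminus B_{r_U}(x)$. Because $f$ is a proper embedding, $f(K) \subset \real^m$ is closed, and injectivity forces $f(x) \notin f(K)$; hence $\rho := \td(f(x), f(K)) > 0$. Any $r_V \in (0,\rho)$ then satisfies $f^{-1}(B_{r_V}(f(x))) \subset B_{r_U}(x)$, so $\fprec(U,V) = 1$. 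This is exactly the conclusion of the ``continuity + injectivity $\Rightarrow$ perfect precision'' half of \cref{prp:equivalence_info_topology}.

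The main technical obstacle is that the informal proposition hides a properness issue when $\cM$ is non-compact: a continuous injection alone need not be a homeomorphism onto its image, and without this one cannot produce the $r_V$ from $r_U$ required by \cref{def:set_precision_recall}. Whitney's \emph{strong} embedding theorem sidesteps the issue because properness makes $f$ a closed map, which delivers both the homeomorphism-onto-image property and the estimate $\rho > 0$ used above. If one wanted to avoid properness altogether, one could instead work locally around each $x$ using a chart of $\cM$ together with the inverse-function theorem applied to the immersion $\iota$; since perfect precision is a pointwise condition this local argument suffices, but proper-embedding route is cleaner and requires nothing beyond the results already cited in the paper.
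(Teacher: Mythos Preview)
Your proposal is correct and follows essentially the same approach as the paper: invoke the Whitney embedding theorem to produce an injective continuous (indeed smooth) $f$, obtain perfect recall from continuity via \cref{prp:equivalence_info_topology}, and obtain perfect precision from injectivity. The paper packages the precision step as a separate lemma (\cref{lem:open_map}) stating that an open map achieves perfect precision, and then observes that the Whitney embedding is an open map from $\cM$ onto $f(\cM)$; your argument via properness and the closed-set complement $K=\cM\setminus B_{r_U}(x)$ is the dual formulation of the same idea.

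One minor remark: your invocation of the \emph{strong} Whitney theorem (to get properness) is sufficient but not necessary. Any topological embedding is already a homeomorphism onto its image, hence an open map $\cM\to f(\cM)$, and this alone yields the required $r_V$: since $f(B_{r_U}(x))$ is relatively open in $f(\cM)$, there is an open $O\subset\real^m$ with $f(B_{r_U}(x))=O\cap f(\cM)$, and any ball $V\subset O$ about $f(x)$ satisfies $f^{-1}(V)\subset B_{r_U}(x)$ by injectivity. So the non-compact subtlety you flag is real for a bare continuous injection, but is already handled by the embedding property without appealing to properness; your proper-embedding route and your suggested local chart alternative both work as well.
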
 
Note that the dimension reduction rate is actually much stronger than the case of Riemannian isometric embedding where the lowest embedding dimension grows polynomially \citep{verma2013distance}.
This is because preserving precision and recall is weaker than isometric embedding. 
A practical implication is that, 
we can reduce many more dimensions if we only care about precision and recall.

\section{Conclusions}
\label{sec:DisCon}

We characterized the imperfection of dimensionality reduction mappings from a quantitative topology perspective.
We showed that perfect precision and perfect recall cannot be both achieved
by any DR map.
We then proved a non-trivial upper bound for precision for Lipschitz continuous DR maps.
To further quantify the distortion, 
we proposed a new measure based on $L_2$-Wasserstein distances,
and also proved its lower bound for Lipschitz continuous DR maps.
It is also interesting to analyse the relation between the recall of a continuous DR map and its modulus of continuity. 
However, 
the generality and complexity of the fibers (inverse images) of these maps so far defy our effort and this problem remains open. 
Furthermore, it is interesting to develop a corresponding theory in the discrete setting. 

\section{Acknowledgement}

We would like to thank Yanshuai Cao, Christopher Srinivasa, and the broader Borealis AI team for their discussion and support. We also thank Marcus Brubaker, Cathal Smyth, and Matthew E. Taylor for proofreading the manuscript and their suggestions, as well as April Cooper for creating graphics for this work.

\bibliographystyle{plainnat}
\bibliography{nldr}

\newpage

\appendix

\section{Waist Inequality and Isoperimetric Inequality}
\label{sec:waistinequality}

\begin{theorem}[Waist Inequality, \citet{akopyan2017tight}]
	\label{thm:waistinequality}
	Let $m \leq n$ and $f$ be a continuous map from the ball $B_{R}^n$ of radius $R$ to $\mathbb{R}^m$.
	Then there exists some $y \in \mathbb{R}^m$ such that 
	\[
	\Vol_{n-m}\left(f^{-1}(y)\right) \geq \Vol_{n-m} \left(B_R^{n-m}\right).
	\footnote{It is natural to consider $n-m$ dimensional volume for $f^{-1}(y) $,
		due to Sard's theorem \citep{guillemin2010differential} and implicit function theorem: since almost every $y \in f(B^n)$ is a regular value,
		$f^{-1}(y)$ is an $n-m$ dimensional submanifold,
		for such regular $y$.
																		For an arbitrary continuous function, $\Vol_{n-m} = \mathcal{M}^{n-m}_{*}$ is the lower Minkowski content, where the Waist Inequality is established \citep{akopyan2018waist}. For $n-m$ rectifiable sets, $\Vol_{n-m} = \mathcal{M}^{n-m}_{*} = \mathcal{H}^{n-m} $. }
	\]
	Moreover, for all $\eps > 0$: 
	\begin{equation}
	\label{eq:waistineq}
	\Vol_{n}\left(f^{-1}(y)\oplus \eps \right) 
	\geq 
	\frac{1}{2\pi R} \Vol_{n-m+1} \left(S_R^{n-m+1}\right)  \Vol_{m} \left(B^{m}_1\right) p^{m}(\eps),
	\end{equation}
	where $p^{m}(\eps)$ is $\eps^{m}\left(1 + o(1)\right)$, 
	i.e. $ \displaystyle{ \lim_{\eps \to 0 } \frac{p^{m}(\eps)}{\eps^{m}} = 1 } $,
	and $ f^{-1}(y) \oplus \eps$ denotes the set of points $ x \in B^{n}_R$ such that $ d(x, f^{-1}(y)) < \eps$,
	$S_R^{n-m+1}$ is the (n+m-1)-dimensional sphere of radius $R$, and $B^m_1$ is the unit $m$ ball.
\end{theorem}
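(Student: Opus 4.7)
The plan is to reduce this to the classical waist inequality on the sphere and then transfer the bound back to the ball. The two statements (existence of a large fiber, and the tube bound \cref{eq:waistineq}) are really two aspects of one result, so I would aim directly for the stronger neighborhood estimate and recover the fiber bound by sending $\eps \to 0$ and invoking the definition of $\Vol_{n-m}$ as lower Minkowski content. This strategy follows Gromov's original proof on the sphere, as sharpened by \citet{akopyan2017tight}.

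First I would select the candidate value $y \in \real^m$ by a topological, Borsuk--Ulam style argument. One considers a continuous family of candidate centers, parametrized over an auxiliary symmetric space, and uses a degree/equivariance argument to show that at least one center is \emph{balanced}, in the sense that its fiber cannot be squeezed below the equatorial configuration. On the ball $B_R^n$ the extremal configuration is the equatorial $(n-m)$-subball through the origin, and the comparison is made against this object. To move between sphere and ball one can either extend $f$ to the sphere equivariantly along a hemispherical compactification, or apply the sphere waist inequality to a suitable composition and then restrict.

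Second I would obtain the neighborhood bound by a coarea-type integration transverse to $f^{-1}(y)$. On a smooth regular fiber each point supports a normal ball of dimension $m$ and radius $\eps$, so that
\[
\Vol_n(f^{-1}(y)\oplus \eps) \;\approx\; \Vol_{n-m}(f^{-1}(y))\cdot \Vol_m(B^m_\eps)\cdot (1+o(1)),
\]
which accounts for the leading $p^m(\eps) = \eps^m(1+o(1))$ factor. Comparing with the extremal equatorial configuration---where the fiber is an $(n-m)$-subball and the normal tube around it integrates precisely to $\frac{1}{2\pi R}\Vol_{n-m+1}(S_R^{n-m+1})\,\Vol_m(B_1^m)\,p^m(\eps)$---produces the sharp constant in \cref{eq:waistineq}.

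The main obstacle is that $f$ is only continuous, so the fiber $f^{-1}(y)$ need not be a manifold: Sard's theorem is unavailable, the implicit function theorem does not apply, and $(n-m)$-Hausdorff measure can misbehave. This is the genuinely hard part of the theorem, and the natural way through it is to use the lower Minkowski content $\mathcal{M}^{n-m}_*$ in place of Hausdorff measure and to approximate $f$ by smooth maps while tracking how the waist bound survives the limit, as carried out by \citet{akopyan2017tight, akopyan2018waist}. I would therefore quote their framework rather than redo this delicate analytic step, concentrating my own effort on the topological selection of $y$ and on the extremal comparison that pins down the explicit constant.
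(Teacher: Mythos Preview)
The paper does not give its own proof of this theorem. \Cref{thm:waistinequality} is stated in the appendix with attribution to \citet{akopyan2017tight} and is used purely as an input to the paper's own results (\cref{thm:precision_bound_ball}, \cref{thm:precision_bound_avg}, \cref{thm:wass_lowerbound}); it is followed only by an intuitive remark and an illustrative figure, not by any argument. So there is nothing in the paper to compare your proposal against.

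That said, your outline is a reasonable high-level description of how the waist inequality is actually established in the cited literature: the topological selection of $y$ via a Borsuk--Ulam/equivariant argument, comparison to the equatorial configuration, the coarea/tube heuristic for the $\eps$-neighborhood, and the passage from smooth to merely continuous $f$ using lower Minkowski content. You are also right that the last step is the delicate one and that quoting \citet{akopyan2017tight, akopyan2018waist} for it is the sensible choice. But for the purposes of this paper, the theorem is a black box, and no proof is expected or provided.
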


\begin{rmk}
					When $m = 1$, 
	Waist Inequality generalizes classic concentration of measure on $B^n_R$,
	which says most volume of a high dimensional ball concentrates around its equator slab,
	as $n \rightarrow \infty$. 
	When $m > 1$, 
	we can roughly interpret the theorem as $f^{-1}(y)\oplus \eps$ is big in $n-m$ dimensions in the sense of volume, 
	thus it generalizes concentration of measure when $m > 1$. 
\end{rmk}

Intuitively the Waist inequality states that a higher dimensional space is too big in the sense of \textbf{volume}
that we cannot hope to squeeze it \textbf{continuously} into lower dimensional spaces,
without collapsing in some direction(s).
In other words, if an input domain is higher dimensional and thus in some sense large, 
then it must be large in at least one direction.
Waist inequality is a precise quantitative version of the topological invariance of dimension, 
which states balls of different dimensions cannot be homeomorphically mapped to each other. 
It is this mis-match between high and low dimensional nature of volumes that motivates us to formulate and prove the imperfection between precision and recall. 
A recent survey of the inequality can be found in \citep{guth2012waist}. 

\begin{theorem}[Isoperimetric Inequality]
	\label{thm:isoperimetric_inequality}
	
	Suppose $U \subset \mathbb{R}^n$ is a bounded (Hausdorff) measurable set, 
	with (Hausdorff) $n-1$ measurable boundary, 
	denoted as $ \Vol_{n-1} \partial U $.  
	Then:
	\[
	\Vol_n (U) = \Vol_n (B^n_1) \implies \Vol_{n-1} ( \partial U) \geq \Vol_{n-1} ( \partial B^n_1)
	\]
	Stated differently,
	\[ 
	\Vol_n ( U ) 
	\leq 
	\frac{1}{ n^{ \frac{n}{n-1} } \Vol_n (B_1)^{\frac{1}{n-1}} } \Vol_{n-1} ( \partial U )^{\frac{n}{n-1}}  
	\]
\end{theorem}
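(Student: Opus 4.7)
The plan is to deduce the isoperimetric inequality from the Brunn--Minkowski inequality via the Minkowski content characterization of surface area. This route is attractive because Brunn--Minkowski holds for all measurable sets, and the surface-area lower bound arises simply by differentiating a volume estimate at $\epsilon = 0$, which matches the Minkowski content hypothesis built into the statement.

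First, recall Brunn--Minkowski: for nonempty measurable $A, B \subset \real^n$ with measurable sum,
\[
\Vol_n(A \oplus B)^{1/n} \geq \Vol_n(A)^{1/n} + \Vol_n(B)^{1/n}.
\]
Apply this with $A = U$ and $B = \epsilon B_1^n$, so that $A \oplus B = U \oplus \epsilon$ in the paper's notation. Raising to the $n$-th power,
\[
\Vol_n(U \oplus \epsilon) \geq \bigl(\Vol_n(U)^{1/n} + \epsilon\,\Vol_n(B_1^n)^{1/n}\bigr)^n.
\]
Subtracting $\Vol_n(U)$, dividing by $\epsilon$, and sending $\epsilon \downarrow 0$ gives, by the lower Minkowski content definition of $\Vol_{n-1}(\partial U)$ (see the footnote on $n$-volume in the paper),
\[
\Vol_{n-1}(\partial U) \;\geq\; \liminf_{\epsilon \to 0^+} \frac{\Vol_n(U \oplus \epsilon) - \Vol_n(U)}{\epsilon} \;\geq\; n\,\Vol_n(U)^{(n-1)/n}\,\Vol_n(B_1^n)^{1/n},
\]
where the last step is the derivative at $\epsilon = 0$ of the right-hand side of the Brunn--Minkowski bound.

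The two forms of the conclusion then follow by algebra. Setting $\Vol_n(U) = \Vol_n(B_1^n)$ collapses the right-hand side to $n \Vol_n(B_1^n) = \Vol_{n-1}(\partial B_1^n)$, which is the first stated implication. Rearranging the general inequality for $\Vol_n(U)$ in terms of $\Vol_{n-1}(\partial U)$ yields the displayed second form. The main obstacle I anticipate is technical rather than conceptual: one must ensure the Minkowski content step survives for merely measurable $U$ with measurable boundary, i.e.\ that the $\liminf$ really extracts the surface-area term. This is handled by noting that Brunn--Minkowski is an inequality valid for every $\epsilon > 0$, so passing to the $\liminf$ preserves the bound without any regularity on $\partial U$. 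Brunn--Minkowski itself may be invoked as a classical tool (via, e.g., Pr\'ekopa--Leindler or a direct induction on dimension), and no equality case is needed for the stated inequality.
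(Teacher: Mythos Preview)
The paper does not actually prove this theorem; it is stated as a classical background result in \cref{sec:waistinequality} with a reference to Payne (1967) and is only used for comparison with the waist inequality and the authors' iso-Wasserstein inequality. So there is no ``paper's own proof'' to compare against.

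Your derivation via Brunn--Minkowski and the Minkowski-content differentiation is the standard route and is essentially correct. One small technical point: when you write
\[
\Vol_{n-1}(\partial U) \;\geq\; \liminf_{\epsilon \to 0^+} \frac{\Vol_n(U \oplus \epsilon) - \Vol_n(U)}{\epsilon},
\]
you should be explicit about which definition of $\Vol_{n-1}(\partial U)$ you are using. If, as the paper's footnote suggests, $\Vol_{n-1}$ means the lower Minkowski content $\mathcal{M}^{n-1}_*$, then the inequality follows from the containment $(U \oplus \epsilon) \setminus U \subset (\partial U) \oplus \epsilon$, but the normalization in the Minkowski content has a $2\epsilon$ in the denominator, so you should check the constant carefully (or, more simply, note that for the closed set $\overline{U}$ the outer Minkowski content $\liminf_\epsilon \epsilon^{-1}(\Vol_n(U\oplus\epsilon)-\Vol_n(U))$ already \emph{is} a recognized surface-area functional, and equals $\mathcal{H}^{n-1}(\partial U)$ under mild regularity). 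This is a bookkeeping issue, not a gap in the idea; the Brunn--Minkowski step and the final algebra are clean.
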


The first way of looking at the isoperimetric inequality is from an optimization viewpoint. 
It states that Euclidean balls are optimal sets in terms of minimizing the $n-1$ hypersurface volume, 
with a constraint on their $n$ volume. 
The second (equivalent) inequality is from an inequality angle.
It allows us to control the $n$ volume of a set in terms of its boundary's $n-1$ volume. 
For more information about this fundamental inequality,
we refer the reader to \citep{payne1967isoperimetric}.

\begin{figure}[t]
	\centering
	\includegraphics[width=\linewidth]{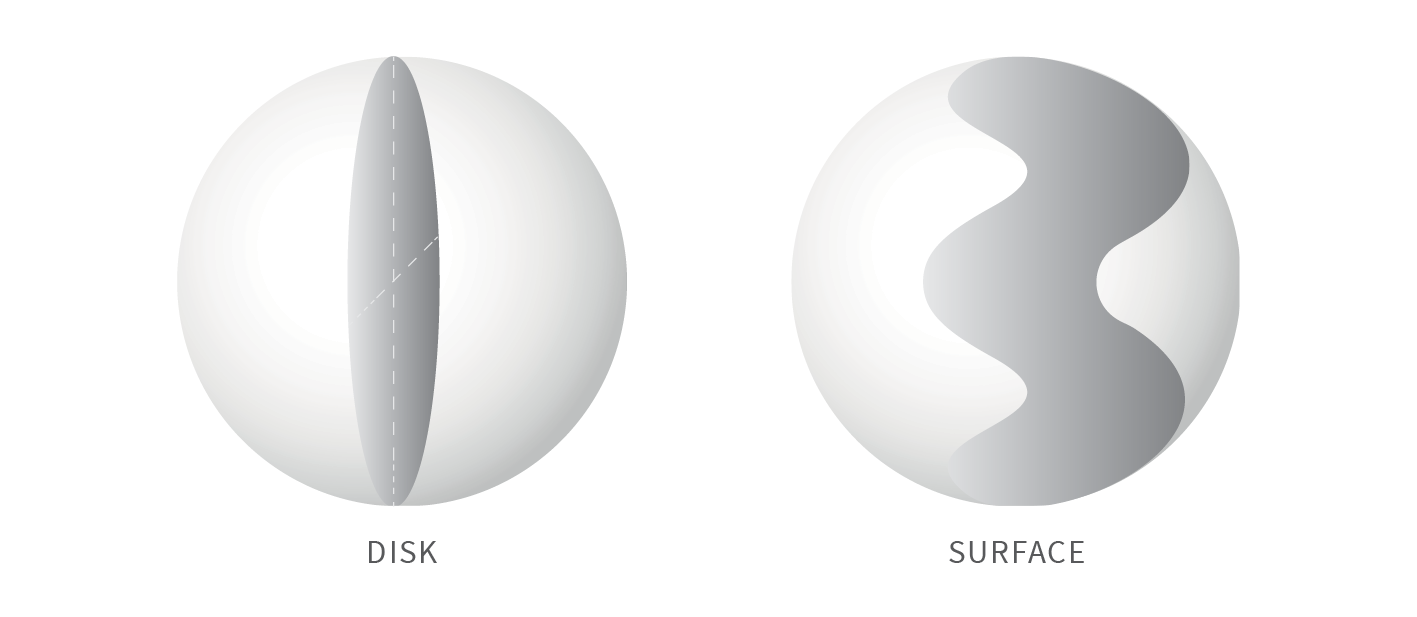}
	\caption{The above pictorial illustration compares $f^{-1}(y)$ - the pancake surface living in a 3-ball to a disc living in the 3-ball. We see that $f^{-1}(y)$ has bigger or equal area than the corresponding disc.}
	\label{fig:waist}
\end{figure}

Among all equal volume sets on the plane, the isoperimetric inequality says that the disc has the least perimeter. This statement compares all domains to balls. 
The waist inequality is its close cousin with perhaps stronger topological flavor. This is a statement about all continuous maps $f: B^{n}_R \rightarrow \mathbb{R}^m $: 
we can find $f^{-1}(y)$ such that $\Vol_{n-m} (f^{-1}(y)) \ge \Vol_{n-m} B^{n-m}_R $. This compares all continuous maps's volume-maximal fiber to balls. 
See \cref{fig:waist} for an illustration in 3D.

\section{Precision, Recall, One-To-One, and Continuity}
\label{appsec:prereconecon}
We extend the definitions of continuity and injectivity to allow exceptions on a measure zero set. 
For a dimensionality reduction map $f: \real^n \rightarrow \real^m$, 
we say it is essentially one-to-one if its `injectivity' is essentially no more than the reduction part. 
The manifold setting $f: \mathcal{M}^n \rightarrow \real^m$ is handled naturally by using coordinates and parametrization by open sets in $\real^n$, as in classical differential topology and differential geometry. 
\begin{definition}[Essential Continuity]
		\label{def:obs_continuity}
	$f$ is essentially continuous at $x$, if for any $\eps>0$, there exists $r>0$, such that for all the neighbourhood $U\ni x$ satisfying $\text{diam}(U) \le r$,
	\[\Vol_n\left(\{ u \in U\,: \, |f(u) - f(x)| >\eps \} \right) = 0.\]
	 	 We say $f$ is essentially continuous on a set $W$ if $f$ is essentially continuous at every $w\in W$.
\end{definition}
\begin{definition}[Essential Injectivity]
	\label{def:obs_one-to-one}
	 $f$ is essentially one-to-one or essentially injective at $x$, if for $y = f(x) \in \mathbb{R}^m$,
		$\Vol_{n - m}\left(f^{-1}(y)\right) = 0$\footnote{If the dimension of $f^{-1}(y)$ is greater than $n-m$, we define its volume to be $\infty$}. $f$ is essentially one-to-one on a set $W$ if $f$ is essentially one-to-one at every $w\in W$.
\end{definition}

Note that the definition of essential continuity (one-to-one, respectively) strictly generalizes the definition of continuity (one-to-one, respectively). 
In other words, 
every continuous function is essentially continuous, 
and there exists discontinuous functions that are essentially continuous.
The following lemma shows that if $f$ is essentially continuous on an open set $W$, 
then $f$ is continuous on $W$.
\begin{lem}[Essential continuity in a neighborhood]
	\label{lem:obs_continuity_neighbor}
	Essential continuity in a neighborhood and continuity in a neighborhood are equivalent. \todor[]{Why essential continuity is not defined as essential continuity at all the points in $W$ except a measure zero set? If this is the definition, then this theorem is not true.}
\end{lem}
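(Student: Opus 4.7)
The plan is to establish the two implications separately. The direction \emph{continuity implies essential continuity} at a single point (let alone in a neighborhood) is immediate from the definitions: if $\text{osc}^f(x)=0$, pick any $\eps>0$, find a neighborhood $U \ni x$ with $\text{diam}(f(U)) < \eps$, and observe that $|f(u) - f(x)| < \eps$ for every (not just almost every) $u \in U$, so the set appearing in Definition \ref{def:obs_continuity} is empty. The work therefore lies in the reverse direction: if $f$ is essentially continuous at every point of an open set $W$, then $f$ is continuous at every point of $W$.

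First, I would fix $x \in W$ and $\eps > 0$ and invoke essential continuity at $x$ with tolerance $\eps/2$ to extract some $r_x > 0$ such that for every neighborhood $U \ni x$ with $\text{diam}(U) \leq r_x$,
\[
\Vol_n\bigl(\{u \in U : |f(u) - f(x)| > \eps/2\}\bigr) = 0.
\]
Shrinking $r_x$ if necessary so that the open ball $B_x := \ball{r_x/4}{x}$ is contained in $W$, I get that almost every $u \in B_x$ satisfies $|f(u) - f(x)| \leq \eps/2$. The key idea now is to promote this almost-everywhere statement to a pointwise one by exploiting that \emph{every} point of $B_x$ also enjoys essential continuity.

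Next, pick an arbitrary $y \in B_x$. Apply essential continuity at $y$ with tolerance $\eps/2$ to obtain $r_y > 0$, and choose an open ball $B_y := \ball{\delta_y}{y}$ with $\delta_y$ small enough that $B_y \subset B_x$ and $\text{diam}(B_y) \leq r_y$; this is possible since $B_x$ is open. Then almost every $u \in B_y$ satisfies $|f(u) - f(y)| \leq \eps/2$, and simultaneously (because $B_y \subset B_x$) almost every $u \in B_y$ satisfies $|f(u) - f(x)| \leq \eps/2$. The intersection of two full-measure subsets of $B_y$ has full measure and is therefore nonempty, so some $u \in B_y$ satisfies both bounds. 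The triangle inequality then gives $|f(y) - f(x)| \leq \eps$. Since $y \in B_x$ was arbitrary, $\text{diam}(f(B_x)) \leq 2\eps$, hence $\text{osc}^f(x) \leq 2\eps$; letting $\eps \to 0$ yields $\text{osc}^f(x) = 0$, i.e.\ continuity at $x$.

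The main obstacle is conceptual rather than computational: essential continuity at a single point is strictly weaker than continuity there (for instance, altering a continuous function on a Lebesgue-null set preserves essential continuity at each point but generally destroys pointwise continuity), so the argument must critically use that essential continuity holds at \emph{every} point of an \emph{open} set. Openness is what lets us pick $B_y \subset B_x$, and the fact that the exceptional set at both $x$ and $y$ is measure zero while $\Vol_n(B_y) > 0$ is precisely what forces the two good regions to intersect. I expect the formal write-up will largely consist of making these two uses of essential continuity explicit and verifying that the diameters $r_x, r_y$ and radii $\delta_y$ can be coordinated without circularity.
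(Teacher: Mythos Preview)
Your proposal is correct and follows essentially the same approach as the paper: apply essential continuity at two nearby points and combine via the triangle inequality, using that two full-measure subsets of a positive-volume ball must intersect. The paper frames this by contradiction (assume a sequence $w_n\to w$ with $|f(w_n)-f(w)|\ge\eta$, then essential continuity at $w_M$ produces a positive-measure set on which $f$ stays $\eta/3$-close to $f(w_M)$ and hence $\eta/3$-far from $f(w)$, contradicting essential continuity at $w$), whereas you give the direct version; the underlying mechanism is identical.
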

\begin{proof}
It is sufficient to prove that if $f$ is essentially continuous on an open set $W$, 
then $f$ is continuous on $W$.
Assume that $f$ is not continuous on $W$, 
i.e., 
there exists $\eta > 0$, $w\in W$ and a sequence $\{w_1, \ldots, w_n,\ldots\}$ such that $\lim_{n\rightarrow \infty} w_n = w$, but $ |f(w_n) - f(w)| \geq \eta$.
Since $f$ is essentially continuous on $W$, there exists a neighbourhood of $w$,
$U\subset W$, 
such that $\Vol_n(E_U ) = 0$,
where $E_U = \{ u \in U\,: \, |f(u) - f(w)| >\eta/3 \}$. 
Note that for large enough $M$,
$w_M \in E_U$. 
Moreover, 
since $f$ is also essentially continuous at $w_M$, for a small neighbourhood $V$ of $w_M$, $\Vol_n(\{v\in V\, :\, |f(v) - f(w_M)| \le  \eta/3\}) = \Vol_n(V) >0$. However, note that this positive measure set $\{v\in V\, :\, |f(v) - f(w_M)| \le  \eta/3\}$ is a subset of $E_U$ by the definition of $E_U$, 
contradicting $\Vol_n(E_U) = 0$. 
\end{proof}
We next prove the equivalence between perfect recall and essential continuity.
\begin{prop}
	\label{prop:recallcontinuity}
For any map $f: \mathcal{M} \subset \mathbb{R}^N \to \mathbb{R}^m$, $f$ achieves perfect recall in an open set $W$, 
if and only if $f$ is essentially continuous on $W$.
\end{prop}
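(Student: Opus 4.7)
The plan is a direct proof in both directions, obtained by repackaging the definitions. Writing $y = f(w)$, perfect recall at $w$ with image radius $r_V$ is literally the statement $\Vol_n(B_{r_U}(w) \setminus f^{-1}(B_{r_V}^m(y))) = 0$ for some $r_U = r_U(r_V) > 0$; taking complements inside the ball, this is the same as $\Vol_n(\{u \in B_{r_U}(w) : |f(u) - y| \ge r_V\}) = 0$. The whole proposition then reduces to shuttling between this and the essential-continuity condition $\Vol_n(\{u \in U : |f(u) - y| > \eps\}) = 0$ for every sufficiently small neighborhood $U \ni w$.

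\textbf{Forward direction ($\Rightarrow$).} Fix $w \in W$ and $\eps > 0$. Applying perfect recall at $w$ with $r_V = \eps$ produces $r_U > 0$ such that $\Vol_n(\{u \in B_{r_U}(w) : |f(u) - y| \ge \eps\}) = 0$. For any neighborhood $U' \ni w$ with $\text{diam}(U') \le r_U$, every $u \in U'$ satisfies $|u - w| \le \text{diam}(U') \le r_U$, so $U' \subset \overline{B_{r_U}(w)}$; since $\partial B_{r_U}(w)$ has Lebesgue measure zero in $\real^N$, the set $\{u \in U' : |f(u) - y| > \eps\}$ is, up to a measure-zero piece on the sphere, a subset of $\{u \in B_{r_U}(w) : |f(u) - y| \ge \eps\}$, hence has measure zero. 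So $r = r_U$ verifies Definition \ref{def:obs_continuity} at $w$.

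\textbf{Backward direction ($\Leftarrow$).} Fix $w \in W$ and $r_V > 0$. Apply essential continuity at $w$ with parameter $\eps = r_V/2$ to obtain $r > 0$ such that $\Vol_n(\{u \in U' : |f(u) - y| > r_V/2\}) = 0$ for every neighborhood $U' \ni w$ with $\text{diam}(U') \le r$. Take $U' = B_{r/2}(w)$ (which has diameter $r$), and use the inclusion $\{u : |f(u) - y| \ge r_V\} \subset \{u : |f(u) - y| > r_V/2\}$ to conclude $\Vol_n(B_{r/2}(w) \setminus f^{-1}(B_{r_V}^m(y))) = 0$, which is exactly perfect recall at $w$ with $r_U = r/2$. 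Alternatively, one may invoke Lemma \ref{lem:obs_continuity_neighbor} to promote essential continuity on $W$ to honest continuity on $W$, and then use openness of $f^{-1}(B_{r_V}^m(y))$ to choose $r_U$ directly.

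\textbf{Main obstacle.} There is no deep obstacle; the only subtle point is the mismatch between the strict inequality $>\eps$ in the essential-continuity definition and the non-strict $\ge r_V$ that arises from taking the complement of the open ball $f^{-1}(B_{r_V}^m(y))$. This is neutralised by applying essential continuity with a strictly smaller threshold ($\eps = r_V/2$) in the backward direction, and by noting that the boundary sphere of a ball has measure zero in the forward direction. Everything else is bookkeeping on quantifiers.
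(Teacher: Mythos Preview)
Your proof is correct. The forward direction is essentially the paper's own argument; you are just a bit more explicit about the fact that $\text{diam}(U') \le r_U$ only gives $U' \subset \overline{B_{r_U}(w)}$ and that the boundary sphere has $n$-volume zero. For the backward direction, the paper does \emph{not} argue directly from the definition as you do; it takes exactly the route you list as an alternative, invoking Lemma~\ref{lem:obs_continuity_neighbor} to upgrade essential continuity on the open set $W$ to genuine continuity, and then using openness of $f^{-1}(B_{r_V}^m(y))$ to find $r_U$ with $B_{r_U}(x) \subset f^{-1}(V)$. Your primary argument (apply essential continuity with $\eps = r_V/2$ and take $U' = B_{r/2}(w)$) is slightly more elementary in that it never leaves the measure-zero framework and does not need the lemma; the paper's route has the minor advantage of producing a genuine containment $U \subset f^{-1}(V)$ rather than containment up to a null set.
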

\begin{proof}
	\textbf{(Perfect Recall $\Rightarrow$ Essential Continuity)} 
	For any $x\in W$, any $\eps >0$, let $V = \{f(v) \in\real^m\, :\, |f(v) - f(x)| \le \eps  \}$. Since $f$ achieves perfect recall at $x$, there exists $r>0$, such that $\Vol_n( f^{-1}(V) \cap \ball{r}{x} ) = \Vol_n( \ball{r}{x})$. Therefore, for any $U$ such that $U\subset \ball{r}{x}$,
	\[
	\Vol_n\left(\{ u \in U\,: \, |f(u) - f(x)| >\eps \} \right)  \le \Vol_n\left( \{ u \in U\,: \, u\notin f^{-1}(V) \cap \ball{r}{x} \} \right) = 0.
	\] 
	Thus $f$ is essentially continuous at $x$.
	
	\textbf{(Essential Continuity $\Rightarrow$ Perfect Recall)}
	By \cref{lem:obs_continuity_neighbor}, $f$ is continuous on $W$. For any $x\in W$, assume $f(x)=y$. For any $r_V>0$, $f^{-1}(\ball{r_V}{y})$ is an open set in $\cM$. Therefore, 
	there exists small enough $r_U$ such that $ \ball{r_U}{x}\subset f^{-1}(V)$, 
	thus $\text{Recall}^{f} (\ball{r_U}{x}, \ball{r_V}{y}) = 1$.
\end{proof}

Based on this proposition, 
we can further prove that if $f$ is (essentially) continuous on $W$, 
then $f$ has neither perfect precision nor essential injectivity property on $W$.

\begin{prop}
	\label{prop:precisiononetoone}
	Let $f: \mathcal{M}^n \subset \mathbb{R}^N \to \mathbb{R}^m$, with $ m < n $.
	If $f$ is (essentially) continuous with approximate differential well defined on an open set $W$ almost everywhere, 
	\footnote{This is a weaker condition than Lipschitz, 
	including functions of bounded variation.
	A Lipschitz function is differentiable almost everywhere. 
			}, 
	then $f$ possesses neither perfect precision nor essential injectivity on $W$.
\end{prop}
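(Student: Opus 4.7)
The plan is to produce a single value $y \in \real^m$ whose fiber $f^{-1}(y)$ contains two distinct points of $W$, and then to read off both failures from this single fact. By \cref{lem:obs_continuity_neighbor}, the essential-continuity hypothesis on the open set $W$ upgrades to honest continuity of $f$ at every point of $W$, so pointwise continuity is freely available in what follows.

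Since $W \subset \cM$ is open in a Riemannian $n$-manifold, I would pick a bi-Lipschitz chart identifying a small geodesic ball $B \subset W$ with a Euclidean ball $B_R^n$. Applying the Waist Inequality (\cref{thm:waistinequality}) to the continuous composition of $f$ with this chart produces some $y \in \real^m$ with $\Vol_{n-m}(f^{-1}(y) \cap B) > 0$. Because $n - m \ge 1$, any set of positive $(n-m)$-dimensional volume is uncountable, so $f^{-1}(y) \cap W$ contains at least two distinct points $x \ne x'$. At the point $x$ we immediately have $\Vol_{n-m}(f^{-1}(f(x))) > 0$, so $f$ fails to be essentially injective at $x \in W$, which already disposes of essential injectivity on $W$.

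For perfect precision I argue by contradiction at the same $x$. Suppose $f$ achieved perfect precision at $x$ and take $r_U = \text{d}(x,x')/3$, so that $U = \ball{r_U}{x}$ and $\ball{r_U}{x'}$ are disjoint. Perfect precision supplies $r_V > 0$ with $V = B_{r_V}^m(y)$ such that $\Vol_n(f^{-1}(V) \setminus U) = 0$. Continuity of $f$ at $x' \in W$ (available by paragraph one) then provides $\delta \in (0, r_U]$ with $\ball{\delta}{x'} \subset f^{-1}(V)$; since $\ball{\delta}{x'} \cap U = \emptyset$, this forces $\Vol_n(f^{-1}(V) \setminus U) \ge \Vol_n(\ball{\delta}{x'}) > 0$, contradicting perfect precision at $x$.

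The main obstacle is the chart-based reduction from $\cM$ to the Euclidean ball required by the Waist Inequality: one must verify that a bi-Lipschitz chart preserves positivity of the $(n-m)$-volume of the fiber (as Hausdorff measure or lower Minkowski content in the sense of the footnote to \cref{thm:waistinequality}). This is essentially where the approximate-differentiability hypothesis enters, guaranteeing rectifiability of almost every fiber so that the asserted $(n-m)$-volume is the meaningful invariant. Beyond this measure-theoretic bookkeeping the argument is purely topological, which is consistent with the fact that no Lipschitz control on $f$ is invoked.
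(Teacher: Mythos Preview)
Your proof is correct and follows essentially the same route as the paper: both invoke the Waist Inequality on a coordinate ball inside $W$ to produce a fiber of positive $(n-m)$-volume (killing essential injectivity), and both then use two distinct preimages of that value to contradict perfect precision. The only cosmetic difference is that the paper applies perfect precision at \emph{both} preimages $z_1,z_2$ to force $\Vol_n(f^{-1}(V))=0$ and then contradicts openness of $f^{-1}(V)$, whereas you invoke perfect precision at $x$ alone and use continuity at $x'$ to plant a positive-volume ball in $f^{-1}(V)\setminus U$; your variant is slightly more economical but the idea is identical. Your caveat about the bi-Lipschitz chart and the $(n-m)$-volume bookkeeping is well taken---the paper glosses over the same point.
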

\begin{proof}
	\noindent\textbf{(Continuous in neighborhood $\Rightarrow$ Not Essentially Injective)}
	We first prove that if $f$ is continuous on $W \subset \real^n$, 
	then $f$ is not essentially one-to-one on $W$. 
	To prove that $f$ does not have perfect precision, 
	it is sufficient to prove that the perfect precision of $f$ implies $f$ being essentially one-to-one.
	We handle the manifold case at the end of the proof, 
	by coordination: $\phi: U \subset \mathcal{M}^n \rightarrow V \subset \real^n$, 
	and parametrization $\phi^{-1}: V \subset \real^n \rightarrow U \subset \mathcal{M}^n $. 

	Assume $f$ is essentially one-to-one on $W$, 
	thus for any $y \in f(W) \subset \real^m$, 
	\[\Vol_{n-m}(f^{-1}(y)) = \int_{f^{-1}(y)} \text{d}\Vol_{n-m}(p) =0.\] 
	Since $W \subset \real^n $ is open, 
	there is an open ball $B^n_{\tau} \subset W $ such that we can consider the restriction of $f$ onto $B^n_{\tau}$. 
	Now \cref{thm:waistinequality} guarantees the existence of $y_\tau \in f(B^n_{\tau})$ such that 
	\[\Vol_{n-m}(f^{-1}(y_\tau)) \ge \Vol_{n-m} (B^n_{\tau}) > 0.\] 
	This contradiction completes the proof in the Euclidean case. 

	Now, for a map $f: W \subset \mathcal{M}^n \rightarrow \real^m $. 
	We consider the restriction of $f$ on $U \subset W$ where $U$ is homeomorphic to $\real^n$. 
	Then the composite map: 
	$f \circ \phi^{-1} \rightarrow \real^m $ is again a map between Euclidean spaces. 
	The argument above applies and we complete this part of the proof.

	\noindent\textbf{(Perfect Precision $\Rightarrow$ Essential One-to-one)}
	Assume that $f$ is not essentially one-to-one on $W$, 
	thus $f$ is not one-to-one on $W$. 
	Therefore, 
	there exist $y$, $z_1$, and $z_2$ such that $f(z_1) = f(z_2) = y$. 
	Without loss of generality, 
	assume $d(z_1, z_2) = 1$. Since $f$ has perfect precision, 
	picking $U = B^{m}_{0.4}(z_1)$, 
	there exists $r_{V,1}$, 
	such that $\Vol_n\left(f^{-1}(B^{m}_r(y)) \cap B^{m}_{0.4}(z_1)\right) = \Vol_n\left(f^{-1}(B^{m}_r(y))\right)$ for $r\le r_{V,1}$. 
	Similarly, 
	there exists $r_{V,2}$, 
	such that $\Vol_n\left(f^{-1}(B^{m}_r(y)) \cap B^{m}_{0.4}(z_2)\right) = \Vol_n\left(f^{-1}(B^{m}_r(y))\right)$ for $r\le r_{V,2}$. Further note that $B^{m}_{0.4}(z_1) \cap B^{m}_{0.4}(z_2) = \emptyset$. For $r \le \min\{r_{V,1}, r_{V,2}\}$, then
	\begin{align*}
	\Vol_n(f^{-1}(B^{m}_r(y))) & \ge   \Vol_n(f^{-1}(B^{m}_r(y)) \cap B^{m}_{0.4}(z_1)) + \Vol_n(f^{-1}(B^{m}_r(y)) \cap B^{m}_{0.4}(z_2)) \\
	&= 2*\Vol_n(f^{-1}(B^{m}_r(y))).
	\end{align*}
	Therefore, 
	$\Vol_n(f^{-1}(B^{m}_r(y))) = 0$. 
	Now since $f$ is continuous,
	$f^{-1}(B^{m}_r(y))$ is an open set in $\cM$, 
	thus $\Vol_n(f^{-1}(B^{m}_r(y)))$ cannot be 0, 
	a contradiction.
\end{proof}
Based on \cref{prop:recallcontinuity,prop:precisiononetoone}, the proof of \cref{thm:strong_precision_recall_tradeoff} is straightforward.
\begin{proof} [Proof of \cref{thm:strong_precision_recall_tradeoff}]
	It is sufficient to prove that if $f$ achieves perfection recall at $W$, then $f$ cannot achieve perfect precision at $W$. Since $f$ achieves perfect recall at $W$, by \cref{prop:recallcontinuity} $f$ is continuous, thus by \cref{prop:precisiononetoone} $f$ cannot achieve perfect precision at $W$.
\end{proof}

\section{Proof of \cref{thm:precision_bound_ball}}
\label{sec:proofofupperbound}
We present the proof of \cref{thm:precision_bound_ball} in this section. 
The following proposition develops a lower bound for the volume of the inverse image of $f$ on a particular small open set.
\begin{prop}
	\label{prop:lowerboundballvol}
	If $f$ is a continuous function with Lipschitz constant $L$, 
	then for any $y\in\real^m$ and $\eps>0$,
	\[
	\Vol_{n}\left(f^{-1}( B^m_{\eps}(y) )\right) 
	\ge 
	\Vol_{n}\left({f^{-1}(y) \oplus \frac{\eps}{L} }\right).
	\]
\end{prop}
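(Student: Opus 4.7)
The plan is to prove the containment
\[
f^{-1}(y)\oplus \tfrac{\eps}{L} \;\subseteq\; f^{-1}\!\left(B^m_{\eps}(y)\right),
\]
from which the volume inequality follows immediately by monotonicity of the $n$-dimensional volume (Lebesgue/Hausdorff, or lower Minkowski content when the set is not rectifiable, as defined in the notations footnote).

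To establish the containment I would take an arbitrary point $x$ in the $\eps/L$-neighborhood $f^{-1}(y)\oplus \eps/L$. By the definition of the Minkowski neighborhood $A\oplus \eps := \{x\in \real^N : \text{d}(x,A)<\eps\}$ given in \Cref{subsec:notations}, this means there exists some $x_0 \in f^{-1}(y)$ with $\|x-x_0\| < \eps/L$. The Lipschitz hypothesis on $f$ with constant $L$ then gives
\[
\|f(x) - f(x_0)\| \;\le\; L\,\|x-x_0\| \;<\; L\cdot \tfrac{\eps}{L} \;=\; \eps.
\]
Since $x_0 \in f^{-1}(y)$ we have $f(x_0)=y$, so $\|f(x)-y\|<\eps$, i.e.\ $f(x)\in B^m_\eps(y)$, and thus $x\in f^{-1}(B^m_\eps(y))$.

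I do not anticipate any real obstacle: the statement is a one-line consequence of the Lipschitz definition combined with monotonicity of volume. The only subtle point worth a brief remark is the volume convention for potentially non-rectifiable sets: since $A \subseteq B$ implies $\Vol_n(A) \le \Vol_n(B)$ for all three candidate volumes listed in the footnote of \Cref{subsec:notations} (Lebesgue measure, Hausdorff measure, lower Minkowski content), monotonicity applies uniformly and delivers the desired inequality without any case analysis. No appeal to the Waist Inequality, Lipschitz differentiation, or any deeper tool is needed at this step; \Cref{prop:lowerboundballvol} only serves to pass from a pointwise fiber bound to a neighborhood volume bound, which is then combined with \Cref{thm:waistinequality} in the proof of \Cref{thm:precision_bound_ball}.
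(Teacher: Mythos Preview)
Your proposal is correct and matches the paper's own proof essentially line for line: the paper also establishes the set inclusion $f^{-1}(y)\oplus \tfrac{\eps}{L}\subseteq f^{-1}(B^m_\eps(y))$ directly from the Lipschitz condition and then applies monotonicity of $\Vol_n$. Your write-up is in fact slightly cleaner, since the paper's version contains a couple of minor typos (writing $|f(x)-f(y)|$ for $|f(x)-y|$ and $\eps/L$ for $\eps$ in the intermediate set description).
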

\begin{proof}
	Since $f$ is Lipschitz, 
	for any $x$ such that $\text{d}(x, f^{-1}(y)) \le \frac{\eps}{L}$, $|f(x) - f(y)| \le \eps$. 
	Thus 
	\[
	f^{-1}(y) \oplus \frac{\eps}{L} = \{x\in \cM\,:\, \text{d}(x, f^{-1}(y)) \le \frac{\eps}{L}  \} \subset \{x\in \cM\,:\, |f(x) - f(y)| \le \frac{\eps}{L}  \} = f^{-1}\left( B^m_{\eps}(y) \right). 
	\]
	Therefore, 
	\[
	\Vol_{n}\left(f^{-1}( y \oplus \eps )\right) 
	\ge 
	\Vol_{n}\left( f^{-1}(y) \oplus \frac{\eps}{L} \right).
	\]
\end{proof}
\begin{proof}[Proof of \cref{thm:precision_bound_ball}]
	By \cref{thm:waistinequality}, 
	there exists $y\in \real^m$ such that 
	\[
	\Vol_{n}\left(f^{-1}(y)\oplus \eps \right) 
	\geq 
	\frac{1}{2\pi R} \Vol_{n-m+1} \left(S_R^{n-m+1}\right)  \Vol_{m} \left(B^{m}_1\right) \eps^m\left(1+o(1)\right).
	\]
	For any $x\in f^{-1}(y)$, $r_U, r_V >0$, 
	recall that 
	$\fprec(U, V) = \frac{\Vol_{n}(f^{-1}(V) \cap U)} {\Vol_{n}(f^{-1}(V))} \le \frac{\Vol_{n}(U)}{\Vol_{n}(f^{-1}(V))}$, 
	thus a lower bound of 
	$\Vol_{n}(f^{-1}(V))$ leads to an upper bound for $\fprec(U, V)$.
	Further note that
	\begin{align}
	\label{eq:largefiber}
		\Vol_{n}(f^{-1}(V)) 
		& = \Vol_{n}\left(f^{-1}(y \oplus r_{V})\right) \nonumber\\
		& \ge \Vol_{n}\left(f^{-1}(y) \oplus (r_{V}/L) \right) \nonumber\\
		& \ge  \frac{1}{2 \pi R}  \Vol_{n-m+1}(S^{n-m+1}) \Vol_{m}(B^{m}_1) R^{n-m+1} p^{m}(r_{V}/L) \nonumber\\
		& = \frac{\pi^{(n-m)/2}}{\Gamma(\frac{n-m}{2} + 1)} \frac{\pi^{m/2}}{\Gamma(\frac{m}{2} + 1)} R^{n-m} p^{m}(r_{V}/L)\,, 
	\end{align}
	where the first inequality is due to \cref{prop:lowerboundballvol}, the second inequality is due to the Waist Inequality \cref{eq:waistineq}, and $p^{m}(x) = x^m \left(1 + o(1)\right)$.
	Combining the volume calculation on $U$, 
	\begin{align*}
	\fprec(U, V) 
	& \le \frac{ \frac{\pi^{n/2}}{\Gamma(\frac{n}{2} + 1)} r_{U}^{n} }{ \frac{\pi^{n-m/2}}{\Gamma(\frac{n-m}{2} + 1)} \frac{\pi^{m/2}}{\Gamma(\frac{m}{2} + 1)} R^{n-m} p^{m}(r_{V}/L )} \\
	 & \le   \frac{\Gamma(\frac{n-m}{2} + 1) \Gamma(\frac{m}{2} + 1)}{\Gamma(\frac{n}{2} + 1)} (\frac{r_U}{R})^{n-m} \frac{r_U^{m}}{p^{m}(r_{V}/L)}.
	\end{align*} 
\end{proof}

\cref{thm:precision_bound_ball} generalizes as long as there is a corresponding waist theorem for that space. 
And roughly the condition of having a waist theorem is that a space is `truly' $n$ dimensional. 
We therefore conjecture that \cref{thm:precision_bound_ball} holds in various settings in machine learning where we are dealing with truly $n$ dimensional data. 
In the rest of this section, we are going to prove analogues of \cref{thm:precision_bound_ball} under the non-Euclidean norm. 

We define the necessary concepts first.
In the non-Eucldiean case, the generalized unit ball is a convex body. 
\begin{definition}[Generalized Unit Ball, e.g. \citet{wang2005volumes}]
\label{def:gen_ball}
	Let $p_1, p_2, \ldots, p_n \ge 1$. 
	A generalized unit $n$ ball is defined as the following convex body: 
	\begin{align} 
		B_{p_1, p_2, \ldots, p_n} 
		= 
		\{ ( x_1, x_2, \ldots, x_n ): 
		|x_1|^{p_1} + \ldots + |x_n|^{p_n} \leq 1 \} 
	\end{align}   
\end{definition}
\begin{theorem}[Volume of Generalized Ball, \citet{wang2005volumes}]
\label{thm:gen_ball}
	\begin{align}
		\Vol_n B_{p_1, p_2, \ldots, p_n} 
		= 
		2^n \frac{ \Gamma(1 + 1/p_1) \ldots \Gamma(1 + 1/p_n) }{\Gamma( 1 + 1/p_1 + \ldots + 1/p_n )}
	\end{align} 
\end{theorem}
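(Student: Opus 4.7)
The plan is to establish this classical formula by reducing the volume integral over the generalized ball $B_{p_1,\ldots,p_n}$ to a Dirichlet-type integral over the standard simplex, and then invoking (or deriving) the Dirichlet formula in terms of gamma functions. This is essentially a beta/gamma calculation once the right change of variables is made.

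First, by the symmetry $x_i \mapsto -x_i$ in each coordinate, the region $\{\sum_i |x_i|^{p_i}\le 1\}$ splits into $2^n$ congruent pieces, one per orthant, so
\[
\Vol_n(B_{p_1,\ldots,p_n}) = 2^n \int_{\substack{x_i \ge 0 \\ \sum x_i^{p_i} \le 1}} dx_1\cdots dx_n .
\]
Next, I would make the substitution $u_i = x_i^{p_i}$ on the positive orthant, so that $x_i = u_i^{1/p_i}$ and $dx_i = \tfrac{1}{p_i} u_i^{1/p_i - 1}\, du_i$. This transforms the region into the standard simplex $\Delta_n = \{u \in \real^n : u_i \ge 0,\ \sum u_i \le 1\}$, and the integral becomes
\[
\Vol_n(B_{p_1,\ldots,p_n}) = 2^n \left(\prod_{i=1}^n \frac{1}{p_i}\right) \int_{\Delta_n} \prod_{i=1}^n u_i^{1/p_i - 1}\, du_1\cdots du_n .
\]

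Then I would apply the Dirichlet integral formula
\[
\int_{\Delta_n} \prod_{i=1}^n u_i^{a_i - 1}\, du = \frac{\prod_{i=1}^n \Gamma(a_i)}{\Gamma\bigl(1 + \sum_{i=1}^n a_i\bigr)}\quad (a_i > 0),
\]
with $a_i = 1/p_i$. Combining with the prefactor and using the identity $\tfrac{1}{p_i}\Gamma(1/p_i) = \Gamma(1 + 1/p_i)$ yields the claimed expression.

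The only non-routine ingredient is the Dirichlet integral formula itself, which I would prove by induction on $n$: the base case $n=1$ is trivial, and the inductive step integrates out $u_n$ over $[0, 1 - \sum_{i<n} u_i]$, after rescaling $u_n = (1 - \sum_{i<n} u_i)\, t$, to obtain a beta function $B(a_n, 1 + \sum_{i<n} a_i)$ times a Dirichlet integral in one fewer variable; applying $B(\alpha,\beta) = \Gamma(\alpha)\Gamma(\beta)/\Gamma(\alpha+\beta)$ collapses the recursion to the stated form. This inductive beta-function reduction is the main technical step, though it is standard; everything else is a bookkeeping exercise with the change-of-variables Jacobian and the gamma-function identity.
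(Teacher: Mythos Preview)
Your argument is correct and is the standard derivation of this formula via the Dirichlet integral. Note, however, that the paper does not supply its own proof of this statement: it is quoted as a known result from \citet{wang2005volumes} and used only as an input to the precision bound for arbitrary norms, so there is no in-paper proof to compare against.
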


\begin{definition}[Log-Concave Measure]
\label{def:log_concave}
	A Borel measure $\mu$ on $\mathbb{R}^n$ is log-concave if for any compacts sets $A \subset \mathbb{R}^n$ and $B \subset \mathbb{R}^n $, 
	and for any $0 < \lambda < 1$:
	\begin{align}
		\mu( \lambda A \oplus (1 - \lambda) B ) 
		\geq
		\mu(A)^{\lambda} \mu(B)^{1-\lambda} 
	\end{align}
\end{definition}

\begin{theorem}[Brunn-Minkowski Inequality]
	\label{thm:brunn_mink}
	Let $\Vol_n$ denote Lebesgue measure on $\mathbb{R}^n$. Let $A$ and $B$ be two nonempty compact subsets of $\mathbb{R}^n$. Then:
	\begin{align}
		[\Vol_n(A \oplus B)]^{1/n} \geq [\Vol_n(A)]^{1/n} + [\Vol_n(B)]^{1/n}
	\end{align}
\end{theorem}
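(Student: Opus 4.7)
The plan is to follow the classical three-step route: establish Brunn--Minkowski first for boxes by AM--GM, bootstrap to finite unions of boxes by induction on the total number of pieces, and finally extend to arbitrary compact sets by an inner approximation argument.

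First I would handle the box case. Suppose $A = \prod_{i=1}^n [0,a_i]$ and $B = \prod_{i=1}^n [0,b_i]$ are axis-aligned boxes (after translation, which does not affect either side of the inequality since $\Vol_n$ is translation invariant and $(A+v)\oplus(B+w) = (A\oplus B)+(v+w)$). Then $A\oplus B \supseteq \prod_{i=1}^n [0,a_i+b_i]$, so
\begin{equation*}
\frac{\Vol_n(A)^{1/n} + \Vol_n(B)^{1/n}}{\Vol_n(A\oplus B)^{1/n}}
\le \prod_{i=1}^n \Bigl(\tfrac{a_i}{a_i+b_i}\Bigr)^{1/n} + \prod_{i=1}^n \Bigl(\tfrac{b_i}{a_i+b_i}\Bigr)^{1/n}.
\end{equation*}
The right-hand side is bounded by $1$ by AM--GM applied coordinatewise: $\prod x_i^{1/n} \le \tfrac{1}{n}\sum x_i$ for nonnegative $x_i$, used on both products, then summed to give $\tfrac{1}{n}\sum_i \bigl(\tfrac{a_i}{a_i+b_i} + \tfrac{b_i}{a_i+b_i}\bigr) = 1$. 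This is the base case.

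Next I would induct on the total number of boxes $k$ when $A$ and $B$ are each disjoint finite unions of axis-aligned boxes with interiors disjoint. The case $k=2$ is the above. For the inductive step, pick any pair of disjoint boxes inside $A$; they can be separated by some axis-aligned hyperplane $H_A = \{x_i = t\}$. Translate $B$ so that the same hyperplane $\{x_i = t\}$ divides $B$ into pieces $B^+, B^-$ with $\Vol_n(B^\pm)/\Vol_n(B) = \Vol_n(A^\pm)/\Vol_n(A)$ (possible by continuity in the translation parameter). Writing $A^\pm, B^\pm$ for the halves, each has strictly fewer boxes, and crucially $(A^+\oplus B^+)\cup (A^-\oplus B^-) \subseteq A\oplus B$ with the two sides separated by $\{x_i = t + \text{const}\}$. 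Apply the inductive hypothesis to each half, then combine using the identity $(\alpha+\beta)^n \le (\alpha^{1/n}+\beta^{1/n})^n$ rewritten via the volume-proportion choice; this yields the inequality for $A, B$.

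Finally, for arbitrary nonempty compact $A,B \subset \real^n$, approximate from inside by finite unions of dyadic cubes $A_\eps \uparrow A$, $B_\eps \uparrow B$ with $\Vol_n(A_\eps) \to \Vol_n(A)$ and $\Vol_n(B_\eps) \to \Vol_n(B)$. Since $A_\eps \oplus B_\eps \subseteq A \oplus B$, the inequality passes to the limit. The main obstacle is the inductive separation step: one has to verify carefully both that a hyperplane can be chosen to split $A$ nontrivially \emph{and} that after translating $B$ the proportion-matching condition holds, and then that the two translated Minkowski sums remain disjoint so that their volumes add. Once the proportion matching is set up correctly, the volume bound collapses to the same AM--GM estimate used in the base case, completing the induction.
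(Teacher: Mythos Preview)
The paper does not prove the Brunn--Minkowski inequality; it is stated there as a classical result and used as a black box in the proof of \cref{lem:log-concave}. So there is no ``paper's own proof'' to compare against. Your proposal is the standard Hadwiger--Ohmann argument, and the overall strategy (boxes via AM--GM, induction on the number of boxes, approximation) is correct and well known.

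Two small points are worth tightening. First, in the inductive step you write ``combine using the identity $(\alpha+\beta)^n \le (\alpha^{1/n}+\beta^{1/n})^n$''; as written that inequality is false (take $n=2$, $\alpha=1$, $\beta=4$), and in any case no extra inequality is needed there: once you have matched proportions $\Vol_n(A^\pm)/\Vol_n(A)=\Vol_n(B^\pm)/\Vol_n(B)=\lambda,\,1-\lambda$, the inductive bound gives $\Vol_n(A^+\oplus B^+)\ge \lambda(\Vol_n(A)^{1/n}+\Vol_n(B)^{1/n})^n$ and similarly with $1-\lambda$, and these simply add. Second, your final approximation step ``$A_\eps\uparrow A$ from inside by dyadic cubes'' can fail for compact sets with empty interior (e.g.\ a fat Cantor set contains no cube). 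The standard fix is to insert an intermediate step: pass first from finite unions of boxes to open sets (where inner approximation by boxes does work), and then from open sets to compact sets by taking $U_\eps=\{x:\mathrm{d}(x,A)<\eps\}$, using that $U_\eps\oplus V_\eps\subset (A\oplus B)+B_{2\eps}$ so $\Vol_n(U_\eps\oplus V_\eps)\to\Vol_n(A\oplus B)$.
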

The following lemma is well known in concentration of measure and convex geometry. 
We prove it here for completeness.

\begin{lem}[Lebesgue Measure on Convex Sets is Log-Concave]
\label{lem:log-concave}
	Let $\Vol_n$ denote Lebesgue measure on $\mathbb{R}^n$. 
	The (induced) restricted measure, $\Vol_n$, by restricting $\Vol_n$ to any convex sets is log-concave.
\end{lem}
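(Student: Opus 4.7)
The plan is to reduce the claim to the Brunn--Minkowski inequality (\cref{thm:brunn_mink}) combined with the weighted AM--GM inequality, using convexity of $K$ only through a single containment. Let $\mu$ denote the restriction of $\Vol_n$ to a convex set $K \subset \real^n$, so that $\mu(A) = \Vol_n(A \cap K)$ for any Borel $A$. The goal is to show $\mu(\lambda A \oplus (1-\lambda) B) \ge \mu(A)^{\lambda}\,\mu(B)^{1-\lambda}$ for compact $A, B$ and $\lambda \in (0,1)$.

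First, I would establish the key set-theoretic inclusion
\[
\lambda (A \cap K) \oplus (1-\lambda)(B \cap K) \;\subset\; \bigl(\lambda A \oplus (1-\lambda) B\bigr)\cap K.
\]
The containment in $\lambda A \oplus (1-\lambda) B$ is immediate since $A \cap K \subset A$ and $B \cap K \subset B$. The containment in $K$ uses exactly the convexity hypothesis: for $a \in A \cap K$ and $b \in B \cap K$, the point $\lambda a + (1-\lambda)b$ is a convex combination of two points of $K$, hence lies in $K$. Applying $\Vol_n$ to both sides and using monotonicity reduces the problem to bounding $\Vol_n\bigl(\lambda (A \cap K) \oplus (1-\lambda)(B \cap K)\bigr)$ from below.

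Next, I would apply the Brunn--Minkowski inequality (\cref{thm:brunn_mink}) to the compact sets $\lambda (A \cap K)$ and $(1-\lambda)(B \cap K)$, and use the homogeneity $\Vol_n(tS) = t^n \Vol_n(S)$ to pull scalars out of the $n$-th roots. This yields
\[
\bigl[\Vol_n\bigl(\lambda(A\cap K) \oplus (1-\lambda)(B\cap K)\bigr)\bigr]^{1/n}
\;\ge\; \lambda\,\mu(A)^{1/n} + (1-\lambda)\,\mu(B)^{1/n}.
\]
Finally, I would invoke the weighted AM--GM inequality $\lambda s + (1-\lambda) t \ge s^{\lambda} t^{1-\lambda}$ with $s = \mu(A)^{1/n}$ and $t = \mu(B)^{1/n}$, then raise both sides to the $n$-th power to obtain $\mu(A)^{\lambda}\mu(B)^{1-\lambda}$ on the right. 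Chaining the three inequalities gives the log-concavity of $\mu$.

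I do not anticipate a genuine obstacle: the substantive content is entirely in Brunn--Minkowski, which the statement permits me to invoke, and in the elementary AM--GM step. The only care required is in the set-theoretic inclusion, where convexity of $K$ is used essentially, and in the (standard) checks that one can reduce the statement from general Borel sets to compact sets by inner regularity so that Brunn--Minkowski applies directly. Degenerate cases where $\mu(A) = 0$ or $\mu(B) = 0$ are handled trivially since the right-hand side is then zero.
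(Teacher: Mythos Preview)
Your proposal is correct and follows essentially the same approach as the paper: Brunn--Minkowski (\cref{thm:brunn_mink}) plus homogeneity of $\Vol_n$ plus weighted AM--GM, with convexity of $K$ used only to keep the Minkowski combination inside $K$. The one minor difference is that you handle arbitrary compact $A,B\subset\real^n$ by working with $A\cap K$ and $B\cap K$ and the explicit inclusion $\lambda(A\cap K)\oplus(1-\lambda)(B\cap K)\subset(\lambda A\oplus(1-\lambda)B)\cap K$, whereas the paper tacitly restricts to $A,B\subset K$; your version is a slightly cleaner match to \cref{def:log_concave}, but the substance is identical.
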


\begin{proof}
	Plugging $\lambda A$ and $(1 - \lambda) B$ to theorem \ref{thm:brunn_mink}, 
	we have:
	\begin{align}
		\Vol_n^{1/n}(\lambda A \oplus (1-\lambda) B) 
		\geq 
		& \Vol_n^{1/n}(\lambda A) + \Vol_n^{1/n}( (1-\lambda) B) \\
		= 
		& \lambda \Vol_n^{1/n}(A) + (1 - \lambda) \Vol_n^{1/n}(B) \\
		\geq 
		& \Vol_n^{\lambda/n}(A) \Vol_n^{(1 - \lambda)/n}(B)
	\end{align}
	where the first equality follows because the $\lambda$ (or $ 1 - \lambda $ respectively) is scaled be a factor or $\lambda^n $ and taking $n$th root gives the equality, 
	and the last inequality follows from the weighted arithmetic-geometric mean inequality.
	Raising to the $n$th power, 
	we get:
	\begin{align}
		\Vol_n(\lambda A \oplus (1-\lambda) B) 
		\geq 
		\Vol_n^{\lambda}(A) \Vol_n^{ (1 - \lambda)}(B)
	\end{align}
	To finish the proof, 
	we note that for any $A$ and $B$ as nonempty compact subsets of a convex set $K \subset \mathbb{R}^n$ in the Euclidean space, 
	the Lebesgue measures restricted on $K$, 
	$\Vol_n(A)$ and $\Vol_n(B)$ can be written as Lebegues measures on $A$ and $B$.
	Convexity of $K$ ensures $\lambda A \oplus (1 - \lambda) B$ is still in the set $K$. 
\end{proof}
To deduce an analogue of \cref{thm:precision_bound_ball}, 
we need the following waist inequality for log-concave measures. 

\begin{theorem}[Waists of Arbitrary Norms, Theorem 5.4 of \citet{akopyan2018waist}]
\label{thm:waist_norm}
	Suppose $K \subset \mathbb{R}^n$ is a convex body, 
	$\mu$ a finite log-concave measure supported on $K$, 
	and $f: K \longrightarrow \mathbb{R}^m $ is continuous. 
	Then for any $\epsilon \in [0, 1]$ there exists $y \in \mathbb{R}^m $ such that:
	\begin{align}
		\mu( f^{-1}(y) \oplus \epsilon K ) \geq \epsilon^m \mu(K)
	\end{align}
\end{theorem}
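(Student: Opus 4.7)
The plan is to bootstrap from the classical Gromov waist inequality (Theorem \ref{thm:waistinequality}) using the log-concavity machinery in Definition \ref{def:log_concave} and the Brunn--Minkowski inequality (Theorem \ref{thm:brunn_mink}). The target bound $\eps^m \mu(K)$ is informative: for a coordinate linear projection $\pi: K \to \real^m$ with $\mu$ uniform on a symmetric convex body such as the cube, one can check directly that $\pi^{-1}(0) \oplus \eps K$ has $\mu$-measure exactly $\eps^m \mu(K)$. So the theorem asserts that no continuous map can outperform this linear baseline, which suggests a proof that pits topological rigidity (waist-type existence of large fibers) against measure-theoretic flexibility (log-concavity of $\mu$).

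First I would locate a good $y^\ast \in \real^m$ via a topological step. In the Euclidean/Lebesgue case this is the classical Gromov waist; here one instead normalizes $K$ (e.g.\ via a John-type affine transformation) and applies a Borsuk--Ulam-type argument to $f$ restricted to a suitable sub-body, obtaining a fiber $f^{-1}(y^\ast)$ with $(n-m)$-dimensional content at least as large as a canonical $(n-m)$-dimensional central section of $K$. Next I would fix this $y^\ast$ and expand $f^{-1}(y^\ast) \oplus \eps K = \bigcup_{x \in f^{-1}(y^\ast)} (x + \eps K)$, estimating its $\mu$-measure by a disintegration along $f$: decompose $\mu$ into conditional measures on the fibers of $f$, and on each $(n-m)$-dimensional transverse slab use log-concavity of the conditional density to compare the translate $x + \eps K$ to the convex combination $\eps K \oplus (1-\eps)\{x\}$. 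Aggregating fiberwise, the $(n-m)$-dimensional content of $f^{-1}(y^\ast)$ supplied by the waist step multiplies against the one-dimensional log-concave contribution from each slab to yield $\eps^m \mu(K)$.

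The main obstacle is achieving the sharp exponent $m$ rather than $n$. A naive pointwise application of Theorem \ref{thm:brunn_mink} to $x + \eps K = \eps K \oplus (1-\eps)\{x\}$ yields $\mu(\eps K \oplus (1-\eps)\{x\}) \geq \mu(K)^\eps \mu(\{x\})^{1-\eps} = 0$, which is vacuous, so the bound cannot be extracted one point at a time. The extra $(n-m)$ powers of $\eps$ that separate the trivial $\eps^n\mu(K)$ bound from the sharp $\eps^m\mu(K)$ bound must come from the large $(n-m)$-dimensional volume of the fiber $f^{-1}(y^\ast)$: the fiber plays the role of a thick base along which $\eps K$ is swept, cancelling exactly $(n-m)$ factors of $\eps$. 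Making this cancellation rigorous requires either a localization/needle decomposition of $\mu$ adapted to $f$ (following Klartag's approach to log-concave needle decompositions), or a direct covering and selection argument as in \citet{akopyan2018waist}; I expect the latter to be the cleanest route, but the delicate combinatorial bookkeeping needed to preserve the sharp constant ``$1$'' in front of $\eps^m \mu(K)$ (rather than a dimension-dependent factor analogous to $D(n,m)$ in Theorem \ref{thm:precision_bound_ball}) will be where the hard work concentrates.
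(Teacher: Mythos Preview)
This statement is not proved in the paper at all: it is quoted verbatim as Theorem 5.4 of \citet{akopyan2018waist} and then applied as a black box in the proof of Proposition~\ref{prp:precision_norm}. There is therefore no ``paper's own proof'' to compare your proposal against; the authors treat this waist inequality as an external input on the same footing as Theorem~\ref{thm:waistinequality}.

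Since you have nonetheless sketched an argument, a brief comment on it is warranted. Your high-level diagnosis is sound: the sharp exponent $m$ (rather than $n$) must come from the $(n-m)$-dimensional size of a well-chosen fiber, and a naive pointwise use of log-concavity is vacuous. However, the mechanism you propose for extracting those $(n-m)$ factors has a gap. You suggest disintegrating $\mu$ ``along the fibers of $f$'' and applying log-concavity on transverse slabs, but $f$ is merely continuous, so its fibers need not form anything like a nice foliation admitting a useful disintegration; even for smooth $f$ the conditional measures on fibers have no reason to inherit log-concavity. The actual proof in \citet{akopyan2018waist} does not disintegrate along $f$ at all. Instead it runs a Gromov--Borsuk--Ulam argument on the configuration space of iterated hyperplane partitions of $K$ into $2^N$ convex pieces of equal $\mu$-mass; log-concavity is used only to guarantee that each piece, once it is thin enough (a ``pancake''), is geometrically close to a section of $K$, so that its $\eps K$-neighbourhood swallows a controlled fraction of its mass. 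The topological step then produces a single $y^\ast$ whose fiber meets every piece of the partition. Your two-stage plan (first find a fiber with large lower Minkowski content, then inflate) reverses this order and loses the coupling between the partition and the choice of $y^\ast$ that makes the sharp constant work.
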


\begin{prop}[Precision on Arbitrarilly Normed Balls]
\label{prp:precision_norm}
	Let $m < n$. 
	Let $f: B_{R; p_1, p_2, \ldots, p_n} \longrightarrow \mathbb{R}^m $ be a $L$-Lipschtiz continuous map defined on a generalized $n$ ball with radius $R$ from \cref{def:gen_ball}. 
	Let $r_U$ and $r_V$ be radii of two generalized balls, 
	with dimensions $n$ and $m$ respectively. 
	Then there exists $y$ depending on $r_V$ such that:
	\begin{align}
		Prec^f(U, V) \leq (\frac{r_U}{R})^{n-m} (\frac{r_U}{r_V/L})^{m}
	\end{align}
\end{prop}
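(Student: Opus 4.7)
\medskip

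\noindent\textbf{Proof Proposal for Proposition~\ref{prp:precision_norm}.}

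The plan is to repeat the strategy used in the proof of \cref{thm:precision_bound_ball}, but with the Euclidean Waist Inequality replaced by its log-concave analogue, \cref{thm:waist_norm}. The Euclidean argument had three ingredients: (i) a lower bound on $\Vol_n(f^{-1}(V))$ coming from the existence of a ``large fiber,'' (ii) the Lipschitz inflation $f^{-1}(y)\oplus (r_V/L) \subset f^{-1}(V)$, and (iii) the upper bound $\Vol_n(f^{-1}(V)\cap U)\le \Vol_n(U)$. Ingredients (ii) and (iii) carry over verbatim once the Euclidean ball is replaced by the generalized ball, so the only substantive work is to re-derive (i) in the arbitrarily-normed setting.

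First I would set $K := B_{R;p_1,\dots,p_n}$ and let $K_0 := K/R = B_{1;p_1,\dots,p_n}$ denote the corresponding unit ball, so that the norm on the source is the Minkowski functional of $K_0$. Since $K$ is convex, \cref{lem:log-concave} shows that $\mu := \Vol_n\big|_K$ is a finite log-concave measure supported on $K$, so \cref{thm:waist_norm} applies: for every $\epsilon \in [0,1]$ there exists $y \in \real^m$ such that
\[
\mu\bigl(f^{-1}(y)\oplus \epsilon K\bigr) \;\ge\; \epsilon^m\,\mu(K).
\]
I would choose $\epsilon = \frac{r_V}{LR}$ (assumed $\le 1$; the complementary regime makes the claimed bound trivial since the right-hand side exceeds one), so that $\epsilon K = (r_V/L)\,K_0$. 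An elementary Lipschitz calculation (completely analogous to \cref{prop:lowerboundballvol}) gives $f^{-1}(y)\oplus (r_V/L)\,K_0 \subset f^{-1}(V)$, and therefore
\[
\Vol_n\bigl(f^{-1}(V)\bigr) \;\ge\; \Bigl(\tfrac{r_V}{LR}\Bigr)^{\!m} \Vol_n(K) \;=\; \Bigl(\tfrac{r_V}{L}\Bigr)^{\!m} R^{n-m}\,\Vol_n(K_0),
\]
where the last identity uses the scaling $\Vol_n(K) = R^n\,\Vol_n(K_0)$ (with $\Vol_n(K_0)$ given explicitly by \cref{thm:gen_ball}).

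To finish, I would combine this lower bound with the trivial upper bound $\Vol_n(f^{-1}(V)\cap U)\le \Vol_n(U) = r_U^n\,\Vol_n(K_0)$, giving
\[
\fprec(U,V) \;\le\; \frac{r_U^n\,\Vol_n(K_0)}{(r_V/L)^m\, R^{n-m}\,\Vol_n(K_0)} \;=\; \Bigl(\tfrac{r_U}{R}\Bigr)^{\!n-m}\Bigl(\tfrac{r_U}{r_V/L}\Bigr)^{\!m},
\]
which is the claimed inequality. The only real obstacle is bookkeeping around the two scales of convex bodies appearing in the Waist inequality (the $\epsilon K$ appearing in \cref{thm:waist_norm} is the source-shaped body, not the target ball $V$), and verifying that the Lipschitz hypothesis, phrased with respect to the Minkowski functional of $K_0$ on the domain and the generalized $r_V$-norm on the codomain, still delivers the inclusion in step (ii); both of these are routine but need to be stated carefully. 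Note that the cleaner form of the bound compared to \cref{thm:precision_bound_ball}---no dimensional constant $D(n,m)$ and no lower-order $o(1)$ factor $p^m(\cdot)$---is a direct consequence of the multiplicative form $\epsilon^m \mu(K)$ in \cref{thm:waist_norm}, which already accounts for the optimal dimensional constants.
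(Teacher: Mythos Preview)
Your proposal is correct and follows essentially the same route as the paper: apply \cref{lem:log-concave} to see that Lebesgue measure restricted to the generalized ball is log-concave, invoke \cref{thm:waist_norm} to produce a large fiber, use the Lipschitz inclusion to pass from $f^{-1}(y)\oplus\epsilon K$ to $f^{-1}(V)$, and then divide $\Vol_n(U)$ by the resulting lower bound. Your choice $\epsilon = r_V/(LR)$ is in fact the more careful normalization---the paper writes $\epsilon = r_V/L$ and then silently drops an $R^m$ factor in the volume calculation---so your bookkeeping around $K$ versus $K_0$ is an improvement rather than a deviation.
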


\begin{proof}
	We would like to apply theorem \ref{thm:waist_norm}.
	Since $B_{R; p_1, p_2, \ldots, p_n}$ is a convex body, 
	Lebesgue meaure $\Vol_n$ on $B_{R; p_1, p_2, \ldots, p_n}$ is log-concave by \cref{lem:log-concave}. 
	Then by \cref{thm:waist_norm},
	for $r_V/L$, 
	there exists $y \in \mathbb{R}^m $ such that: 
	\begin{align}
		\Vol_n( f^{-1}(y) \oplus \frac{r_V}{L} K ) \geq (\frac{r_V}{L})^m \Vol_n(K)
	\end{align}
	where $K = B_{R; p_1, p_2, \ldots, p_n}$.
	Now by \cref{prop:lowerboundballvol}, 
	\begin{align}
		\Vol_n( f^{-1}( V ) ) 
		=
		\Vol_n( f^{-1}( B_{r_V; p_1, p_2, \ldots, p_n} ) ) 
		\geq 
		(\frac{r_V}{L})^m \Vol_n(K)
	\end{align}
	Therefore:  
	\begin{align}
		Prec^f(U, V) & \leq \frac{\Vol_n(U)}{\Vol_n(f^{-1}(V))} \\
		& \leq \frac{ \Vol_n( B_{r_U; p_1, p_2, \ldots, p_n} ) }{ \Vol_n(B_{R; p_1, p_2, \ldots, p_n}) (\frac{r_V}{L})^m } \\
		& = \frac{ 2^n \frac{ \Gamma(1 + 1/p_1) \ldots \Gamma(1 + 1/p_n) }{\Gamma( 1 + 1/p_1 +\ldots + 1/p_n )} r_U^{n-m} r_U^{m} }{ 2^n \frac{ \Gamma(1 + 1/p_1) \ldots \Gamma(1 + 1/p_n) }{\Gamma( 1 + 1/p_1 + \ldots + 1/p_n )} R^{n-m} (\frac{r_V}{L})^m } \\
		& = (\frac{r_U}{R})^{n-m} (\frac{r_U}{r_V/L})^{m}
	\end{align}
\end{proof}

\section{Proof of \cref{thm:precision_bound_avg}}
\label{sec:precision_avg_case}
The proof of \cref{thm:precision_bound_avg} is based on the idea that the fibers of certain type of continuous DR maps are mostly `large'.   
A map $f$ has a large fiber at $y$ if $f^{-1}(y)$'s volume is lower bounded by that of a linear map.  
This concept of `large' fiber is actually an essential concept in the proof of the waist inequality. 
The intuition we try to capture is that fibers of $f$ are considered big if their $n-m$ volumes are comparable to that of a surjective linear map.

The next two theorems show that for either of the following cases:
\begin{itemize}
	\item $m=1$; or
	\item $f: B_{R}^{n} \to \mathbb{R}^m$ be a $k$-layer neural network map with Lipschitz constant $L$, whose linear layers are surjective.
\end{itemize} 
the fibers of $f$ are mostly `large'.  

\begin{theorem}[Average Waist Inequality for Balls, m = 1]
	\label{thm:waistinequality_ball_avg}
	Let $f$ be a continuous map from $B^n_R$ to $\mathbb{R}$, and $\tau = \Vol_{n+1}\left(\Proj^{-1}(y) \oplus \eps\right) $ for an arbitrary $y \in \Proj(S^{n+1}_R) $, 
	then for all $\epsilon > 0$	
	\begin{align*}
	&\Vol_n \left(\left\{z\in B_R^{n}\,:\, \Vol_n\left( f^{-1}\left(f(z)\right) \oplus \eps \right) \ge \frac{1}{2\pi R} \tau  \right \} \right) \\
	\ge & \frac{1}{2\pi R}\Vol_{n+1} \left(
	\left\{ x \in S^{n+1}_R: 
	\Vol_{n+1}\left(\Proj^{-1}\left( \Proj(x) \right) \oplus \epsilon\right) \geq \tau
	\right\} \right).
	\end{align*}
\end{theorem}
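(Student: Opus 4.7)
The plan is to reduce the statement on the ball $B_R^n$ to an analogous statement on the sphere $S_R^{n+1}$ via the coordinate projection $\pi:S_R^{n+1}\to B_R^n$, $(x_1,\ldots,x_{n+2})\mapsto (x_1,\ldots,x_n)$, and then invoke an averaged form of the sphere waist inequality. The fiber of $\pi$ over $z\in B_R^n$ is a circle of radius $\sqrt{R^2-\|z\|^2}$ in the last two coordinates; parametrizing $S_R^{n+1}$ by $(z,\theta)$ and computing the induced metric gives $\sqrt{\det g}=R$ independent of $z$, so one obtains the clean coarea-style identity
\[
\Vol_{n+1}(\pi^{-1}(A)) \;=\; 2\pi R\cdot \Vol_n(A)
\]
for every measurable $A\subset B_R^n$ (consistent with $\Vol_{n+1}(S_R^{n+1})=2\pi R\cdot \Vol_n(B_R^n)$). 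This identity is precisely the source of the $\tfrac{1}{2\pi R}$ factors in the theorem.

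Next, lift $f$ to $\tilde f := f\circ\pi: S_R^{n+1}\to\mathbb{R}$, so that $\tilde f$ is continuous and $\tilde f^{-1}(\tilde f(x))=\pi^{-1}(f^{-1}(f(\pi(x))))$. Since $\pi$ is $1$-Lipschitz, $\pi^{-1}(F)\oplus \eps \subset \pi^{-1}(F\oplus\eps)$ for any $F\subset B_R^n$, and together with the identity above this yields the key comparison
\[
\Vol_{n+1}\bigl(\tilde f^{-1}(\tilde f(x))\oplus \eps\bigr) \;\le\; 2\pi R\cdot \Vol_n\bigl(f^{-1}(f(\pi(x)))\oplus \eps\bigr).
\]
Setting $A := \{x\in S_R^{n+1}: \Vol_{n+1}(\tilde f^{-1}(\tilde f(x))\oplus \eps)\ge \tau\}$ and $\mathcal B := \{z\in B_R^n: \Vol_n(f^{-1}(f(z))\oplus \eps)\ge \tau/(2\pi R)\}$, the comparison forces $\pi(A)\subset \mathcal B$; combining $A\subset \pi^{-1}(\pi(A))$ with the coarea identity then gives
\[
\Vol_n(\mathcal B) \;\ge\; \Vol_n(\pi(A)) \;\ge\; \tfrac{1}{2\pi R}\Vol_{n+1}(A).
\]

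It remains to bound $\Vol_{n+1}(A)$ from below by $\Vol_{n+1}\bigl(\{x\in S_R^{n+1}:\Vol_{n+1}(\Proj^{-1}(\Proj(x))\oplus\eps)\ge \tau\}\bigr)$. This is a superlevel-set (stochastic-dominance) form of the waist inequality on $S_R^{n+1}$: the distribution of $x\mapsto \Vol_{n+1}(\tilde f^{-1}(\tilde f(x))\oplus\eps)$ must dominate that of $x\mapsto \Vol_{n+1}(\Proj^{-1}(\Proj(x))\oplus\eps)$. For $m=1$ this is essentially classical Lévy concentration on the sphere (the equatorial band minimizes the fiber $\eps$-neighborhood volume among level sets of prescribed size), and is extractable from the machinery in \cite{akopyan2017tight,akopyan2018waist}. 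This last ingredient is the main obstacle of the proof: the single-fiber existence version of the waist inequality in \cref{thm:waistinequality} is not enough, and one must upgrade to its superlevel-set form. It is precisely this averaged form that fails for $m>1$ in general (by the Alpert-family counterexample cited in the main text), which is why the $m=1$ case is separated from Case 2 in \cref{thm:precision_bound_avg}. Once this step is available, chaining the three displayed inequalities closes the argument.
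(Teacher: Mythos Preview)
Your proposal is correct and mirrors the paper's proof almost step for step: the paper also lifts $f$ to $\hat f=f\circ P$ via the $1$-Lipschitz orthogonal projection $P:S_R^{n+1}\to B_R^n$ with $P_\#\sigma_R=2\pi R\,\nu_R$ (stated separately as \cref{lem:ortho_projection}), uses exactly your inclusion $P^{-1}(F)\oplus\eps\subset P^{-1}(F\oplus\eps)$ and the resulting $\Vol_{n+1}$--$\Vol_n$ comparison, and then closes with the superlevel-set waist inequality on the sphere. The only substantive difference is attribution of that last ingredient: the paper isolates it as \cref{thm:waistinequality_sphere} and cites it to Alpert \citepapp{alpert2015family} rather than to Akopyan--Karasev or to classical L\'evy concentration, so you may want to point to that reference explicitly.
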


\begin{prop}
	\label{prop:deep_net}
	Let $f$ be a $k$ layer neural network with nonlinear activations (ReLu, LeakyReLu, tanh, etc.) 
	from $B^n_R$ to $(0, 1)^m$ and $\Proj$ be an arbitrary linear projection on $B_R^{n}$.
		Then for any $\tau$ the following inequality holds,
	\begin{align*}
	&\Vol_n \left(\left\{x\in B_R^{n}\,:\, \Vol_n\left( f^{-1}\left(f(x)\right) \oplus \eps \right) 
	\ge
	\tau  \right \} \right) \\
	\ge 
	& \Vol_{n} \left(
	\left\{ x \in B^{n}_R: 
	\Vol_{n}\left(\Proj^{-1}\left( \Proj(x) \right) \oplus \epsilon\right) \geq \tau
	\right\} \right).
	\end{align*}
\end{prop}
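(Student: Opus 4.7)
My plan is to induct on the number of layers $k$. The key structural fact is that each activation $g_i$ (ReLu, LeakyReLu, tanh, etc.) is coordinate-wise monotone, so $\{z\}\subseteq g_i^{-1}(g_i(z))$ for every $z$; combined with surjectivity of each linear map, this yields an inclusion of fibers that can be compared to those of a surjective linear projection. A preparatory observation is that any two surjective linear maps $\mathbb{R}^n\to\mathbb{R}^m$ have $(n-m)$-dimensional kernels related by an orthogonal rotation, which preserves both $B_R^n$ and Lebesgue measure; hence for any such $L$ the function $x\mapsto\Vol_n(L^{-1}(L(x))\oplus\eps)$ has the same distribution under uniform sampling on $B_R^n$ as $x\mapsto\Vol_n(\Proj^{-1}(\Proj(x))\oplus\eps)$. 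This makes the right-hand side of the proposition invariant under the choice of surjective linear $\Proj$.

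For the base case $k=1$, write $f = g_1\circ L_1$ with $L_1:\mathbb{R}^n\to\mathbb{R}^m$ surjective linear. Monotonicity of $g_1$ gives $L_1^{-1}(L_1(x))\subseteq f^{-1}(f(x))$ pointwise, hence $\Vol_n(f^{-1}(f(x))\oplus\eps)\ge \Vol_n(L_1^{-1}(L_1(x))\oplus\eps)$, and the claim follows from the preparatory observation. For the inductive step, decompose $f=h\circ L_1$ with $h:\mathbb{R}^{k_1}\to\mathbb{R}^m$ a $(k-1)$-layer network. After an orthogonal change of coordinates on $\mathbb{R}^n$ and absorbing the invertible factor of $L_1$ into the first linear layer of $h$, we may assume $L_1=P$ is an orthogonal projection sending $B_R^n$ onto $B_R^{k_1}$. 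Then $f^{-1}(f(x)) = P^{-1}(h^{-1}(h(P(x))))$, and the identity $P^{-1}(A)\oplus\eps = P^{-1}(A\oplus\eps)$ together with a direct coarea calculation yields
\[
\Vol_n\!\left((P^{-1}(A)\oplus\eps)\cap B_R^n\right) = \int_{(A\oplus\eps)\cap B_R^{k_1}} c_{n-k_1}\,(R^2-\|y\|^2)^{(n-k_1)/2}\,dy,
\]
where $c_{n-k_1}=\Vol_{n-k_1}(B_1^{n-k_1})$. This is a monotone set functional; applying the induction hypothesis to $h$ and a surjective linear $\Pi:\mathbb{R}^{k_1}\to\mathbb{R}^m$ reduces the claim to the case of the surjective linear map $\Pi\circ P:\mathbb{R}^n\to\mathbb{R}^m$, which is handled by the base case.

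The hardest step will be upgrading the induction hypothesis so that it controls the distribution of the weighted functional above, not merely the distribution of $\Vol_{k_1}(\cdot\oplus\eps)$. The cleanest remedy is to strengthen the statement to a stochastic-dominance bound between the random sets $h^{-1}(h(y))\oplus\eps$ and $\Pi^{-1}(\Pi(y))\oplus\eps$ under a coupling that preserves set inclusions; then any monotone set functional, including the weighted integral above, automatically respects the dominance. Rigorously constructing this coupling and verifying its compatibility under both activation flattening and orthogonal-projection pullback is the technical crux of the argument; everything else amounts to bookkeeping.
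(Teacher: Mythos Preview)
Your preparatory observation and base case are correct and match the paper's treatment (its Proposition~4 plus one application of fiber inclusion). The divergence is in the inductive step, where you make the problem much harder than necessary. The paper's argument rests on a single elementary lemma (its Lemma~3): for \emph{any} maps $\phi: B_R^n \to X$ and $\psi: X \to Y$, the pointwise fiber inclusion $\phi^{-1}(\phi(x)) \subseteq (\psi\circ\phi)^{-1}((\psi\circ\phi)(x))$ gives, for every $\tau$,
\[
\bigl\{x : \Vol_n\bigl(\phi^{-1}(\phi(x))\oplus\eps\bigr) \ge \tau\bigr\}\ \subseteq\ \bigl\{x : \Vol_n\bigl((\psi\circ\phi)^{-1}((\psi\circ\phi)(x))\oplus\eps\bigr) \ge \tau\bigr\}.
\]
This is precisely the containment you already invoke at $k=1$, just isolated once and for all. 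The paper then writes $f_{k+1} = (g_{k+1}\circ L_{k+1}) \circ f_k$ and applies the lemma with $\phi = f_k$: the domain stays $B_R^n$, no coarea or weighted functional appears, and the inductive step is one line from the hypothesis on $f_k$.

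By peeling off the \emph{inner} factor $f = h \circ P$ and trying to apply the hypothesis to $h$ on $B_R^{k_1}$, you are forced to transport the comparison through $P$, which introduces the weight $w(y)=c_{n-k_1}(R^2-\|y\|^2)^{(n-k_1)/2}$ and the proposed set-inclusion coupling. Even granting a coupling $(y,y')$ with uniform marginals under which $h^{-1}(h(y))\oplus\eps \supseteq \Pi^{-1}(\Pi(y'))\oplus\eps$ a.s., you obtain only Lebesgue-measure stochastic ordering of the functional values; but the quantity you must compare is the $w$-\emph{weighted} size of the super-level sets (the outer integral over $\{y:\ldots\ge\tau\}$ also carries $w$ once pushed to $B_R^{k_1}$). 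Making the coupling simultaneously $w$-measure-preserving and inclusion-preserving, and propagating that through the induction, is a genuine additional burden you have not discharged --- and it is entirely avoided by factoring from the outside as the paper does.
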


The proof of \cref{thm:waistinequality_ball_avg} is postponed to \cref{appsubsec:proofm1}, while the proof of \cref{prop:deep_net} is postponed to \cref{appsubsec:proofdeepnet}.
We are now ready to derive a bound on DR maps' \textbf{average-case performance} over the domain based on \cref{thm:waistinequality_ball_avg} and \cref{prop:deep_net}. 
\begin{proof}[Proof of \cref{thm:precision_bound_avg}]
	We only present the proof when $f: B_{R}^{n} \to \mathbb{R}^m$ is a $k$-layer neural network map with Lipschitz constant $L$ by \cref{prop:deep_net}. The other case can be proved similarly by \cref{thm:waistinequality_ball_avg}.
	
	Given any $y\in \Proj(B_R^{n})$, pick $\tau = \Vol_{n}\left(\Proj^{-1}(y) \oplus \eps\right)$. By \cref{prop:deep_net} for all $\epsilon > 0$,
	\begin{align}
	\label{eqn:vol_n_version}
	&\Vol_n \left(\left\{x\in B_R^{n}\,:\, \Vol_n\left( f^{-1}\left(f(x)\right) \oplus \eps \right) 
	\ge
	\Vol_{n}\left(\Proj^{-1}(y) \oplus \eps\right)  \right \} \right) \\
	\ge 
	& \Vol_{n} \left(
	\left\{ x \in B^{n}_R: 
	\Vol_{n}\left(\Proj^{-1}\left( \Proj(x) \right) \oplus \epsilon\right) \geq \Vol_{n}\left(\Proj^{-1}(y) \oplus \eps\right)
	\right\} \right). \nonumber
	\end{align}
	Since $\Proj$ is a linear map, we have
				\begin{align*}
	& \Vol_{n} \left(
	\left\{ x \in B^{n}_R: 
	\Vol_{n}\left(\Proj^{-1}\left( \Proj(x) \right) \oplus \epsilon \right) 
	\geq 
	\Vol_{n}\left(\Proj^{-1}(y) \oplus \epsilon \right)
	\right\} \right) \\
	& = \Vol_{n} \left(
	\left\{ x \in B^{n}_R: 
	\Vol_{n-m}\left(\Proj^{-1}\left( \Proj(x) \right) \right) 
	\geq 
	\Vol_{n-m}\left(\Proj^{-1}(y) \right)
	\right\} \right).
	\end{align*}
	Further note that $\Proj^{-1}(y)$ is an $n-m$ ball with radius $r(y) = \sqrt{R^2 - \norm{y}^2}$. Thus,
	\begin{align*}
	& \Vol_{n} \left(
	\left\{ x \in B^{n}_R: 
	\Vol_{n-m}\left(\Proj^{-1}\left( \Proj(x) \right) \right) 
	\geq 
	\Vol_{n-m}\left(\Proj^{-1}(y) \right)
	\right\} \right) \\
	& = \int_{ B^m_{ \norm{y} } } \Vol_{n-m} (\Proj^{-1}(t)) \text{d}t.
	\end{align*}
	Therefore, 
	\begin{align}
	\label{eq:largefiberprob}
		\Vol_{n} \left( \left\{x\in B_R^{n}\,:\, \Vol_{n}\left( f^{-1}\left(f(x)\right) \oplus \epsilon \right) 
		\ge
		\Vol_{n}\left( \Proj^{-1}(y)  \oplus \epsilon \right)  \right \} \right)
		 \ge 
		 \int_{ B^m_{ \norm{y} } } \Vol_{n-m} (\Proj^{-1}(t)) \text{d}t. 
		 	\end{align}
	Lastly, pick $y$ such that $ \norm{y} = \sqrt{ R^2 - r_U^2 - \delta^2 } $, 
	so $ \Proj^{-1}(y) $ has radius $ \sqrt{r_U^2 + \delta^2} $.
	Let $\mathcal{E}$ denote the event
$
	\Vol_{n}\left( f^{-1}\left(f(x)\right) \oplus \epsilon \right) 
	\ge
	\Vol_{n}\left(B^{n-m}_{\sqrt{r_U^2 + \delta^2}} \oplus \epsilon \right)
$, thus
\[
\Prob(\mathcal{E}) \ge \frac{ \int_{ B^m_{ \sqrt{ R^2 - u^2 - \delta^2 } } } \Vol_{n-m} \Proj^{-1}(t) \text{d}t }{ \Vol_n(B^n_R) }.
\]

	The remaining proof is almost identical to the proof of \cref{thm:precision_bound_ball}.  
	Under the event $\mathcal{E}$, 
	\begin{align}
	\label{eq:largefiber_avg}
	\Vol_{n}(f^{-1}(V)) 
	& = \Vol_{n}\left(f^{-1}(y \oplus r_{V})\right) \nonumber\\
	& \ge \Vol_{n}\left(f^{-1}(y) \oplus (r_{V}/L) \right) \nonumber\\
	& \ge \Vol_{n-m}(B^{n-m}_1) \Vol_{m}(B^{m}_1) ( \sqrt{r_U^2 + \delta^2} )^{n-m} (r_{V}/L)^m \nonumber\\
	& = \frac{\pi^{(n-m)/2}}{\Gamma(\frac{n-m}{2} + 1)} \frac{\pi^{m/2}}{\Gamma(\frac{m}{2} + 1)} \sqrt{r_U^2 + \delta^2}^{n-m} (r_{V}/L)^{m} \,, 
	\end{align}
	where the first inequality is due to \cref{prop:lowerboundballvol}, 
	the second inequality is due to the event $\mathcal{E}$.
	Combining the volume calculation on $U$, 
	\begin{align*}
	\fprec(U, V) 
	& \le \frac{ \frac{\pi^{n/2}}{\Gamma(\frac{n}{2} + 1)} r_{U}^{n} }{ \frac{\pi^{n-m/2}}{\Gamma(\frac{n-m}{2} + 1)} \frac{\pi^{m/2}}{\Gamma(\frac{m}{2} + 1)} \sqrt{r_U^2 + \delta^2}^{n-m} (r_{V}/L )^{m}} \\
	& \le  \frac{\Gamma(\frac{n-m}{2} + 1) \Gamma(\frac{m}{2} + 1)}{\Gamma(\frac{n}{2} + 1)} (\frac{r_U}{\sqrt{r_U^2 + \delta^2}})^{n-m} \frac{r_U^{m}}{(r_{V}/L)^m}.
	\end{align*} 
	
\end{proof}

\subsection{Proof of \cref{thm:waistinequality_ball_avg}}
\label{appsubsec:proofm1}
The proof uses the following average waist inequality for spheres.
Let $P: S^{n+1}_R \longrightarrow B^n_R $ be the orthogonal projection, $\sigma_R $ and $\nu_R$ denote the corresponding Hausdorff measures on $S^{n+1}_R $ and $B^n_R$. 
Further, let $\Proj: S_R^{n+1} \rightarrow \real $ be the restriction to $S^{n+1}_R$ of a surjective linear map $ \widehat{\Proj}: \real^{n+2} \rightarrow \real$.
\begin{theorem}[Average Waist Inequality for Spheres \citepapp{alpert2015family}]
	\label{thm:waistinequality_sphere}
	Let $f$ be a continuous map from $S^{n+1}_R$ to $\real$, 
	then for all $y \in \Proj(S^{n+1}_R) $, we have: 
	\begin{align*}
	& \Vol_{n+1} 
	\{ x \in S^{n+1}_R: 
	\Vol_{n+1}(f^{-1}( f(x) ) \oplus \epsilon) 
	\geq 
	\Vol_{n+1}(\Proj^{-1}(y) \oplus \epsilon) 
	\} \\
	& \geq \\
	& \Vol_{n+1}
	\{ x \in S^{n+1}_R: 
	\Vol_{n+1}(\Proj^{-1}( \Proj(x) ) \oplus \epsilon) 
	\geq 
	\Vol_{n+1}(\Proj^{-1}(y) \oplus \epsilon) 
	\},
	\end{align*}
	where  
	\begin{equation*}
	\Vol_{n+1}\left(\Proj^{-1}(y)\oplus \eps \right) 
	=  
	2\pi \Vol_{n} \left(S_{ R_{\Proj^{-1}(y)} }^{n}\right) \Vol_{1} \left(B^{1}_1\right) \left(\ p^{1}(\eps) \right)\,,
	\end{equation*}
	$p^{1}(\eps)$ is $\eps \left(1 + o(1)\right)$, 
	i.e. 
	$ \displaystyle{ \lim_{\eps \to 0 } \frac{p^{1}(\eps)}{\eps} = 1 } $,
	and $ f^{-1}(y) \oplus \eps$ denotes the set of points $ x \in S^{n+1}_R$ such that $ d(x, f^{-1}(y)) < \eps$,
	$S_R^{n}$ is the $n$-dimensional sphere of radius $R$, 
	and $S_{ R_{\Proj^{-1}(y)} }^{n}$ is the sphere with radius $R_{\Proj^{-1}(y)}$ depending on where $y$ is taken in $f(S^{n+1}_R)$,
	i.e.
	$R_{\Proj^{-1}(y)}^2 = R^2 - y^2 $.
\end{theorem}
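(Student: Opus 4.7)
The plan is to compare level-set $\eps$-thickenings on $S^{n+1}_R$ via spherical symmetrization and L\'evy's isoperimetric inequality. Since the fibers $\Proj^{-1}(t)$ are concentric latitude $n$-spheres whose $\eps$-neighborhoods decrease monotonically in volume as $|t|$ grows, the right-hand side equals the volume of the equatorial band $\{x : |\Proj(x)| \leq |y|\}$. Setting $\tau = \Vol_{n+1}(\Proj^{-1}(y) \oplus \eps)$, the theorem reduces to the distributional inequality
\[
\Vol_{n+1}\{x : \Vol_{n+1}(f^{-1}(f(x)) \oplus \eps) \geq \tau\} \geq \Vol_{n+1}\{x : |\Proj(x)| \leq |y|\}.
\]

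I would then introduce the spherical symmetrization $f^{\sharp}$ of $f$, whose sublevel sets $\{f^{\sharp} \leq t\}$ are the spherical caps centered at a fixed pole with the same volume as $\{f \leq t\}$. By construction $f^{\sharp}$ has the same distribution as $f$, and its fibers are exactly the latitude $n$-spheres, so that the fiber-thickness function of $f^{\sharp}$ coincides with that of $\Proj$ after a relabeling of $y$. The core claim would be the pointwise comparison
\[
\Vol_{n+1}(f^{-1}(t) \oplus \eps) \geq \Vol_{n+1}((f^{\sharp})^{-1}(t) \oplus \eps)
\]
for almost every $t$; once established, equimeasurability of $f$ and $f^{\sharp}$ converts it into the target distributional inequality by a change of variables in $t$.

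For the pointwise comparison, I would first reduce to smooth Morse $f$ by uniform approximation and apply the spherical Brunn--Minkowski (equivalently L\'evy's) inequality: among all subsets of $S^{n+1}_R$ of fixed volume, spherical caps minimize the $\eps$-neighborhood volume. Thus the sublevel sets $A_t := \{f \leq t\}$ satisfy $\Vol(A_t \oplus \eps) \geq \Vol(A_t^{\sharp} \oplus \eps)$. A two-sided argument applied to the shell $A_{t+\delta} \setminus A_{t-\delta}$ in the limit $\delta \to 0$, combined with the tubular-neighborhood formula for the smooth hypersurface $f^{-1}(t)$, converts this sublevel-set comparison into the desired thickening inequality on the level sets. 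Passing back to continuous $f$ uses continuity of $A \mapsto \Vol(A \oplus \eps)$ under Hausdorff convergence of closed sets.

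The main obstacle is that for merely continuous $f$, the level sets $f^{-1}(t)$ can be geometrically wild --- with no a priori control on their Hausdorff dimension --- so the coarea and tubular-neighborhood formulas are not literally available, and the naive symmetrization $f^{\sharp}$ need not be well behaved. Alpert's approach replaces symmetrization with Gromov's needle decomposition: $S^{n+1}_R$ is partitioned (up to null sets) into one-dimensional geodesic arcs carrying log-concave densities, and on each arc the fiber-thickness inequality reduces to a one-dimensional Brunn--Minkowski comparison that can be verified directly. Integrating over the family of needles then recovers the global inequality with no regularity assumption on $f$, and I would follow this localization route to sidestep the regularity issues inherent in the symmetrization argument.
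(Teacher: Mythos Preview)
The paper does not prove this statement; it is quoted verbatim with attribution to Alpert (2015) and then used as a black box in the proof of \cref{thm:waistinequality_ball_avg}. There is therefore no in-paper argument to compare against, and your proposal is really a sketch of how the cited result itself is established.

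On the substance: your symmetrization outline is reasonable heuristics for smooth Morse $f$, but the step where you pass from the sublevel-set isoperimetry $\Vol(A_t\oplus\eps)\ge\Vol(A_t^{\sharp}\oplus\eps)$ to the level-set thickening inequality via the shell $A_{t+\delta}\setminus A_{t-\delta}$ is not quite right as stated. Sending $\delta\to 0$ in the shell volume gives a lower bound on the \emph{perimeter} $\Vol_n(f^{-1}(t))$, not on $\Vol_{n+1}(f^{-1}(t)\oplus\eps)$ for fixed $\eps$; converting one to the other requires a tube formula with curvature corrections, and those corrections need not have the right sign to preserve the inequality. You recognize this fragility yourself and correctly identify Gromov's convex-partition (needle) localization as the actual mechanism in Alpert's proof: decompose the sphere into geodesic arcs with log-concave conditional densities, reduce to a one-dimensional comparison on each needle, and integrate. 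That final paragraph is an accurate high-level summary of the cited argument, so as a plan your proposal is sound --- just treat the symmetrization portion as motivation rather than as an independent rigorous route.
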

We are going to adapt the proof technique of theorem 1 from \citeapp{akopyan2017tight},
by replacing the existential waist inequality \eqref{thm:waistinequality} with its average version - theorem \ref{thm:waistinequality_sphere}. 
We need the following lemma: 

\begin{lem}[ Orthogonal Projection e.g. \citet{akopyan2017tight} ]
	\label{lem:ortho_projection} 
	Let $P: S^{n+1}_R \longrightarrow B^n_R $ be the orthogonal projection. 
	Then $P$ is $1$ - Lipschitz and 
	$P_{\#} \sigma_R =  2 \pi R \nu_R $.
	In other words, 
	$P$ sends the uniform Hausdorff measure $\sigma$ in $S^{n+1}_R$ to the uniform Lebesgue measure $\nu_n$ in $B^n_R$ up to constant $2 \pi R$. 
\end{lem}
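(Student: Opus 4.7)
\medskip

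\noindent\textbf{Proof plan for \cref{lem:ortho_projection}.} The plan is to handle the two assertions separately: the $1$-Lipschitz property follows from a general comparison between extrinsic and intrinsic distances, while the pushforward identity is a direct computation in explicit coordinates on the sphere, where the ``$2\pi R$'' factor will appear naturally as the circumference of the fibre circles together with the sphere's geometric scaling.

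First, I would fix coordinates. Write $S^{n+1}_R \subset \real^{n+2}$ as the set of points $(x_1,\ldots,x_n,u,v)$ with $\sum x_i^2 + u^2 + v^2 = R^2$, and take $P(x_1,\ldots,x_n,u,v) = (x_1,\ldots,x_n) \in B_R^n$ (dropping the last two coordinates). With this choice the fibre $P^{-1}(x)$ is a Euclidean circle of radius $r(x) := \sqrt{R^2 - \|x\|^2}$ in the affine $2$-plane $\{x\}\times \real^2$.

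For the Lipschitz bound, I would use the standard inequality that on a Riemannian submanifold of Euclidean space the intrinsic (geodesic) distance dominates the ambient (chord) distance. Since the ambient orthogonal projection onto an affine subspace is $1$-Lipschitz in the Euclidean metric, for any $p,q \in S^{n+1}_R$,
\[
d_{B_R^n}(P(p),P(q)) \;=\; \|P(p)-P(q)\|_{\real^n} \;\le\; \|p-q\|_{\real^{n+2}} \;\le\; d_{S^{n+1}_R}(p,q),
\]
which gives the first claim with no further work.

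For the pushforward, I would parametrize $S^{n+1}_R$ away from the equator $\{r(x)=0\}$ (a set of Hausdorff dimension $n-1$, hence $\sigma_R$-null, so it can be ignored) by the chart $\Phi: B_R^n \times [0,2\pi) \to S^{n+1}_R$,
\[
\Phi(x,\theta) \;=\; \bigl(x,\; r(x)\cos\theta,\; r(x)\sin\theta\bigr).
\]
A direct computation of the induced metric gives $\langle \partial_\theta \Phi,\partial_i \Phi\rangle = 0$, $\langle \partial_\theta \Phi,\partial_\theta \Phi\rangle = r(x)^2$, and $\langle \partial_i \Phi,\partial_j \Phi\rangle = \delta_{ij} + x_i x_j/r(x)^2$. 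By the matrix determinant lemma the $n\times n$ block has determinant $1 + \|x\|^2/r(x)^2 = R^2/r(x)^2$, so $\det g = R^2$ and $\sqrt{\det g} = R$. Consequently $d\sigma_R = R\,dx\,d\theta$, and for any Borel $A \subset B_R^n$,
\[
\sigma_R(P^{-1}(A)) \;=\; \int_A \!\int_0^{2\pi} R \,d\theta\, dx \;=\; 2\pi R\,\nu_R(A),
\]
which is exactly $P_{\#}\sigma_R = 2\pi R\, \nu_R$.

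The only subtlety — and the one step to be careful about — is that the chart $\Phi$ degenerates along the equator $\{\|x\|=R\}$ of $S^{n+1}_R$; but this set has $\sigma_R$-measure zero (it is a smooth $(n-1)$-dimensional submanifold inside the $(n+1)$-dimensional sphere), so it can be discarded in the volume integrals and does not affect the pushforward identity. Everything else is routine.
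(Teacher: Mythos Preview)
Your proof is correct and complete. The paper itself does not provide a proof of \cref{lem:ortho_projection}; it is merely stated with a citation to \citet{akopyan2017tight} as a known fact, and then used as a black box in the proof of \cref{thm:waistinequality_ball_avg}. So there is no ``paper's own proof'' to compare against --- you have supplied an argument where the paper gives none.

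For what it is worth, the computation you give is the standard one, and both parts are handled cleanly: the $1$-Lipschitz claim via the chord--arc inequality composed with the Euclidean projection bound, and the pushforward via the explicit chart $\Phi(x,\theta)=(x,r(x)\cos\theta,r(x)\sin\theta)$, whose metric determinant you correctly compute as $R^2$ using the matrix determinant lemma on the rank-one perturbation $I+xx^T/r(x)^2$. Your remark that the degenerate set $\{\|x\|=R\}$ is $\sigma_R$-null (being an $(n-1)$-dimensional submanifold of an $(n+1)$-dimensional sphere) is the right way to dispose of the chart's singular locus.
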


\begin{proof}[Proof of \cref{thm:waistinequality_ball_avg}] 
	Given a map $f: B^n_R \longrightarrow \mathbb{R} $,
	consider $\hat{f} = f \circ P: S^{n+1}_R \rightarrow \real $, where $P$ is the orthogonal projection.
	By \cref{lem:ortho_projection}, $P$ is $1$-Lipschitz, thus for any $y \in \real$,
	\begin{equation}
	\label{eq:theorem9eq1}
	P^{-1}\left(f^{-1}(y)\right) \oplus \eps \subset P^{-1}\left(f^{-1}(y) \oplus \eps\right) \, \Rightarrow \Vol_{n+1}\left(\hat{f}^{-1}(y) \oplus \eps \right) \le \Vol_{n+1}\left( P^{-1}\left(f^{-1}(y) \oplus \eps\right) \right).
	\end{equation}
	Further, since $P_{\#} \sigma_R =  2 \pi R \nu_R $,
	\begin{equation}
	\label{eq:theorem9eq2}
	\Vol_{n+1}\left( P^{-1}\left(f^{-1}(y) \oplus \eps\right) \right) = 2\pi R\Vol_n\left( f^{-1}(y) \oplus \eps \right).
	\end{equation}
	Combining \cref{eq:theorem9eq1,eq:theorem9eq2}, for $\tau \in \real$,
	\begin{align}
	\label{eq:theorem9eq3}
	& \left\{ x\in S_R^{n+1}\,:\, \hat{f}(x) = y,  \Vol_{n+1}\left(\hat{f}^{-1}(y) \oplus \eps \right)\ge \tau \right\} \nonumber\\
	& \quad \subset  
	\left\{x\in S_R^{n+1}\,:\, \hat{f}(x) = y, \Vol_n\left( f^{-1}(y) \oplus \eps \right) \ge \frac{\tau}{2\pi R} \right\}.
	\end{align}
	Similarly, by $P_{\#} \sigma_R =  2 \pi R \nu_R $,
	\begin{align}
	\label{eq:theorem9eq4}
	& \Vol_{n+1} \left(\left\{x\in S_R^{n+1}\,:\, \hat{f}(x) = y, \Vol_n\left( f^{-1}(y) \oplus \eps \right) \ge \frac{\tau}{2\pi R} \right\} \right)\nonumber\\
	& \quad = 2\pi R \Vol_n \left(\left\{z\in B_R^{n}\,:\, f(z) = y, \Vol_n\left( f^{-1}(y) \oplus \eps \right) \ge \frac{\tau}{2\pi R}\right \} \right).
	\end{align}
	Thus by combining \cref{eq:theorem9eq3,eq:theorem9eq4}, we have 
	\begin{align*}
	& \Vol_{n+1} \left(\left\{ x\in S_R^{n+1}\,:\, \hat{f}(x) = y,  \Vol_{n+1}\left(\hat{f}^{-1}(y) \oplus \eps \right)\ge \tau \right\}  \right) \\
	\le &  \, 2\pi R \Vol_n \left(\left\{z\in B_R^{n}\,:\, f(z) = y, \Vol_n\left( f^{-1}(y) \oplus \eps \right) \ge \frac{\tau}{2\pi R}\right \} \right)
	\end{align*}
	Finally, note that $\hat{f}$ meets the condition in theorem \ref{thm:waistinequality_sphere}. 
	Thus for all $y \in \Proj(S^{n+1}_R)$: 
	\begin{align*}
	&\Vol_{n+1} \left(
	\left\{ x \in S^{n+1}_R: 
	\Vol_{n+1}\left(\Proj^{-1}\left( \Proj(x) \right) \oplus \epsilon\right) 
	\geq 
	\Vol_{n+1}\left(\Proj^{-1}(y) \oplus \eps\right) 
	\right\} \right) \\
	\le & \Vol_{n+1} \left(
	\left\{ x \in S^{n+1}_R: 
	\Vol_{n+1}\left( \hat{f}^{-1} \left(\hat{f}(x) \right)\oplus \eps\right) 
	\geq 
	\Vol_{n+1}\left(\Proj^{-1}(y) \oplus \eps\right) 
	\right\} \right) \\
	\le & 2\pi R \Vol_n \left(\left\{z\in B_R^{n}\,:\, \Vol_n\left( f^{-1}\left(f(z)\right) \oplus \eps \right) \ge \frac{1}{2\pi R} \Vol_{n+1}\left(\Proj^{-1}(y) \oplus \eps\right)  \right \} \right).
	\end{align*}
	
\end{proof}

\subsection{Proof of \cref{prop:deep_net}}
\label{appsubsec:proofdeepnet}
We first prove that \cref{prop:deep_net} holds for any surjective linear map.
\begin{prop}
	\label{prp:pca}
	Let $f$ be any surjective linear map (PCA, linear neural networks) from $B^n_R$ to $\mathbb{R}^m$, and $\Proj$ be an arbitrary surjective linear projection from $B_R^{n}$ to $\mathbb{R}^m$. 
		Then for any $\tau$ the following inequality holds,
	\begin{align*}
	&\Vol_n \left(\left\{x\in B_R^{n}\,:\, \Vol_n\left( f^{-1}\left(f(x)\right) \oplus \eps \right) 
	\ge
	\tau  \right \} \right) \\
	\ge 
	& \Vol_{n} \left(
	\left\{ x \in B^{n}_R: 
	\Vol_{n}\left(\Proj^{-1}\left( \Proj(x) \right) \oplus \epsilon\right) \geq \tau
	\right\} \right).
	\end{align*}
\end{prop}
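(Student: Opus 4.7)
The plan is to show that both sides of the claimed inequality actually equal the same quantity depending only on $R, \eps, \tau, n, m$, thereby establishing an equality stronger than the proposition's $\ge$. The key structural observation is that any surjective linear map $g:\real^n\to\real^m$ has its fiber geometry inside $B_R^n$ completely determined, up to an orthogonal change of coordinates on the domain, by the standard coordinate projection $\Proj_0:\real^n\to\real^m$ onto the first $m$ coordinates.

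The first step is a singular value decomposition of $g$. Writing $g = U\Sigma V^T$ with $U$ orthogonal and $\Sigma = [D \mid 0_{m\times(n-m)}]$ for an $m\times m$ diagonal $D$ (all singular values strictly positive by surjectivity), I factor $\Sigma = D\cdot\Proj_0$ and set $R_g = V$, $L_g = UD$. This gives
\[
g = L_g \circ \Proj_0 \circ R_g^{-1},
\]
where $R_g$ is orthogonal and $L_g$ is a linear automorphism of $\real^m$.

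The second step exploits this factorization. Since post-composition with the isomorphism $L_g$ merely relabels points in the codomain, it does not alter fibers, so
\[
g^{-1}(g(x)) = R_g\bigl(\Proj_0^{-1}(\Proj_0(R_g^{-1}(x)))\bigr).
\]
Because $R_g$ is an isometry, Minkowski addition commutes with it ($R_g(A)\oplus\eps = R_g(A\oplus\eps)$), giving $\Vol_n(g^{-1}(g(x))\oplus\eps) = \Vol_n(\Proj_0^{-1}(\Proj_0(R_g^{-1}(x)))\oplus\eps)$. A volume-preserving change of variables $x = R_g(z)$, using $R_g(B_R^n)=B_R^n$, then yields
\[
\Vol_n\!\left\{x\in B_R^n : \Vol_n(g^{-1}(g(x))\oplus\eps) \geq \tau\right\} = \Vol_n\!\left\{z\in B_R^n : \Vol_n(\Proj_0^{-1}(\Proj_0(z))\oplus\eps) \geq \tau\right\}.
\]
The right-hand side is manifestly independent of $g$, so applying the identity once with $g=f$ and once with $g=\Proj$ shows both sides of the proposition equal the same quantity.

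I do not anticipate a substantive obstacle; the SVD bookkeeping is essentially the only tool, and its two ingredients---that the automorphism $L_g$ on the codomain does not repartition $B_R^n$ into fibers, and that the orthogonal $R_g$ simultaneously preserves $B_R^n$ and commutes with the $\eps$-thickening operation---are elementary. The underlying principle, that all surjective linear maps from $B_R^n$ to $\real^m$ share a common fiber geometry up to isometry, is also what makes the lifting from the linear case to the feedforward network case in \cref{prop:deep_net} natural: one processes each linear layer in isolation using this equivalence.
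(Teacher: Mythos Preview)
Your proposal is correct and follows essentially the same approach as the paper: both arguments use the singular value decomposition to factor an arbitrary surjective linear map as (codomain automorphism)$\circ$(standard projection)$\circ$(domain orthogonal), observe that the codomain automorphism leaves fibers unchanged while the domain orthogonal preserves both $B_R^n$ and the $\eps$-thickening, and conclude that the fiber-volume distribution is identical for every surjective linear map---hence equality rather than merely $\ge$. The only difference is presentational: the paper walks through the cancellations $U_m^{-1}\circ U_m$ and $\Sigma^{-1}\circ\Sigma$ explicitly in a chain of set equalities, whereas you package the same content as the single factorization $g = L_g\circ\Proj_0\circ R_g^{-1}$ and a change of variables.
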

\begin{proof}
	By the singular value decomposition,
	any linear dimension reduction map $f$ can be decomposed as a composition or unitary operators ($U_m$ and $V_n$), 
	signed dialation of full rank
	($\Sigma$),
	and projection operator of rank $m$
	($\widehat{\Proj}$), 
	where $\widehat{\Proj}$ linearly projects from $\mathbb{R}^n$ to $\mathbb{R}^m$ 
	(or more commonly $\Sigma \circ \widehat{\Proj} $ is called rectangular diagonal matrix map):  
	$f = U_m \circ \Sigma \circ \widehat{\Proj} \circ V^{*}_n $. 
	The set
	\begin{align*}
	&\left\{x\in B_R^{n}\,:\, \Vol_n\left( f^{-1}\left(f(x)\right) \oplus \eps \right)
	\ge
	\tau \right \} \\  
	=& 
	\left\{x\in B_R^{n}\,:\, \Vol_n\left( 
	(U_m \circ \Sigma \circ \widehat{\Proj} \circ V^{*}_n)^{-1} \left( U_m \circ \Sigma \circ \widehat{\Proj} \circ V^{*}_n (x)\right) \oplus \eps \right)
	\ge
	\tau \right \} \\
	= & 
	\left\{x\in B_R^{n}\,:\, \Vol_n\left( 
	(V^{*}_n)^{-1} \circ \widehat{\Proj}^{-1} \circ \Sigma^{-1} \circ U_m^{-1} \circ U_m \circ \Sigma \left( \widehat{\Proj} \circ V^{*}_n (x)\right) \oplus \eps \right)
	\ge
	\tau \right \} \\
	= &
	\left\{x\in B_R^{n}\,:\, \Vol_n\left( 
	(V^{*}_n)^{-1} \circ \widehat{\Proj}^{-1} \circ \left( \widehat{\Proj} \circ V^{*}_n (x)\right) \oplus \eps \right)
	\ge
	\tau \right \} \\
	= &
	\left\{x\in B_R^{n}\,:\, \Vol_n\left( 
	V_n \circ \widehat{\Proj}^{-1} \circ \left( \widehat{\Proj} \circ V^{*}_n (x)\right) \oplus \eps \right)
	\ge
	\tau \right \} \\ 
	= &
	\left\{x\in B_R^{n}\,:\, \Vol_n\left( 
	V_n \circ \widehat{\Proj}^{-1} \circ \left( \widehat{\Proj} (x)\right) \oplus \eps \right)
	\ge
	\tau \right \}  \\
	= &
	\left\{x\in B_R^{n}\,:\, \Vol_n\left( 
	\widehat{\Proj}^{-1} \circ \widehat{\Proj} (x) \oplus \eps \right)
	\ge
	\tau \right \}, 
	\end{align*} 
	where the last two equalities follow because unitary operator $V^{*}_n$ and $V_n$ don't affect volumes because they are linear isometries. 
	We note this shows the distribution of fiber volume is the same for any surjective linear map. 
	Finally, note that by symmetry, 	
	\[
	\left\{x\in B_R^{n}\,:\, \Vol_n\left( 
	\widehat{\Proj}^{-1} \circ \widehat{\Proj} (x) \oplus \eps \right)
	\ge
	\tau \right \}  
	=
	\left\{x\in B_R^{n}\,:\, \Vol_n\left( 
	\Proj^{-1} \circ \Proj (x) \oplus \eps \right)
	\ge
	\tau \right \} . 
	\]
									
\end{proof}

\begin{lem}[Monotonicity of Fiber Volume under Compositions]
	\label{lem:composition_fiber}
	Let $f: B^n_R \longrightarrow X$ and $g: X \longrightarrow \mathbb{R}^m $ be any maps for some set $X$. 
	Then for any $\tau$ we have the following inequality:
	\begin{align*}
	&\Vol_n \left(\left\{x\in B_R^{n}\,:\, \Vol_n\left( (f \circ g) ^{-1}\left(f \circ g\right(x)) \oplus \eps \right) 
	\ge
	\tau  \right \} \right) \\
	\ge 
	& \Vol_{n} \left(
	\left\{ x \in B^{n}_R: 
	\Vol_{n}\left(f^{-1}\left( f(x) \right) \oplus \epsilon\right) \geq \tau
	\right\} \right).
	\end{align*}
\end{lem}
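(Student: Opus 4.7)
The statement as written involves a composition $f \circ g$ that is not well-defined under the given typings ($f: B_R^n \to X$, $g: X \to \mathbb{R}^m$), so I will read it as referring to $g \circ f : B_R^n \to \mathbb{R}^m$, which is the only sensible composition in this context (and which matches the intended use inside the proof of \cref{prop:deep_net}, where we post-compose earlier layers by later ones).

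The plan is to reduce the inequality to a pointwise fiber-containment and then invoke monotonicity of $\Vol_n$. The central observation is that post-composition only coarsens fibers: for any $x \in B_R^n$, if $y \in f^{-1}(f(x))$ then $f(y) = f(x)$, so $g(f(y)) = g(f(x))$, which gives $y \in (g \circ f)^{-1}(g \circ f(x))$. Hence
\[
f^{-1}(f(x)) \;\subset\; (g \circ f)^{-1}\bigl(g \circ f(x)\bigr).
\]
Taking $\epsilon$-neighborhoods preserves inclusions (since the $\epsilon$-neighborhood of a set is a monotone operation in the set), so
\[
f^{-1}(f(x)) \oplus \epsilon \;\subset\; (g \circ f)^{-1}\bigl(g \circ f(x)\bigr) \oplus \epsilon,
\]
and monotonicity of $\Vol_n$ yields the pointwise domination
\[
\Vol_n\!\left( (g \circ f)^{-1}\bigl(g \circ f(x)\bigr) \oplus \epsilon \right) \;\ge\; \Vol_n\!\left( f^{-1}(f(x)) \oplus \epsilon \right).
\]

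With this pointwise inequality, the rest is a standard superlevel-set argument: for every $\tau$, any $x$ that lies in the superlevel set of the right-hand side function (at height $\tau$) automatically lies in the superlevel set of the left-hand side function. Hence
\[
\bigl\{ x \in B_R^n : \Vol_n(f^{-1}(f(x)) \oplus \epsilon) \ge \tau \bigr\} \;\subset\; \bigl\{ x \in B_R^n : \Vol_n((g \circ f)^{-1}(g \circ f(x)) \oplus \epsilon) \ge \tau \bigr\},
\]
and applying $\Vol_n$ to both sides, using monotonicity of $\Vol_n$ once more, delivers the claimed inequality.

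There is essentially no obstacle here: the lemma is nearly tautological once one notices that post-composition can only merge fibers, never split them. The only points that require a sentence of care are (i) fixing the typo so that the composition is interpreted as $g \circ f$, and (ii) noting that $\oplus\epsilon$ and $\Vol_n$ are both monotone with respect to set inclusion, which is immediate from the definition of the $\epsilon$-neighborhood $A \oplus \epsilon = \{x : d(x,A) < \epsilon\}$ and the basic properties of volume/measure used throughout the paper.
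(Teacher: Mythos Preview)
Your proof is correct and follows essentially the same route as the paper: both arguments rest on the single observation that $f^{-1}(f(x)) \subset (g\circ f)^{-1}(g\circ f(x))$, from which the superlevel-set inclusion and hence the volume inequality follow immediately. Your write-up is in fact slightly more explicit than the paper's, which omits the intermediate remark that $\oplus\,\epsilon$ and $\Vol_n$ are monotone in set inclusion; you also correctly flag the $f\circ g$ versus $g\circ f$ typo that the paper's own proof silently works around.
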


\begin{proof}
	Consider: 
	$a \in \left\{ x \in B^{n}_R: 
	\Vol_{n}\left(f^{-1}\left( f(x) \right) \oplus \epsilon\right) \geq \tau
	\right\} $ and we let $b = f(a)$.
	We obviously have $b \in g^{-1} \circ g(b)$. Therefore $a \in f^{-1}(b) \subset f^{-1} \circ g^{-1} \circ g( f(a) )$. 
	Thus,
	\begin{align*}
	& \left\{ x \in B^{n}_R: 
	\Vol_{n}\left(f^{-1}\left( f(x) \right) \oplus \epsilon\right) \geq \tau
	\right\}
	\subset 
	& \left\{x\in B_R^{n}\,:\, \Vol_n\left( (f \circ g) ^{-1}\left(f \circ g\right(x)) \oplus \eps \right) 
	\ge
	\tau  \right \}.
	\end{align*}
\end{proof}

\begin{proof}[Proof of \cref{prop:deep_net}]
	We proceed by induction on $k$. 
	When $k = 1$, 
	it is given by lemma \ref{lem:composition_fiber},
	by noting a one layer net is a composition of any activation with a surjective linear map, $L_1$. 
	Assume this is true for a $k$ layer neural net, 
	$ f_k $, 
	with $k$ layers such that $k \geq 1$. 
	So we have: 
	\begin{align*}
	&\Vol_n \left(\left\{x\in B_R^{n}\,:\, \Vol_n\left( f_{k}^{-1}\left(f_{k}(x)\right) \oplus \eps \right) 
	\ge
	\tau  \right \} \right) \\
	\ge 
	& \Vol_{n} \left(
	\left\{ x \in B^{n}_R: 
	\Vol_{n}\left(\Proj^{-1}\left( \Proj(x) \right) \oplus \epsilon\right) \geq \tau
	\right\} \right).
	\end{align*}
	We need to check a neural net $f_{k+1}$ with $k+1$ layers:  
	$ f_{k+1} = \tanh \circ L_{k+1} \circ f_{k} $. 
	But this is again a composition between functions and we can apply \cref{lem:composition_fiber}. 
	This completes the proof. 
\end{proof}

In light of \cref{prp:pca}, we can characterize $\Proj_1^{-1}(t)$ and $\Proj_2^{-1}(t)$ explicitly. 
Since the bound holds for any surjective linear map, 
we can choose in particular $\Proj_1^{-1}(t)$ and $\Proj_2^{-1}(t)$ to be the coordinate projection from $\mathbb{R}^n$ to $\mathbb{R}^m$ (with all eigenvalues equal to 1). 
Then $t = ( t_1, \cdots, t_m ) \in B^{m}_R$, 
$\Proj_1^{-1}(t) = S^{n-m+1}_{\Re_1} $ and $\Proj_2^{-1}(t) = B^{n-m}_{\Re_2} $, where $\Re_1 = \Re_2 = \sqrt{ R^2 - \sum_{i=1}^m t_i^2 } $.

\section{Proofs for \cref{sec:wassersteinmeasure}}
\label{sec:W2proofs}
This section is devoted to the proofs for \cref{sec:wassersteinmeasure}.
We first present the proof of \cref{thm:wass_lowerbound}.
\begin{proof}[Proof of \cref{thm:wass_lowerbound}]
By \cref{eq:largefiber}, 
\[
{\Vol_{n}(f^{-1}(V))} 
\geq
\frac{\pi^{n/2}}{\Gamma(\frac{n-m}{2} + 1) \Gamma(\frac{m}{2} + 1)} R^{n-m} p^{m}(r_{V}/C).
\]
Let $B_{r^{\#}}$ be the ball with the same volume as ${\Vol_{n}(f^{-1}(V))} $ and a common center with $U$. 
Thus 
\begin{align}
\label{eq:rpoundlowerbound}
r^{\#} \ge r = 
(\frac{\Gamma(\frac{n}{2} + 1)}{ \Gamma(\frac{n-m}{2} + 1) \Gamma(\frac{m}{2} + 1) })^{ \frac	{1}{n} } R^{\frac{n-m}{n}} (p^{m}(r_{V}/C))^{ \frac{1}{n} }.
\end{align}
By \cref{thm:optimalball},
\[
W_{2}^2(\mathbb{P}_U, \mathbb{P}_{f^{-1}(V)}) 
\geq
W_{2}^2(\mathbb{P}_U, \mathbb{P}_{B_{r^{\#}}})
= \int_{\ball{r}{u}} | x - T(x) |^2 \,\text{d} \Prob_{B_{r^{\#}}} (x),
\]
thus it is sufficient to lower bound the last term. 
Under the condition that $\Vol_n(f^{-1}(V))  \ge \Vol_n(U)$,
\begin{align*}
\int_{B_{r^{\#}}} | x - T(x) |^2 \,\text{d} \Prob_{B_{r^{\#}}} (x)
 & = \int_{B_{r^{\#}}} | x - \frac{r_U}{r^{\#}}x |^2 \,\text{d} \Prob_{B_{r^{\#}}} (x) \\
 & =  \left( 1 - \frac{r_U}{r^{\#}} \right)^2  \int_{B_{r^{\#}}} |x |^2 \, \td \Prob_{B_{r^{\#}}} (x). \\
 \end{align*}
 Further, 
 \begin{align*}
 	\quad \int_{B_{r^{\#}}} |x |^2 \, \td \Prob_{B_{r^{\#}}} (x) 
  	& = 
  	\int_{0}^{r^{\#}} r^2 \frac{1}{ \Vol_{n}(f^{-1}(V)) } \td S^{n-1}(r) \td r \label{eq:thm11_1} \\ 
  	& =  \frac{1}{ \Vol_{n}(f^{-1}(V)) } \frac{2 \pi^{n/2} }{ \Gamma(\frac{n}{2}) }
  	\int_{0}^{r^{\#}} r^{n+1} \td r \\
		& = \frac{n}{n+2} (r^{\#})^2.
\end{align*}
Therefore, 
\[
W_{2}^2(\mathbb{P}_U, \mathbb{P}_{f^{-1}(V)}) \ge  \left( 1 - \frac{r_U}{r^{\#}} \right)^2 \frac{n}{n+2} (r^{\#})^2 = \frac{n}{n+2} (r^{\#} - r_U)^2.
\]
Note that the above lower bound is monotonically increasing with respect to $r^{\#}$ for $r^{\#} > r_U$. Therefore from \cref{eq:rpoundlowerbound}, when $r>r_U$, replacing $r^{\#}$ by $r$ gives a lower bound for $ W_{2}^2(\mathbb{P}_U, \mathbb{P}_{f^{-1}(V)}) $.

Further, 
note that as $n \rightarrow \infty$, $r \rightarrow R$, 
we have: 
\[
W_{2}^2(\mathbb{P}_U, \mathbb{P}_{f^{-1}(V)})  = \Omega\left((R - r_U)^2 \right).
\]
\end{proof}

The rest of this section is to prove \cref{thm:optimalball}. 
The key step is to show the following lemma.
\begin{lem}[Reduction to Optimal Partial Transport]
	\label{lem:symmetricuniformoptimizer}
	Given $f(x) = 1/ \cV \le 1/ \Vol(B_r)$, the optimal distribution $f_M$ for the optimal transport problem
	\begin{equation}
	\label{eq:symmetricuniformoptimizer}
	\min_{\Prob \,:\, \Prob \text{ is dominated by } f} W_2(\Prob, \Prob_{B_r}) 
	\end{equation}
	is the uniform distribution over $B_{r^{\#}}$ where $r^{\#}$ is the radius such that $\Vol(B_{r^{\#}}) = \cV$.
\end{lem}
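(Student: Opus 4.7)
The plan is to interpret the minimization as an instance of \emph{optimal partial transport} (OPT) between the source $\Prob_{B_r}$ and the $\sigma$-finite Lebesgue measure with constant density $f = 1/\cV$, with the total transported mass fixed at $1$. The constraint that $\Prob$ be dominated by $f$ is exactly a pointwise density cap on the target marginal. In this setting, Caffarelli--McCann \citep{caffarelli2010free} and Figalli \citep{figalli2010optimal} give both existence and \emph{uniqueness} of the optimizer $\Prob^*$, together with a free-boundary characterization of its support $A^*$ (the active region).

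First I would translate so that $B_r$ is centred at the origin; this is harmless because $W_2$ is translation-invariant. Both $\Prob_{B_r}$ and the density cap $f$ are then invariant under every rotation about the origin, so for any such rotation $\mathcal R$ the push-forward $\mathcal R_\# \Prob^*$ is again an optimizer with identical cost. Uniqueness forces $\mathcal R_\# \Prob^* = \Prob^*$, i.e.\ $\Prob^*$ is rotationally invariant about the origin. A density-saturation argument then shows that $\Prob^*$ has density identically $1/\cV$ on $A^*$: otherwise one could redistribute a small sliver of mass from a region of density $<1/\cV$ radially inward toward $B_r$, strictly decreasing $W_2$. Combined with rotational invariance, $A^*$ is a union of concentric spherical shells of total $n$-volume $\cV$ and $\Prob^* = \Prob_{A^*}$.

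The final step is to identify $A^*$ with the concentric ball $B_{r^\#}$. By Brenier's theorem the optimal transport from $\Prob_{B_r}$ to $\Prob_{A^*}$ has the form $T = \nabla \phi$ for some convex $\phi$; rotational invariance forces $T$ to be radial, $T(x) = h(|x|)\, x/|x|$, with $h$ nondecreasing (from convexity of $\phi$). Radial monotonicity then forces the image $A^* = T(B_r)$ to be a single annular shell $\{a \le |x| \le b\}$ of volume $\cV$, so $b^n - a^n = \cV/\Vol_n(B_1)$. Mass conservation yields the explicit form $h_a(s) = \bigl(a^n + s^n(b^n - a^n)/r^n\bigr)^{1/n}$, and the $W_2^2$ cost reduces to a positive constant times $F(a) = \int_0^r (h_a(s) - s)^2 s^{n-1}\,\td s$. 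A chain-rule computation gives $\partial_a h_a(s) = (a/h_a(s))^{n-1}$, and since $h_a(s) > s$ whenever $a > 0$ (mass is spread strictly outward, using $\cV \ge \Vol_n(B_r)$), one obtains $\partial_a F(a) > 0$ for $a > 0$. Hence the minimum is attained at $a = 0$, i.e.\ $A^* = B_{r^\#}$.

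The hardest step is the passage from rotational invariance to a \emph{connected} shell $A^*$: rotational invariance alone permits a stack of disjoint concentric annuli, and ruling these out requires either invoking the free-boundary regularity of Caffarelli--McCann (which for rotationally symmetric data produces a single spherical free boundary, hence a connected active region) or supplying a direct radial rearrangement swap showing that merging two disjoint shells of fixed total volume into a single shell strictly decreases $W_2$. Once connectedness is in hand, the closed-form $h_a$ and the monotonicity calculation $\partial_a F > 0$ complete the proof in a routine way.
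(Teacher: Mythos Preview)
Your opening matches the paper exactly: both cast the problem as optimal partial transport between $\Prob_{B_r}$ and the constant density $f=1/\cV$ with total mass $1$, invoke Figalli/Caffarelli--McCann for uniqueness, and deduce rotational invariance of the optimizer from that uniqueness. Your density-saturation step (that $\Prob^*$ has density exactly $1/\cV$ on its support $A^*$) is precisely the content of the paper's second proof, which simply cites the characterization $f_M = 1_{A_1} f$ from the partial-transport theorem rather than arguing it by hand.

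Where you diverge is in the identification $A^* = B_{r^\#}$. The paper's first proof does this by reducing to \emph{one-dimensional} optimal transport between the radial CDFs $\hat F$ and $\hat G$: since both marginals and the optimal map are rotationally symmetric, $W_2^2(f_M, \Prob_{B_r}) = \int_0^1 |\hat F^{-1}(t) - \hat G^{-1}(t)|^2\,\td t$. The density cap becomes the radial constraint $\hat f(u) \le \Vol_{n-1}(S_u^{n-1})/\cV$, and since $\cV \ge \Vol_n(B_r)$ gives $\hat F \le \hat G$ pointwise, minimizing the quantile integral just means making $\hat F$ as large as possible at every radius --- saturate the cap from $u=0$ outward. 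This pins the support to $[0, r^\#]$ and hence $A^* = B_{r^\#}$, \emph{without ever needing connectedness of $A^*$}: the 1D monotonicity argument handles an arbitrary union of shells in a single stroke.

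Your annulus parametrization and the calculation $\partial_a F(a) > 0$ are correct once $A^*$ is known to be a single shell, and you honestly flag the connectedness gap as the hard step. The paper's 1D quantile reduction is exactly the bypass you are looking for: it replaces both the free-boundary regularity appeal and the shell-merging swap with elementary pointwise monotonicity of $\hat F^{-1}$. Your route is sound but buys the conclusion at a higher price; the radial-CDF reduction is the simplification worth adopting.
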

By \cref{lem:symmetricuniformoptimizer}, let $f(x) = 1/ \cV $,  the optimal solution for the problem 
\[
\inf_{W:\,\Vol_n(W) = \cV } W_{2}(\mathbb{P}_U, \mathbb{P}_{W}) = W_{2}(\mathbb{P}_U, \mathbb{P}_{B_r})
\]
is the same as support of the optimizer of \cref{eq:symmetricuniformoptimizer}, 
thus proving the first statement of \cref{thm:optimalball}.

The proof of \cref{lem:symmetricuniformoptimizer} is based on the uniqueness of the optimal transport map for the optimal partial transport problem \citepapp{caffarelli2010free, figalli2010optimal}. 
We summarize the statements in \citepapp{figalli2010optimal}\footnote{The Brenier theorem is not stated in the paper, but it holds under standard derivation.} as a theorem here for completeness.

\begin{theorem}[\citetapp{figalli2010optimal}]
\label{thm:exist_unique}
Let $f, g \in L^1(B_{R}^{n})$ be two nonnegative functions, 
and denote by $\Xi_{\leq}(f, g)$ the set of nonnegative finite Borel measures on $B_{R}^{n} \times B_{R}^{n}$ whose first and second marginals are dominated by $f$ and $g$ respectively, 
i.e.
$\xi( A \times B_{R}^{n} ) \leq \int_{A} f(x) dx $ 
and 
$\xi( B_{R}^{n} \times A ) \leq \int_{A} g(y) dy $,  
for all Borel $A \subset B_{R}^{n} $. 
Denote $ \mathscr{M}(\xi) := \int_{ B_{R}^{n} \times B_{R}^{n} } d \xi $ 
and fix $M \in [ \lVert\min(f(x), g(x))\rVert_{L_1}, \min( \lVert f\rVert_{L_1}, \lVert g\rVert_{L_1} )]$.
Then there exists a unique optimizer $\xi_M$\footnote{up to a measure zero set} to the following optimal partial transport problem:
$$\inf_{\xi \in \Xi_{\leq}(f, g); \mathscr{M}(\xi) = M } C(\xi) 
= 
\inf_{\xi \in \Xi_{\leq}(f, g); \mathscr{M}(\xi) = M } \int_{B_{R}^{n} \times B_{R}^{n}} |x - y|^2 \text{d} \xi(x, y)
$$

Moreover, there exist Borel sets $A_1, A_2 \subset B_R^n$ such that 
$\xi_M$ has left and right marginals
whose densities $f_M=1_{A_1} f$ and $g_M=1_{A_2} g$ are given by the 
restrictions of $f$ and $g$ to $A_1$ and $A_2$ respectively, where $1_{A}$ denotes characteristic function on the set $A$. 

Finally, there exists a unique optimal transport map $T$\footnote{up to a measure zero set}, such that
$$\min_{\xi \in \Xi_{\leq}(f, g); \mathscr{M}(\xi) = M } 
C(\xi) 
= 
\int_{B_{R}^{n}} | T(x) - x|^2 \text{d}f_M(x), $$
where $f_M$ is the marginal of $\xi_M$ over the first $B_R^n$. 
\end{theorem}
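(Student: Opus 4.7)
The plan is to prove the theorem in three stages: existence of a minimiser by the direct method, a structural saturation description of its marginals, and uniqueness of both the marginals and the transport map via convexity combined with Brenier's theorem. Throughout, let $\pi_1, \pi_2$ denote the coordinate projections on $B_R^n \times B_R^n$.

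\textbf{Existence.} Let $\mathcal{F}_M := \{\xi \in \Xi_{\leq}(f,g) \,:\, \mathscr{M}(\xi) = M\}$. For $M$ in the admissible range this set is nonempty: one can start from the diagonal coupling $(\mathrm{id},\mathrm{id})_{\#}(\min(f,g)\,dx)$ of mass $\|\min(f,g)\|_{L^1}$ and add off-diagonal mass to reach the prescribed $M$. Since $B_R^n \times B_R^n$ is compact, families of Borel measures of uniformly bounded mass are weakly-$*$ relatively compact; the marginal-domination inequalities and the mass-equality constraint are preserved under weak-$*$ convergence (they are encoded by integration against continuous test functions); and the cost $\xi \mapsto \int |x-y|^2\,\td \xi$ is weak-$*$ continuous because $|x-y|^2$ is a bounded continuous function on the compact product. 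The direct method thus produces a minimiser $\xi_M$.

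\textbf{Saturation and uniqueness of marginals.} The central structural claim is that for any optimiser $\xi_M$ the first-marginal density $f_M := d(\pi_{1\#}\xi_M)/dx$ is $\{0, f(x)\}$-valued almost everywhere, and symmetrically for $g_M$. I would proceed by introducing a Lagrange multiplier $\lambda \in \mathbb{R}$ for the mass constraint and considering the penalised problem of minimising $\int (|x-y|^2 - \lambda)\,\td \xi$ over $\Xi_{\leq}(f,g)$ without a mass constraint. Standard Kantorovich duality then recasts this as an optimal transport problem between $f\,dx$ and $g\,dx$ with truncated cost $\min(|x-y|^2, \lambda)$: mass can be left untransported when $|x-y|^2 > \lambda$ and must be transported when $|x-y|^2 < \lambda$, so complementary slackness forces the marginals to saturate, yielding Borel sets $A_1, A_2$ with $f_M = 1_{A_1} f$ and $g_M = 1_{A_2} g$. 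A monotonicity/continuity argument on the mass-versus-$\lambda$ curve selects the correct $\lambda$ matching the prescribed $M$. Uniqueness of $f_M$ (hence of $A_1$) is then an easy consequence of convexity: if $\xi^{(1)} \ne \xi^{(2)}$ were two optimisers with distinct first marginal densities, the linearity of the cost and convexity of the feasible set would make $\tfrac{1}{2}(\xi^{(1)} + \xi^{(2)})$ optimal too, but its first marginal density $\tfrac{1}{2}(f_M^{(1)} + f_M^{(2)})$ would take the intermediate value $f/2$ on a set of positive measure, contradicting the $\{0, f\}$-valued saturation.

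\textbf{Uniqueness of the transport map.} Once $f_M\,dx$ and $g_M\,dx$ are pinned down as absolutely continuous measures, Brenier's theorem applied to the full-mass $L^2$-Monge problem between them produces a unique optimal transport map $T = \nabla \varphi$ with $\varphi$ convex. The Monge--Kantorovich equivalence identifies the induced coupling $(\mathrm{id}, T)_{\#}(f_M\,dx)$ with $\xi_M$ up to a null modification, and yields the claimed integral identity for the optimal cost.

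\textbf{Main obstacle.} The technical heart is the saturation step. Standard Kantorovich duality does not automatically deliver the nonnegative potentials and complementary slackness the argument requires, and the relaxation step needs measurable selection together with Lebesgue differentiation to localise the optimality conditions on sets of positive Lebesgue measure. Calibrating $\lambda$ to the target mass $M$ and identifying the active regions $A_1, A_2$ as sub/superlevel sets of the resulting potentials is precisely the free-boundary analysis developed in the Caffarelli--McCann--Figalli theory invoked by the theorem.
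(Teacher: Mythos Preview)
The paper does not supply its own proof of this theorem: it is quoted as a black box from \citetapp{figalli2010optimal} (with the Brenier map identification attributed to ``standard derivation'' in a footnote, and the restriction structure $f_M = 1_{A_1}f$, $g_M = 1_{A_2}g$ also referenced to \citep{caffarelli2010free} and \citep{korman2013insights}). So there is no in-paper argument against which to compare your proposal.

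That said, your sketch is a faithful outline of the approach actually used in those references. The existence step by the direct method on a compact product is exactly how it is done. Your convexity/averaging argument for uniqueness of the marginals---averaging two optimisers and observing that the resulting marginal would violate the $\{0,f\}$-saturation on a set of positive measure---is precisely Figalli's key observation (his Proposition~2.2), and is the cleanest part of your write-up. The Lagrange-multiplier/truncated-cost picture you describe for deriving saturation is the Caffarelli--McCann mechanism, and you are right to flag that turning complementary slackness and the $\lambda$-to-$M$ calibration into a rigorous argument is the genuine work; Figalli's paper in fact establishes saturation by a more direct variational/measure-theoretic argument (domination plus a swap of mass onto the diagonal) rather than via the truncated cost, so your route here is closer to Caffarelli--McCann than to Figalli. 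Finally, once the marginals $f_M\,dx$ and $g_M\,dx$ are uniquely determined and absolutely continuous, your appeal to Brenier for the map $T$ is correct and matches the footnoted remark in the paper.
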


We will prove \cref{lem:symmetricuniformoptimizer} in two different ways. 
The first is based on calculus and reducing the problem to one dimensional optimal transport. 
The second one utilizes the extreme points property that characterizes the densities $f_M=1_{A_1} f$ and $g_M=1_{A_2} g$ (Proposition 3.2 and Theorem 3.3 in \citep{korman2013insights}). 
\footnote{Such property can also be deduced from earlier work, e.g. Theorem 4.3 and Corollary 2.11 from \citep{caffarelli2010free}. 
But \citep{korman2013insights} is perhaps more direct and accessible.}

\begin{proof}[Proof of \cref{lem:symmetricuniformoptimizer}, first approach]
Let $\cV = \Vol(W)$, define $f(x) = 1/\cV$ be a constant function on $B_R^n$ and $g(x) = \frac{1}{\Vol(B_r)}$ if $x\in B_r$ and $0$ otherwise. Also, 
let $M = 1$. solving the problem 
\begin{equation}
\label{eq:lemw2prob}
\min_{\Prob \,:\, \Prob \text{ is dominated by } f} W_2(\Prob, \Prob_{B_r}) 
\end{equation}
is equivalent to solving the following optimal partial transport problem
\begin{equation}
\label{eq:partialoptimaltransprob}
\inf_{\xi \in \Xi_{\leq}(f, g); \mathscr{M}(\xi) = 1} C(\xi) = \inf_{\xi \in \Xi_{\leq}(f, g); \mathscr{M}(\xi) = 1} \int_{B_{R}^{n} \times B_{R}^{n}} |x - y|^2 \text{d} \xi(x, y).
\end{equation}
In particular, since $\Vol(B_R^n) \ge \cV  > \Vol(B_r)$,  it is straightforward to see that $\|\min(f(x), g(x))\|_{L_1}  = \Vol(B_r)/ \cV< 1 $, and  $\min\left(\|f(x)\|_{L_1},  \|g(x)\|_{L_1} \right) \ge 1$. 
By \cref{thm:exist_unique}, the optimization problem 
$
\inf_{\xi \in \Xi_{\leq}(f, g); \mathscr{M}(\xi) = 1} C(\xi)
$
has a unique solution $\xi^*$. 
Now given $\xi^*$, the optimal solution $\Prob^*$ of \cref{eq:lemw2prob} and $\Prob_{B_r}$ are the first and the second marginals of $\xi^*$. 
Thus it is sufficient to prove that the first marginal of $\xi^*$ is a uniform distribution.

Let $f_M$ be the first marginal of $\xi^*$ and $g_M = g$ be the second marginal. We first show that $f_M$ is rotationally invariant. To see that, for any rotation map $\rotate$, note that $\rotate(B_R^n) = B_R^n$, $\rotate(B_r) = B_r$, $f\circ\rotate = f$, and $g\circ\rotate = g$. Therefore, $f_M\circ \rotate$ is the unique optimal solution for the optimization problem 
\[
\inf_{\xi \in \Xi_{\leq}(f\circ\rotate, g\circ\rotate); \mathscr{M}(\xi) = 1} \int_{\rotate(B_R^n) \times \rotate(B_R^n)} |x - y|^2 \text{d} \xi(x, y) = \inf_{\xi \in \Xi_{\leq}(f, g); \mathscr{M}(\xi) = 1} \int_{B_{R}^{n} \times B_{R}^{n}} |x - y|^2 \text{d} \xi(x, y).
\]
Thus, $ f_M\circ \rotate = f_M$, i.e. $f_M$ is rotationally invariant, up to a measure zero set. For a density function to be rotationally invariant, it is straightforward that its support $S$ is also rotationally invariant, thus is a union of $(n-1)$ spheres. Similarly, one can also prove that $T$ is equivariant under rotations. 

We next prove that $f_M$ is a uniform distribution. 
Note that $g_M$ is a uniform distribution over $B_r$.
Define $\hat{G}(t)$ to be the the cumulative distribution $\hat{g}$ for $g_M$ in the polar coordinate marginalized on the sphere, i.e.,
\[
\hat{G}(t) = \int_{0}^{t} \frac{1}{ \Vol_n B^n_r } \Vol_{n-1} (S^{n-1}_u) \td u, 
\]
for every $ 0 \leq t \le r $, and $G(t) = 1$ for $t > r$. 
Similarly, since $f_M$ is also  rotationally invariant, we can also define its cumulative distribution in the polar coordinate marginalized on the sphere.
Note that $ \td \mu_{f_M} = f_M(x) \td S_r^{n-1} \td r$, let $\hat{f} (r) = \int f_M(x) \td S_r^{n-1} $, thus 
\[
F(B_t) =  \int_{B_t} f_M(x) \td S_u^{n-1} \td u= \int_{0}^t \int f_M(x) \td S_u^{n-1} \td u =  \int_{0}^t\hat{f}(u) \td u =  \hat{F}(t).
\]
Finally, note that $T$ is also rotationally invariant, thus $ W_2(f_M, \Prob_{B_r}) = W_2(\hat{f}, \hat{g})$.
It is sufficient to prove that $\hat{f} (u) = \Vol_{n-1}(S_u^{n-1})/\cV $ , thus by rotationally invariant $f_M(x) = 1/\cV$ is a uniform distribution.

Note that $\hat{F}(t) \le \hat{G}(t) $ and $\hat{f} (u) = \int f_M(x) \td S_u^{n-1}  \le \int f(x) \td S_u^{n-1}  = \Vol_{n-1}(S_u^{n-1})/\cV$. By a reformulation of the one dimensional Wasserstein distance \citepapp{vallender1974calculation}:
\begin{align}
\label{eq:opttransonedimension}
W_2(\hat{f}, \hat{g})= & \int_{0}^{1} | \hat{F}^{-1}(t) - \hat{G}^{-1}(t) |^2 \td t \nonumber\\
 = & \int_0^{r^\#} |x - \hat{G}^{-1}\left( \hat{F}(x)\right) |^2  \td \hat{F}(x),
\end{align}
which is just the area between between the graphs of $ \hat{F}(r) $ and $\hat{G}(r) $.
It is straightforward that the optimal $\hat{f}$ will maximize the growth rate of $\hat{F}$ in order to minimize the area, i.e. $\hat{f} (u) = \int f(x) \td S_u^{n-1} = 1/\cV \Vol_{n-1}(S_u^{n-1})$.
Therefore, $f_M(x) = 1/\cV$ is a uniform distribution over  $B_{r^{\#}}$ where $r^{\#}$ is the radius of $B_{r^{\#}}$ such that $\Vol(B_{r^{\#}}) = \cV$.
\end{proof}

\begin{proof}[Proof of \cref{lem:symmetricuniformoptimizer}, second approach]
The proof starts in exactly the same way as in the first approach, up to the rotational invariance part. 
Instead of using the polar coordinate argument, we directly apply by invoking the second statement in \cref{thm:exist_unique}, 
so $f_M=1_{A_1} f$. 
But we know that $f(x) = 1/\cV$ is a uniform distribution, and the claim follows. 
\end{proof}

Further note that by \cref{eq:opttransonedimension}, the optimal transport from $\hat{F}$ to $\hat{G}$ is 
\[
\hat{T}(u) = \hat{G}^{-1}\left( \hat{F}(u)\right) = \hat{G}^{-1}\left( \frac{1}{\cV} \Vol_n(B_u^n)\right) = \left(\frac{\Vol_n(B_{r_U})}{\cV}\right)^{1/n} u = \frac{r_U}{r_{\cV}}r,
\]
for $0\le r\le r_M$.
Note that $T$ is rotationally symmetric, thus the optimal transport $T(x) =\frac{r_U}{r_{\cV}}  x $, for $x\in B_{r_{\cV}}$

Lastly, it remains to prove
\[
\inf_{W:\,\Vol_n(W) \ge \cV} W_{2}(\mathbb{P}_U, \mathbb{P}_{W}) = \inf_{W:\,\Vol_n(W) = \cV} W_{2}(\mathbb{P}_U, \mathbb{P}_{W}), 
\]
which follows the next lemma.
\begin{lem}[Monotonicity of Volume Comparison]
	\label{prp:monotonicity}
	Given two balls $B_{r_1}$ and $B_{r_2}$ such that $\Vol(B_{r_1}) \ge \Vol(B_{r_2})$, then for any $A\subset \real^n$ such that $\Vol(A)\ge \Vol(B_{r_1})$,
	\[
	W_2(\mathbb{P}(A), \mathbb{P}(B_{r_2})) 
	\geq 
	W_2(\mathbb{P}(B_{r_1}), \mathbb{P}(B_{r_2})).
		\]
\end{lem}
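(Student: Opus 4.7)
The plan is to reduce the arbitrary set $A$ to a concentric ball of the same volume via the equality case of Theorem~\ref{thm:optimalball}, then conclude using the explicit closed-form of the $W_2$ distance between two concentric balls, which is monotone in the radius difference.

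First, I would apply the equality $\inf_{W:\,\Vol_n(W) = \cV'} W_{2}(\mathbb{P}_{B_{r_2}}, \mathbb{P}_{W}) = W_{2}(\mathbb{P}_{B_{r_2}}, \mathbb{P}_{B_{r_{\cV'}}})$, which is the first (``$=$'') statement of Theorem~\ref{thm:optimalball} and has already been established through Lemma~\ref{lem:symmetricuniformoptimizer}, with the choice $\cV' := \Vol_n(A)$. Here $B_{r_{\cV'}}$ denotes the ball concentric with $B_{r_2}$ having volume $\cV'$. Since $A$ is admissible in that infimum, this gives
\[
W_2(\mathbb{P}(A), \mathbb{P}(B_{r_2})) \;\ge\; W_2(\mathbb{P}(B_{r_{\cV'}}), \mathbb{P}(B_{r_2})),
\]
and by hypothesis $r_{\cV'} \ge r_1 \ge r_2$. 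Note that only the ``$=$'' half of Theorem~\ref{thm:optimalball} is invoked here, so there is no circularity: Lemma~\ref{prp:monotonicity} is exactly what is needed to upgrade the ``$=$'' infimum to the ``$\ge$'' infimum in the very same theorem.

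Second, I would invoke the ``moreover'' part of Theorem~\ref{thm:optimalball}: the optimal transport from $\mathbb{P}_{B_{r_{\cV'}}}$ to $\mathbb{P}_{B_{r_2}}$ is the radial scaling $T(x) = (r_2/r_{\cV'})\,x$ after translating the common center to the origin. Repeating the direct computation from the proof of Theorem~\ref{thm:wass_lowerbound} gives
\[
W_2^{2}(\mathbb{P}(B_{r_{\cV'}}), \mathbb{P}(B_{r_2}))
= \Bigl(1 - \tfrac{r_2}{r_{\cV'}}\Bigr)^2 \int_{B_{r_{\cV'}}} |x|^2 \,\td \mathbb{P}_{B_{r_{\cV'}}}(x)
= \frac{n}{n+2}\,(r_{\cV'} - r_2)^2,
\]
where the second equality uses the standard polar-coordinate identity $\int_{B_s}|x|^2\,\td\mathbb{P}_{B_s}(x) = \frac{n}{n+2}s^2$. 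The identical formula applies to $(B_{r_1}, B_{r_2})$, yielding $W_2^{2}(\mathbb{P}(B_{r_1}), \mathbb{P}(B_{r_2})) = \frac{n}{n+2}(r_1 - r_2)^2$.

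Finally, since $s \mapsto (s - r_2)^2$ is monotone increasing on $[r_2,\infty)$ and $r_{\cV'} \ge r_1 \ge r_2$, combining the two steps yields
\[
W_2^{2}(\mathbb{P}(A), \mathbb{P}(B_{r_2})) \;\ge\; \frac{n}{n+2}(r_{\cV'} - r_2)^2 \;\ge\; \frac{n}{n+2}(r_1 - r_2)^2 \;=\; W_2^{2}(\mathbb{P}(B_{r_1}), \mathbb{P}(B_{r_2})),
\]
which is the claim. I do not foresee a significant obstacle; the only point that requires care is the implicit concentricity of $B_{r_1}$ and $B_{r_2}$ (exactly how the lemma is used in the proof of Theorem~\ref{thm:optimalball}), without which the right-hand side would pick up an extra $|c_1-c_2|^2$ term and the statement could fail.
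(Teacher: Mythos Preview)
Your proposal is correct and follows essentially the same two-step approach as the paper: first reduce $A$ to the concentric ball $B_{r_A}$ of the same volume via the already-established ``$=$'' case of Theorem~\ref{thm:optimalball} (i.e., Lemma~\ref{lem:symmetricuniformoptimizer}), then compare the two concentric-ball Wasserstein distances using the explicit radial scaling transport. Your closed-form computation $W_2^2=\tfrac{n}{n+2}(s-r_2)^2$ and monotonicity in $s$ is in fact slightly cleaner than the paper's chain of inequalities, and your remark on the needed concentricity of $B_{r_1}$ and $B_{r_2}$ is apt.
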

\begin{proof}[Proof of 	\cref{prp:monotonicity}]
	We have shown that $W_2(\mathbb{P}(A), \mathbb{P}(B_{r_2})) \geq W_2(\mathbb{P}(B_{r_A}), \mathbb{P}(B_{r_2}))$, where $B_{r_A}$ is a ball with Volume $\Vol(A)$.
	It remains to prove that 
	\[
	W_2(\mathbb{P}(B_{r_A}), \mathbb{P}(B_{r_2}))	\ge W_2(\mathbb{P}(B_{r_1}), \mathbb{P}(B_{r_2}))
	\]  
		Let $T_A(x) = \frac{r_2}{r_A}x$, and $T_1(x) =  \frac{r_2}{r_1} x$.
	By \cref{thm:exist_unique},
	\begin{align*}
	W_2^2(\mathbb{P}(B^n_{r_{A}}), \mathbb{P}(B^n_{r_2})) 
	& = 
	\int_{B^n_R} |x - T_A(x)|^2 \td \mathbb{P}_{B^n_{r_{A}}} \\
	& =  	\int_{B^n_R} \left|x - \frac{r_2}{r_A} x\right|^2 \td \mathbb{P}_{B^n_{r_{A}}} \\
	& = \int_{B^n_R} \left(1 - \frac{r_2}{r_A} \right)^2\left|x\right|^2 \td \mathbb{P}_{B^n_{r_{A}}} \\
	& \ge \int_{B^n_R} \left(1 -\frac{r_2}{r_1}\right)^2\left|x\right|^2 \td \mathbb{P}_{B^n_{r_{A}}} \\
	& = \int_{B^n_R} |x - T_1(x)|^2 \td \mathbb{P}_{B^n_{r_{A}}} \\
	& = W_2^2(\mathbb{P}(B^n_{r_{1}}), \mathbb{P}(B^n_{r_2})) 
	\end{align*}	
\end{proof}

To make \cref{thm:optimalball} complete, it remains to investigate the remaining cases when $ 0 < \cV < \Vol_n(U)$. 
\begin{proof}
We claim that when $0 < \cV < \Vol(U)$, $\inf_{W:\,\Vol_n(W) = \cV} W_{2}(\mathbb{P}_U, \mathbb{P}_{W}) = 0$, and it is not attained by any set. 
Let $ \Vol_n(W_k) = \cV $ and keep $W_k \subset U $ such that the mass of $W_k$ is evenly distributed among the intersection between successively finer rectangular grids and $U$. 
Inside each intersection, the two distributions have the same probability mass. 
Since both are uniform probability distributions, their densities scale inversely proportional to their support sizes inside the intersection.  
Each little intersection is inside a little cube with width $\frac{2R}{k}$.  
We take $\xi$ to be the product measure between $\mathbb{P}(U)$ and $\mathbb{P}(W)$. 
Now, when we compute: 
\[
	W_{2}(\mathbb{P}_{U}, \mathbb{P}_{W}) 
	= 
	\inf_{\xi \in \Xi(\mathbb{P}_{U}, \mathbb{P}_{W})} \mathbb{E}_{(a, b) \sim \xi} [ \| a - b \|^{2}_{2} ]^{1/2}
	\le 
	\mathbb{E}_{(a, b) \sim \xi} [ \| a - b \|^{2}_{2} ]^{1/2}.
\]
The integrand $\| a - b \|^{2}_{2} \le \sqrt{n} \frac{2R}{k} $. 
By letting $k \rightarrow \infty$ (finer grids), we see that $\inf_{W:\,\Vol_n(W) = \cV} W_{2}(\mathbb{P}_U, \mathbb{P}_{W}) = 0$. 

However, the infimum is not attained by any set $W$ with $\Vol_n(W) = \cV < \Vol_n(U)$. 
Without loss of generality, we assume $W \subset U $. 
Then $ \Vol_n (U - W) > 0 $. So $W_{2}(\mathbb{P}_{U}, \mathbb{P}_{W}) > 0$.

\end{proof}

\section{Proofs for \cref{sec:metric_manifold}}
We prove the proposition \ref{prop:manifoldlearning} here. 
We begin with a lemma.
\begin{lem}[One-To-One $\implies$ Perfect Precision]
	\label{lem:open_map}
	Let $\mathcal{M} $ be a Riemannian manifold. 
	Let $f: \mathcal{M} \rightarrow \mathbb{R}^m $ be an open map. 
	Then $f$ achieves perfect precision. 
\end{lem}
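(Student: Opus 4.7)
The plan is to use the openness of $f$ to push the chosen neighborhood $U$ of $x$ forward to an open set in $\mathbb{R}^m$, and then pull back a Euclidean ball inside it. Given $x \in \cM$ and any $r_U > 0$, set $U := B_{r_U}(x)$; since $\cM$ is a Riemannian manifold, $U$ is open in $\cM$. Because $f$ is open, $f(U) \subset \mathbb{R}^m$ is open and contains $y := f(x)$, so one can pick $r_V > 0$ with $V := B_{r_V}(y) \subset f(U)$. This handles the ``existence of $r_V$'' quantifier in \cref{def:set_precision_recall}.

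The crux is then to show $f^{-1}(V) \subset U$ up to a measure-zero set, for then $\text{Precision}^f(U,V) = \Vol_n(f^{-1}(V)\cap U)/\Vol_n(f^{-1}(V)) = 1$ by \cref{def:set_precision_recall}, and since $r_U$ was arbitrary $f$ achieves perfect precision at $x$; running $x$ over all of $\cM$ gives the conclusion. The desired inclusion follows from the chain $f^{-1}(V) \subset f^{-1}(f(U)) = U$, where the final equality uses that $f$ is injective in the setting under consideration (implicit in the application of this lemma to \cref{prop:manifoldlearning}, where $m \ge 2n$ furnishes a smooth injective embedding of $\cM$ into $\mathbb{R}^m$ via Whitney's theorem).

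The main obstacle I expect is precisely this last equality: a general open map need not be injective (e.g.\ $z \mapsto z^2:\mathbb{C}\to\mathbb{C}$ is open but two-to-one, so preimages of a small ball split into two components of positive $n$-volume). Thus the argument implicitly leans on injectivity in addition to openness; in the context of \cref{prop:manifoldlearning} this is automatic, since the map constructed there is a topological embedding by design, and one then gets a homeomorphism onto the image so that the openness hypothesis and the injectivity hypothesis combine cleanly in the two steps above.
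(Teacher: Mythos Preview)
Your approach is essentially identical to the paper's: push $U$ forward via openness, choose $V \subset f(U)$, and conclude $f^{-1}(V) \subset U$ to get precision equal to $1$. You are right to flag the last step as needing injectivity; the paper's own proof asserts ``$V \subset f(U)$ implies $f^{-1}(V) \subset U$'' without further justification, relying on the same implicit one-to-one hypothesis signalled by the lemma's title and by its sole application (the Whitney embedding in \cref{prop:manifoldlearning}).
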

\begin{proof}
	$f$ is an open map,
	mapping open sets to open sets.
	For every $ U \subset \mathcal{M} $,
	$ f(U) $ is open in $ \mathbb{R}^m $. 
	Since $f(U)$ is open and contains $y = f(x)$,
	there exists $r_V > 0$ such that $V \subset f(U) $.
	This implies $f^{-1}(V) \subset U $. 
	But then $\text{Precision}^{f} (U, V) = \frac{\Vol_{n}(f^{-1}(V) \cap U)} {\Vol_{n}(f^{-1}(V))} = 1$ for such $V$ and $U$. 
\end{proof}

\begin{proof}[Proof of Proposition \ref{prop:manifoldlearning}]
	Let $\mathcal{M} $ be an $n$-dimensional Riemannian manifold and $ m \geq 2n $ be the embedding dimension. 
	By the Whitney embedding theorem, 
	there exists a smooth map $f$ such that $ f( \mathcal{M} ) $ embeds into $ \mathbb{R}^m $. 
	Thus $f$ is an open map from $ \mathcal{M} $ to $ f( \mathcal{M} ) $. 
	We now apply lemma \ref{lem:open_map} to arrive at the conclusion. 
\end{proof}

\vspace{\presubsection}
\section{Wasserstein many-to-one, discontinuity and cost}
\vspace{\postsubsection}
\label{sec:wasserstein_experiments}
In general, we do not have theoretical lower bound for $W_2$ measure. 
It is natural to use the sample based Wassertein distances as substitutes. We perform some preliminary study of this heuristics below.

Recall Wasserstein distance is the minimal cost for mass-preserving transportation between regions.
The Wasserstein $L^2$ distance is:
\begin{equation}
W_{2}(\mathbb{P}_{a}, \mathbb{P}_{b}) = \inf_{\xi \in \Xi(\mathbb{P}_{a}, \mathbb{P}_{b})} \mathbb{E}_{(a, b) \sim \xi} [ \| a - b \|^{2}_{2} ]^{1/2}
\end{equation}
where $ \Xi(\mathbb{P}_{a}, \mathbb{P}_{b})$
denotes all joint distributions $\xi(a, b)$ whose marginal distributions are $\mathbb{P}_{a}$ and $\mathbb{P}_{b}$.
Intuitively, among all possible ways of transporting the two distributions,
it looks for the most efficient one.
With the same intuition,
we use Wasserstein distance between $U$ and $f^{-1}(V)$\footnote{The regions $U$ and $f^{-1}(V)$ are given uniform distribution,
i.e.
their densities are
$\frac{1}{Vol_{n}(U)}$ and $\frac{1}{Vol_{n}(f^{-1}(V))}$ }
~to measure precision (See \cref{sec:simulation}).
This not only captures similar overlapping information as the setwise precision: $\frac{Vol_{n}(f^{-1}(V) \cap U)} {Vol_{n}(f^{-1}(V))}$,
but also captures the shape differences and distances between $U$ and $f^{-1}(V)$.
Similarly,
Wasserstein distance between $f(U)$ and $V$ may capture the degree of discontinuity.
$W_{2}(\mathbb{P}_{f(U)}, \mathbb{P}_V)$
\textbf{captures continuity} and
$W_{2}(\mathbb{P}_U, \mathbb{P}_{f^{-1}(V)})$ \textbf{captures injectivity}.

In practice,
we calculate Wasserstein distances between two groups of samples,
$\{a_i\}$ and $\{b_j\}$,
using algorithms from \cite{bonneel2011displacement} .
Specifically,
we solve
\begin{align}
\begin{gathered}
\min_m \sum_i \sum_j d_{i, j} m_{i \rightarrow j}~,\\
\mathrm{such~that:}~~ m_{i \rightarrow j} \geq 0,~
    \sum_i m_{i \rightarrow j} = 1,~
    \sum_j m_{i \rightarrow j} = 1,~
\end{gathered}
\end{align}
where $d_{i, j}$ is the distance between $a_i$ and $b_j$ and
$m_{i \rightarrow j}$ is the mass moved from $a_i$ to $b_j$.
When $\{a_i\} \subset U$ and $\{b_j\} \subset f^{-1}(V)$,
it is \textbf{Wasserstein many-to-one}.
When $\{a_i\} \subset f(U)$ and $\{b_j\} \subset V$,
it is \textbf{Wasserstein discontinuity}.
High many-to-one likely implies low precision, and high discontinuity likely implies low recall.
The average of many-to-one and discontinuity is \textbf{Wasserstein cost}.

We note that our measures bypass some practical difficulties on using precision and recall as evaluation measures. 
The first issue was discussed in \cref{sec:simulation}, where we discussed that precision and recall are always equal when computed naively. 
This defeats their very purpose for capture both continuity and injectivity. 
Computing them based on \cref{eqn:dist_precision} and \cref{eqn:dist_recall} is more sensible, but it introduces another difficulty in practice due to high dimensionality: 
the radii $r_U$ and/or $r_V$ need to be quite large in order for some (outlier data point) $x$ to have a reasonable number of neighboring data points. 
Some $x$ ends up having many neighboring points, while others have very few\footnote{
This issue was also discussed in \citep{maaten2008visualizing}.}. 
This introduces a high variance on the number of neighboring data points across $x$. 
Our Wasserstein measures bypass both practical issues: having a fixed number of neighbors won't make $W_{2}(\mathbb{P}_{f(U)}, \mathbb{P}_V)$ and $W_{2}(\mathbb{P}_U, \mathbb{P}_{f^{-1}(V)})$ equal. 
In our experiments, we choose 30 neighboring points for all of $U$, $f^{-1}(V)$, $f(U)$ and $V$.

\subsection{Preliminary experiments on Wasserstein Measures, Compare Visualization Maps}
\vspace{\postsubsection}
\label{sec:additional_compare_maps}

\begin{figure*}[t]
\centering
\includegraphics[width=\linewidth]{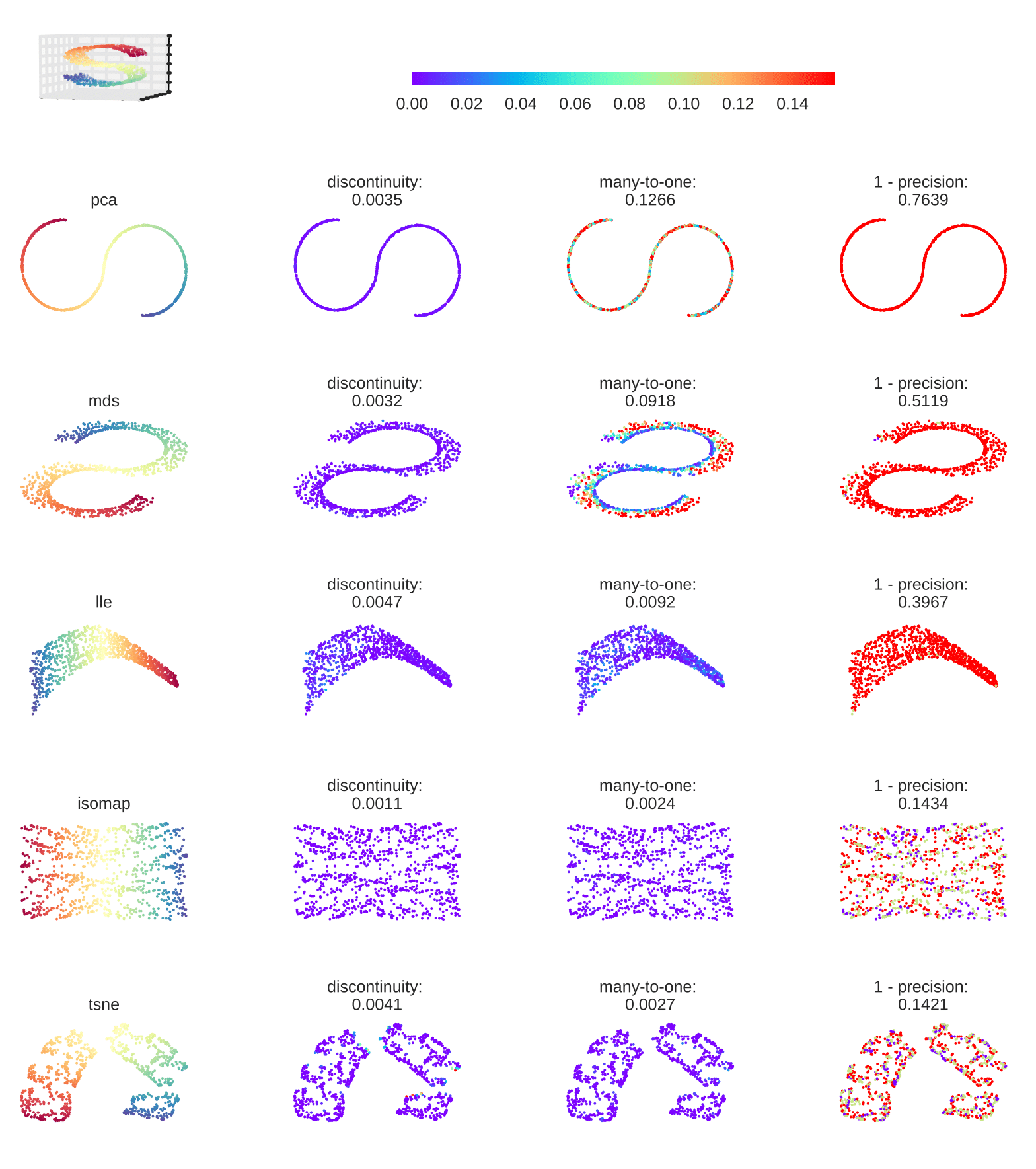}
\caption{quality of different methods on S-curve}
\label{fig:compare_methods_s_curve}
\end{figure*}

\begin{figure*}[t]
\centering
\includegraphics[width=\linewidth]{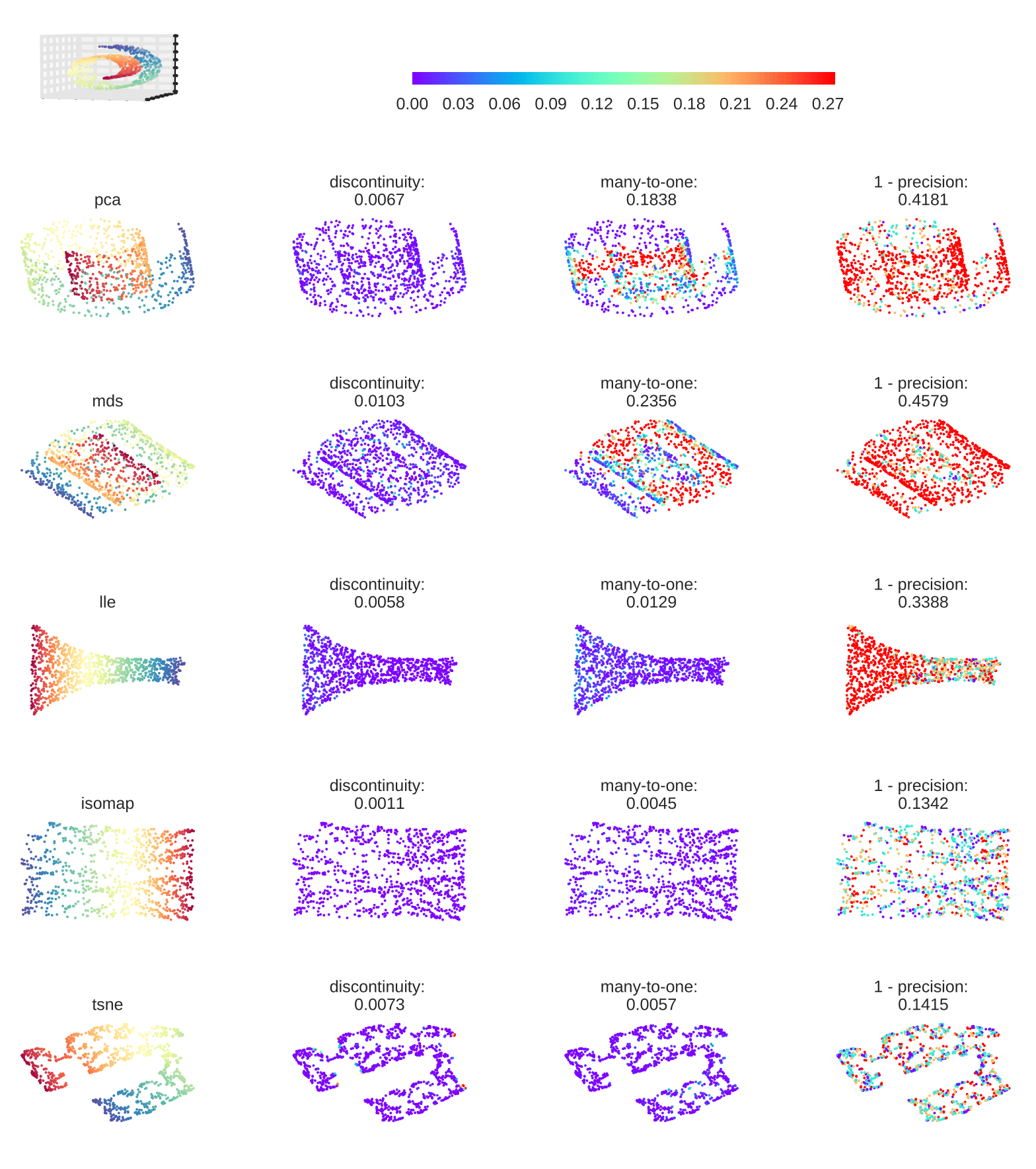}
\caption{quality of different methods on Swiss roll}
\label{fig:compare_methods_swiss_roll}
\end{figure*}

\begin{figure*}[t]
\centering
\includegraphics[width=\linewidth]{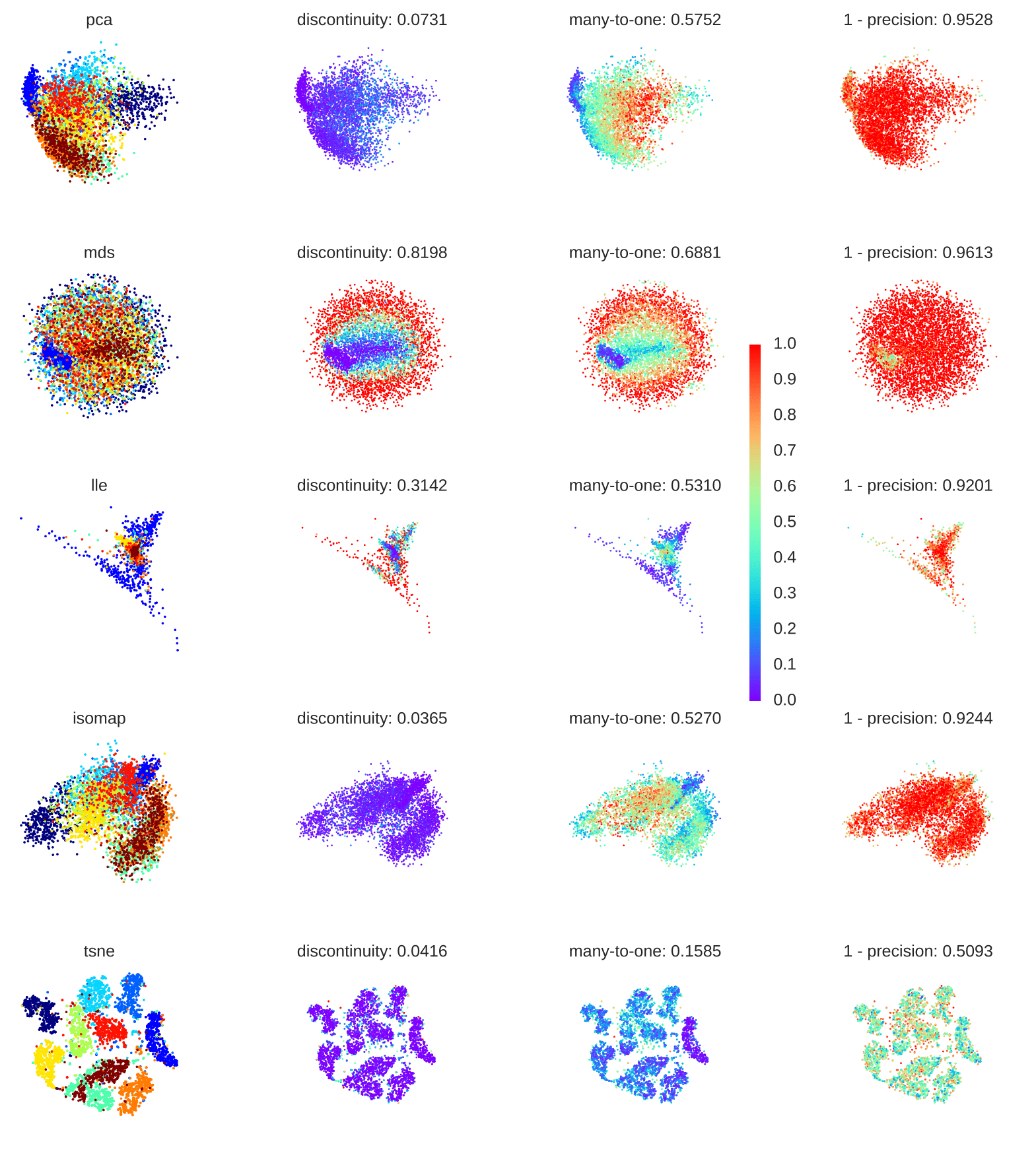}
\caption{quality of different methods on MNIST}
\label{fig:compare_methods_mnist}
\end{figure*}

\begin{figure*}[t]
\centering
\includegraphics[width=\linewidth]{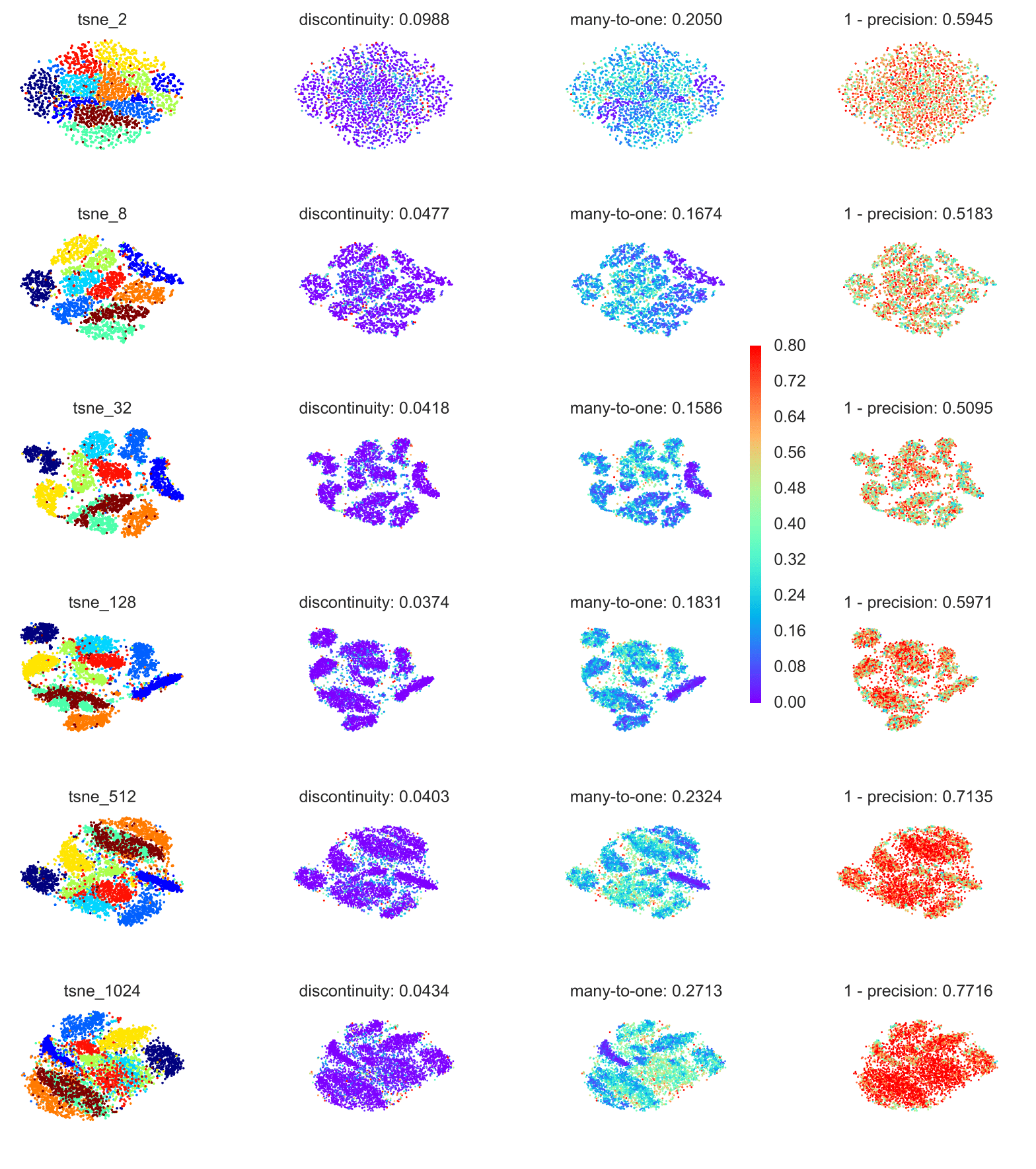}
\caption{quality of t-SNE with different perplexities on MNIST}
\label{fig:compare_perp_mnist}
\end{figure*}

In this section, 
we show preliminary results on using Wasserstein measures directly (instead of its lower bound) to choose between dimensionality reduction algorithms. 
We may interpret this as choosing between different information retrieval systems in the DR visualization context.  
Figure \ref{fig:compare_methods_s_curve} and
\ref{fig:compare_methods_swiss_roll} show the visualization results of
5 different methods on the S-curve and Swiss roll toy datasets respectively.
These include PCA, multidimensional scaling (MDS) \citeapp{borg2005modern}, locally linear embedding (LLE) \citeapp{roweis2000nonlinear}, Isomap \citeapp{tenenbaum2000global} and t-SNE \citeapp{maaten2008visualizing}.
In the results of PCA and MDS, the mappings squeeze the original data into
narrower regions in the 2D projection space. Squeezing naturally implies high degree of many-to-one.
At the same time, PCA mapping is linear, the MDS mapping in this case is close to linear,
which makes both PCA and MDS has a low discontinuity.
For S-curve and Swiss roll, LLE, Isomap and t-SNE all works well in the sense that they
successfully unwrapped the manifold. However, when local compression or
stretch happens, the Wasserstein discontinuity and many-to-one will
will increase slightly. For example, in the S-curve LLE results,
the right side of
data is compressed. Therefore it has a slightly higher many-to-one value,
while the discontinuity is still low.

Figure \ref{fig:compare_methods_mnist} shows the visualization results on MNIST digits.
As a linear map, PCA still has a relatively lower discontinuity and higher degree of many-to-one.
MDS preserve global distances, at the cost of sacrificing local distances. thus can map nearby points to far away locations, at the same time mapping far a way points together has poor local one-to-one property. So it has both high discontinuity and many-to-one on MNIST digits.
Compared with the previous toy example, LLE and Isomap both have a significant
performance drop.
Among all the methods, t-SNE still have the best local properties for MNIST digits, due to its neighborhood preservation objective.

\subsection{Preliminary experiments precision and recall (continuity v.s. injectivity) tradeoff}
\vspace{\postsubsection}
\label{sec:compare_perp}

\cref{thm:strong_precision_recall_tradeoff} suggests there is a trade-off between precision and recall, or equivalently continuity v.s. injectivity, via \cref{prp:equivalence_info_topology}. 
In this section, we attempt to illustrate this tradeoff phenomenon by altering the degree of continuity of a DR algorithm in a practical situation. 
We choose t-SNE on MNIST because: 
1) Heuristically t-SNE's perplexity parameter controls the degree of continuity: 
a higher perplexity means more neighboring data points will contract together and contraction is a continuous map (respectively, lower perplexity creates more tearing and spliting); 
2) the tradeoff may be best seen through DR algorithms that operate at the optimal tradeoff level. t-SNE has proved itself as the de facto standard for visualization in various datasets; 
3) As a practical dataset, MNIST visualization is still simple enough that humans can inspect and diagnose.

\cref{fig:compare_perp_mnist} shows visualizations with different t-SNE perplexity parameter. 
Each row is indexed by a different perplexity (perp $= 2, 8, \cdots, 1024$), with the intuition that the t-SNE DR map becomes more continuous with larger perplexity. 
The middle two columns are colored by our Wasserstein measures, with lower discontinuity costs representing more continuous maps (higher recall) and lower many-to-one costs indicating more injective (higher precision) maps.
The precision and recall tradeoff can be observed in the perplexity ranging from 32 to 128. As t-SNE becomes more continuous, it is also less injective. 
In this range, inspection by eye suggests t-SNE gives good visualizations.

Outside of the range of $(32, 128)$ both precision and recall become worse. 
We interpret this as t-SNE is giving relatively bad visualizations for these choices of parameter, as can be inspected by eye. 
For example, when perplexity $= 512$ and $1024$, t-SNE actually tends to have lower recall while precision worsens. 
When perplexity $< 32$, it is less clear whether it is due to: 
1) there is a tradeoff but our measures do not capture it. 
Our neighborhood size is also $30$ (comparable or bigger than the perplexity), so the scale may not be fair (on the other hand, choosing neighborhood size smaller than $30$ may introduce very high variance in the estimation); 
2) t-SNE actually performances worse on both continuity and injectivity, reflected by our measures. 
By inspection on the visualization, we believe it is probably because t-SNE isn't performing at any optimal level, so tradeoff cannot be seen.

\let\noopsort\undefined
\let\printfirst\undefined
\let\singleletter\undefined
\let\switchargs\undefined

\bibliographystyleapp{plainnat}
\bibliographyapp{nldr}

\end{document}